\def\eqdef{\stackrel{\text{def}}{=}}
\newcommand{\printfnsymbol}[1]{%
  \textsuperscript{\@fnsymbol{#1}}%
}
\def\Regret{\mathrm{R}}
\newcommand{\Ocal}{\mathcal{O}}
\newcommand{\Olog}{\tilde{\mathcal{O}}}
\DeclareRobustCommand{\eg}{e.g.,\@\xspace}
\newtheorem{proposition}{Proposition}
\newtheorem{lemma}[proposition]{Lemma}
\newtheorem{theorem}[proposition]{Theorem}
\DeclarePairedDelimiter\br{(}{)}
\DeclarePairedDelimiter\brs{[}{]}
\DeclarePairedDelimiter\brc{\{}{\}}
\DeclarePairedDelimiter\abs{\lvert}{\rvert}
\DeclarePairedDelimiter\norm{\lVert}{\rVert}
\DeclarePairedDelimiter\inner{\langle}{\rangle}
\newcommand\Ex[1]{\mathbb{E}\brs*{#1}}
\newcommand\trace[1]{\text{trace}\br{#1}}
\renewcommand{\det}[1]{\text{det}\br{#1}}
\newcommand{\indicator}[1]{\mathbbm{1}{\br*{#1}}}
\newcommand{\E}{\mathbb{E}}
\newcommand{\R}{\mathbb{R}}
\newtheorem{theorem-rst}[proposition]{Theorem}
\newtheorem{lemma-rst}[proposition]{Lemma}
\newtheorem{proposition-rst}[proposition]{Proposition}
\newtheorem{assumption-rst}[proposition]{Assumption}
\newtheorem{claim-rst}[proposition]{Claim}
\newtheorem{corollary-rst}[proposition]{Corollary}
\definecolor{darkgreen}{rgb}{0,0.5,0}
\newcommand{\red}[1]{\textcolor{red}{#1}}
\newcommand{\orange}[1]{\textcolor{orange}{#1}}
\newcommand{\green}[1]{\textcolor{darkgreen}{#1}}
\newenvironment{proofsketch}{%
  \proof}{\endproof}
\title{Reinforcement Learning with Trajectory Feedback}
\author{Yonathan Efroni\thanks{Equal contribution, alphabetical order}\textsuperscript{\rm 1,2},  Nadav Merlis\printfnsymbol{1}\textsuperscript{\rm 1} and Shie Mannor\textsuperscript{\rm 1,3}\\
}
\begin{document}
\maketitle

\begin{abstract}
The standard feedback model of reinforcement learning requires revealing the reward of every visited state-action pair. However, in practice, it is often the case that such frequent feedback is not available. In this work, we take a first step towards relaxing this assumption and require a weaker form of feedback, which we refer to as \emph{trajectory feedback}. Instead of observing the reward obtained after every action, we assume we only receive a score that represents the quality of the whole trajectory observed by the agent, namely, the sum of all rewards obtained over this trajectory. We extend reinforcement learning algorithms to this setting, based on least-squares estimation of the unknown reward, for both the known and unknown transition model cases, and study the performance of these algorithms by analyzing their regret. For cases where the transition model is unknown, we offer a hybrid optimistic-Thompson Sampling approach that results in a tractable algorithm.
\end{abstract}

\section{Introduction}

The field of Reinforcement Learning (RL) tackles the problem of learning how to act optimally in an unknown dynamical environment. Recently, RL witnessed remarkable empirical success (e.g.,~\citealp{mnih2015human,levine2016end,silver2017mastering}). However, there are still some matters that hinder its use in practice. One of them, we claim, is the type of feedback an RL agent is assumed to observe. Specifically, in the standard RL formulation, an agent acts in an unknown environment and receives feedback on its actions in the form of a state-action dependent reward signal. Although such an interaction model seems undemanding at first sight, in many interesting problems, such reward feedback cannot be realized. In practice, and specifically in non-simulated environments, it is hardly ever the case that an agent can query a state-action reward function from every visited state-action pair since such a query can be very costly. For example. consider the following problems:

\vspace{0.1cm}

\emph{(i) Consider the challenge of autonomous car driving. Would we want to deploy an RL algorithm for this setting, we would need a reward signal from every visited state-action pair. Obtaining such data is expected to be very costly since it requires scoring \emph{each} state-action pair with a real number. For example, if a human is involved in providing the feedback, (a) he or she might refuse to supply with such feedback due to the Sisyphean nature of this task, or (b) supplying with such feedback might take too much time for the needs of the algorithm designer.} 

\vspace{0.1cm}

\emph{ (ii) Consider a multi-stage UX interface that we want to optimize using an RL algorithm. To do so, in the standard RL setting, we would need a score for every visited state-action pair. However, as we ask the users for more information on the quality of different state-action pairs, the users' opinions might change due to the information they need to supply. For example, as we ask for more information, the user might be prone to be more negative about the quality of the interface as the user becomes less patient to provide the requested feedback. Thus, we would like to keep the number of queries from the user to be minimal.
}
\vspace{0.1cm}

Rather than circumventing this problem by deploying heuristics (e.g., by hand-engineering a reward signal), in this work, we relax the feedback mechanism to a weaker and more practical one. We then study RL algorithms in the presence of this weaker form of feedback mechanism, a setting which we refer to as \emph{RL with trajectory feedback}. In RL with trajectory feedback, the agent does not have access to a per state-action reward function. Instead, it receives the sum of rewards on the performed trajectory as well as the identity of visited state-action pairs in the trajectory. E.g., for autonomous car driving, we only require feedback on the score of a trajectory, instead of the score of each individual state-action pair. Indeed, this form of feedback is much weaker than the standard RL feedback and is expected to be more common in practical scenarios. 

We start by defining our setting and specifying the interaction model of RL with trajectory feedback (Section~\ref{sec: setup}). In Section~\ref{sec: ls for reward estiamtion}, we introduce a natural least-squares estimator with which the true reward function can be learned based on the trajectory feedback. Building on the least-squares estimator, we study algorithms that explicitly trade-off exploration and exploitation. We start by considering the case where the model is known while the reward function needs to be learned. By generalizing the analysis of standard linear bandit algorithms (OFUL~\citep{abbasi2011improved} and Thompson-Sampling (TS) for linear bandits~\citep{agrawal2013thompson}), we establish performance guarantees for this setup in sections~\ref{sec: oful with known model} and~\ref{sec: ts with known model}. Although the OFUL-based algorithm gives better performance than the TS-based algorithm, its update rule is computationally intractable, as it requires solving a convex maximization problem. Thus, in Section~\ref{sec: ts with unknown model} we generalize the TS-based algorithm to the case where both the reward and the transition model are unknown. To this end, we learn the reward by a TS approach and learn the transition model by an optimistic approach. The combination of the two approaches yields a computationally tractable algorithm, which requires solving an empirical Markov Decision Process (MDP) in each round. For all algorithms, we establish regret guarantees that scale as $\sqrt{K}$, where $K$ is the number of episodes. Finally, in Section~\ref{sec:rarely-switching ucbvi-ts}, we identify the most computationally demanding stage in the algorithm and suggest a variant to the algorithm that rarely performs this stage. Notably, we show that the effect of this modification on the regret is minor. A summary of our results can be found in Table~\ref{tab:results summary}.

\begin{table*}
\begin{center}
\begin{tabular}{|c|c|c|c|c|}\hline \rowcolor{lightgray}
 Result & Exploration& Model Learning & Time Complexity & Regret \\ \hline 
  Theorem~\ref{theorem: OFUL for RL with per trajectory feedback}& OFUL & \red{X}& \red{Computationally-Hard}& {\small$\Olog\br*{SAH \sqrt{ K}}$}\\ 
  \hline
 Theorem~\ref{theorem: performance ts known model}& TS & \red{X}& {\small\orange{$\Ocal\br*{(SA)^3+S^2A(H+A)}$}}& {\small$\Olog\br*{(SA)^{3/2}H\sqrt{K}}$}\\  
 \hline
 Theorem~\ref{theorem: performance ucbvi ts}& TS & \green{V}& {\small\orange{$\Ocal\br*{(SA)^3+S^2A(H+A)}$}} &  {\small$\Olog\br*{S^2A^{3/2}H^{{3/2}}\sqrt{K}}$ }
 \\
 \hline
 Theorem~\ref{theorem: performance switching ucbvi ts}& TS & \green{V}& {\small\green{$\Ocal\br*{S^2A(H+A) + \frac{(SA)^4}{\log(1+C)}\frac{\log\frac{KH}{SA}}{K}}$}}& {\small$\Olog\br*{(SA)^{3/2}H\sqrt{K}\br*{\sqrt{SH} + \sqrt{C}}}$}\\
\hline
\end{tabular}
\end{center}
\caption{$S$ and $A$ are the state and action sizes, respectively, and $H$ is the horizon. $K$ is the number of episode and $C>0$ is a parameter of Algorithm~\ref{alg: ucbvi-ts doubling}. Exploration - whether the reward exploration is optimistic (`OFUL') or uses posterior sampling (`TS'). Model Learning - whether the algorithm knows the model (X) or has to learn it (V). Time complexity - per-episode average time complexity. The hardness of the optimistic algorithm is explained at the end of Section~\ref{sec: oful with known model} and the time complexity of the TS-based algorithm is explained in Section~\ref{sec:rarely-switching ucbvi-ts}. Regret bounds ignore log-factors and constants and assume that $SA\ge H$.}
\label{tab:results summary}
\end{table*}

\section{Notations and Definitions}\label{sec: setup}

We consider finite-horizon MDPs with time-independent dynamics. A finite-horizon MDP is defined by the tuple $\mathcal{M} = \br*{\mathcal{S},\mathcal{A}, R, P, H}$, where $\mathcal{S}$ and $\mathcal{A}$ are the state and action spaces with cardinalities $S$ and $A$, respectively. The immediate reward for taking an action $a$ at state $s$ is a random variable $R(s,a)\in\brs*{0,1}$ with expectation $\Ex{ R(s,a)}=r(s,a)$. The transition kernel is $P(s'\mid s,a)$, the probability of transitioning to state $s'$ upon taking action $a$ at state $s$. $H\in \mathbb{N}$ is the {\em horizon}, i.e.,~the number of time-steps in each episode, and $K\in\mathbb{N}$ is the total number of episodes. We define $[N] \eqdef \brc*{1,\ldots,N},\;$ for all $N\in \mathbb{N}$, and use $h\in\brs*{H}$ and $k\in\brs*{K}$ to denote time-step inside an episode and the index of an episode, respectively. We also denote the initial state in episode $k\in\brs*{K}$ by $s_1^k$, which can be arbitrarily chosen.

A deterministic policy $\pi: \mathcal{S}\times[H]\rightarrow \mathcal{A}$ is a mapping from states and time-step indices to actions. We denote by $a_h \eqdef \pi(s_h,h)$, the action taken at time $h$ at state $s_h$ according to a policy $\pi$. The quality of a policy $\pi$ from state $s$ at time $h$ is measured by its value function, which is defined as
\begin{align*}
    V_h^\pi(s) \eqdef \Ex{\sum_{h'=h}^H r\br*{s_{h'},\pi(s_{h'},h')}\mid s_h=s,\pi},
\end{align*}
where the expectation is over the environment randomness. An optimal policy maximizes this value for all states $s$ and time-steps $h$ simultaneously, and the corresponding optimal value is denoted by $V_h^*(s) \eqdef \max_{\pi} V_h^\pi(s),\;$ for all $h\in [H]$. We can also reformulate the optimization problem using the \emph{occupancy measure}~\citep[\eg][]{puterman1994markov,altman1999constrained}.
The occupancy measure of a policy $\pi$ is defined as the distribution over state-action pairs generated by executing the policy $\pi$ in the finite-horizon MDP $\mathcal{M}$ with a transition kernel $p$~\citep[\eg][]{Zimin2013online}: 
\begin{align*}
    q^\pi_h(s,a;p) &\eqdef \Ex{\indicator{s_h=s,a_h=a}\mid s_1= s_1,p,\pi} \\
    &= \Pr\brc*{s_h=s,a_h=a\mid s_1=s_1,p, \pi}
\end{align*}
For brevity, we define the matrix notation $q^\pi(p)\in \R^{HSA}$ where its $(s,a,h)$ element is given by $q^\pi_h(s,a;p)$. Furthermore, let the average occupancy measure be $d_\pi(p)\in \R^{SA}$ such that $d_\pi(s,a;p)\eqdef \sum_{h=1}^H q_h^\pi(s,a;p).$ For ease of notation, when working with the transition kernel of the true model $p=P$, we write $q^\pi=q^\pi(P)$ and $d_\pi=d_\pi(P)$.

This definition implies the following relation:
\begin{align}\label{eq:prelim_v_to_sa_dist}
    V_1^\pi(s_1;p,r) &=\sum_{h\in[H]} \br*{ \sum_{s_h,a_h} r(s_h, a_h)q^\pi_h(s_h,a_h;p)}\nonumber\\
    &= \sum_{s,a} d_\pi(s,a;p)r(s,a)  = d_\pi(p)^T r,
\end{align}
where $V_1^\pi(s_1;p,r)$ is the value of an MDP whose reward function is $r$ and its transition kernel is $p$.

\paragraph{Interaction Model of Reinforcement Learning with Trajectory Feedback.} We now define the interaction model of RL agents that receive trajectory feedback, the model that we analyze in this work. We consider an agent that repeatedly interacts with an MDP in a sequence of episodes $[K]$. The performance of the agent is measured by its \textit{regret}, defined as $\Regret(K)\eqdef \sum_{k=1}^K \br*{V_1^*(s_1^k) - V_1^{\pi_k}(s_1^k)}$. We denote by $s_h^k$ and $a_h^k$ for the state and the action taken at the $h^{th}$ time-step of the $k^{th}$ episode. At the end of each episode $k\in\brs*{K}$, the agent only observes the cumulative reward experienced while following its policy $\pi_k$ and the identity of the visited state-action pairs, i.e.,
\begin{align}
    \hat{V}_k(s_1^k) = \sum_{h=1}^H R(s_h^k,a_h^k),\ \mathrm{and},\ \brc*{(s_h^k,a_h^k)}_{h=1}^{H+1}. \label{eq: trajectory feedback definition}
\end{align}
This comes in contrast to the standard RL setting, in which the agent observes the reward per visited state-action pair,  $\brc*{R(s_h^k,a_h^k)}_{h=1}^H$. Thus, RL with trajectory feedback receives more obscured feedback from the environment on the quality of its actions. Obviously, standard RL feedback allows calculating $\hat{V}_k(s_1^k)$, but one cannot generally reconstruct $\brc*{R(s_h^k,a_h^k)}_{h=1}^H$ by accessing only $\hat{V}_k(s_1^k)$. 

Next, we define the filtration $F_k$ that includes all events (states, actions, and rewards) until the end of the $k^{th}$ episode, as well as the initial state of the episode $k+1$. We denote by $T=KH$, the total number of time-steps (samples). Moreover, we denote by $n_{k}(s,a)$, the number of times that the agent has visited a state-action pair $(s,a)$, and by $\hat{X}_k$, the empirical average of a random variable $X$. Both quantities are based on experience gathered until the end of the $k^{th}$ episode and are $F_k$ measurable. 

\paragraph{Notations.}
We use $\Olog(X)$ to refer to a quantity that is upper bounded by $X$, up to poly-log factors of $S$, $A$, $T$, $K$, $H$, and $\frac{1}{\delta}$. Furthermore, the notation $\Ocal(X)$ refers to a quantity that is upper bounded by $X$ up to constant multiplicative factors. We use $X \vee Y \eqdef \max\brc*{X,Y}$ and denote $I_{m}$ as the identity matrix in dimension $m$. Similarly, we denote by $\mathbf{0}_m\in\R^m$, the vector whose components are zeros. Finally, for any positive definite matrix $M\in\R^{m\times m}$ and any vector $x\in\R^m$, we define $\norm*{x}_M=\sqrt{x^TMx}$.

\section{From Trajectory Feedback to Least-Squares Estimation}\label{sec: ls for reward estiamtion}

In this section, we examine an intuitive way for estimating the true reward function $r$, given only the cumulative rewards on each of the past trajectories and the identities of visited state-actions. Specifically, we estimate $r$ via a Least-Squares (LS) estimation. Consider past data in the form of~\eqref{eq: trajectory feedback definition}. To make the connection of the trajectory feedback to LS estimation more apparent, let us rewrite~\eqref{eq: trajectory feedback definition} as follows,
\begin{align}
    \hat{V}_k(s_1^k) = \hat{q}_{k}^T R, \label{eq: equivalent form of data}
\end{align}
where $\hat{q}_{k}\in \R^{SAH}$ is the \emph{empirical state-action visitation} vector given by $\hat{q}_{k}(s,a,h) = \indicator{s=s_h^k,a = a_h^k} \in [0,1]$, and $R\in\R^{SAH}$ is the noisy version of the true reward function, namely $R(s,a,h)=R(s_h,a_h)$. Indeed, since the identity of visited state-action pairs is given to us, we can compute $\hat{q}_{k}$ using our data. Furthermore, observe that 
\begin{align*}
    \Ex{\hat{q}_{k}^T R | \hat{q}_k} 
    &= \sum_{s,a,h} \hat{q}_{k}(s,a,h)r(s,a)\\
    &= \sum_{s,a}\br*{\sum_{h=1}^H \hat{q}_{k}(s,a,h)}r(s,a)
    \eqdef\hat{d}_k^T r,
\end{align*}
where the first equality holds since we assume the rewards are i.i.d. and drawn at the beginning of each episode. In the last inequality we defined the \emph{empirical state-action frequency} vector $\hat{d}_k\in \R^{SA}$ where $\hat{d}_k(s,a)=\sum_{h=1}^H \hat{q}_{k}(s,a,h)$, or, alternatively, 
$$\hat{d}_k(s,a)=\sum_{h=1}^H \indicator{s=s_h^k,a = a_h^k} \in [0,H].$$
This observation enables us to think of our data as noisy samples of $\hat{d}_k^T r$, from which it is natural to estimate the reward $r$ by a (regularized) LS estimator, i.e., for some $\lambda>0$,
$$\hat{r}_k\in \arg\min_r \br*{\sum_{l=1}^k (\inner{\hat{d}_l,r} - \hat{V}_{l})^2 + \lambda I_{SA}},$$
This estimator is also given by the closed form solution
\begin{align}
        \hat{r}_k = (D^T_kD_k +\lambda I_{SA})^{-1}Y_k \eqdef A_{k}^{-1} Y_k, \label{eq: ls estimator for r}
\end{align}
where $D_k\in \R^{k\times SA}$ is a matrix with $\brc*{\hat{d}_{k}^T}$ in its rows, $Y_k =\sum_{s=1}^k\hat{d}_s \hat{V}_s\in \R^{SA}$ and $A_{k}= D^T_kD_k +\lambda I_{SA}\in\R^{SA\times SA}$.

A needed property of the estimator $\hat{r}_k$ is for it to be `concentrated' around the true reward $r$. By properly defining the filtration and observing that $\hat{V}_k$ is $\sqrt{H/4}$ sub-Gaussian given $\hat{d}_k$ (as a sum of $H$ independent variables in $\brs*{0,1}$), it is easy to establish a uniform concentration bound via Theorem 2 of~\citep{abbasi2011improved} (for completeness, we provide the proof in Appendix~\ref{appendix:useful results}).
\begin{restatable}[Concentration of Reward]{proposition-rst}{rewardConcentration}\label{proposition: concentration of reward}
Let $\lambda>0$ and $A_{k}\eqdef D^T_kD_k +\lambda I_{SA}$. For any $\delta\in\br*{0,1}$, with probability greater than $1-\delta/10$ uniformly for all $k\ge0$, it holds that 
$\norm{r-\hat{r}_k}_{A_{k}} \leq  \sqrt{\frac{1}{4}SAH\log\br*{\frac{1+kH^2/\lambda}{\delta/10}}}+\sqrt{\lambda SA} \eqdef l_k. $
\end{restatable}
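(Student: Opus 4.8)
The plan is to invoke the self-normalized concentration inequality for vector-valued martingales from Theorem~2 of \citep{abbasi2011improved} directly, after setting up the problem in the right form. Recall that estimator~\eqref{eq: ls estimator for r} is exactly a regularized least-squares fit: the ``features'' are $\hat{d}_l\in\R^{SA}$, the ``responses'' are $\hat{V}_l$, and the underlying linear parameter is $r$ (since $\Ex{\hat{V}_l\mid \hat{d}_l}=\langle \hat{d}_l, r\rangle$ by the computation preceding~\eqref{eq: ls estimator for r}). So I would write $\hat{V}_l = \langle \hat{d}_l, r\rangle + \eta_l$, where $\eta_l \eqdef \hat{V}_l - \langle \hat{d}_l, r\rangle$ is the noise term.

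First I would set up the filtration carefully. Let $\F_l$ be the $\sigma$-algebra generated by all the randomness up to and including the choice of policy $\pi_{l+1}$ and the visitation vector $\hat{d}_{l+1}$, but \emph{not} the realized rewards of episode $l+1$; this is essentially the filtration $F_k$ described in the text, augmented so that $\hat{d}_{l+1}$ is $\F_l$-measurable. Then $\hat{d}_l$ is $\F_{l-1}$-measurable, and $\eta_l$ is $\F_l$-measurable. The key step is to check the conditional sub-Gaussianity of $\eta_l$ given $\F_{l-1}$: conditioned on the trajectory (hence on $\hat{d}_l$), $\hat{V}_l = \sum_{h=1}^H R(s_h^l,a_h^l)$ is a sum of $H$ independent random variables each supported in $[0,1]$, hence by Hoeffding's lemma each term is $\tfrac14$-sub-Gaussian and the sum is $\tfrac{H}{4}$-sub-Gaussian; subtracting its conditional mean $\langle\hat{d}_l,r\rangle$ leaves a centered $\sqrt{H/4}$-sub-Gaussian variable. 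This uses the stated assumption that rewards are i.i.d.\ and drawn afresh at the start of each episode, so the noise across episodes forms a martingale difference sequence with respect to $\{\F_l\}$.

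Next I would apply Theorem~2 of \citep{abbasi2011improved} with the sub-Gaussian parameter $\sigma=\sqrt{H/4}$, regularizer $\lambda$, and feature dimension $d=SA$. That theorem gives, with probability at least $1-\delta'$, uniformly for all $k\ge 0$,
\begin{align*}
\norm{r-\hat{r}_k}_{A_k} \le \sqrt{2\sigma^2\log\frac{\det{A_k}^{1/2}\det{\lambda I_{SA}}^{-1/2}}{\delta'}} + \sqrt{\lambda}\,\norm{r}_2 .
\end{align*}
Taking $\delta'=\delta/10$ matches the claimed probability. It remains to bound the two terms. For the determinant term I would use the standard potential argument: $\det{A_k}\le \brc*{\lambda + kH^2/(SA)}^{SA} \le \brc*{\lambda + kH^2}^{SA}$, since $\trace{A_k}=\lambda SA + \sum_{l\le k}\norm{\hat{d}_l}_2^2$ and each $\norm{\hat{d}_l}_2^2 \le \norm{\hat{d}_l}_1^2 \le H^2$ (the entries of $\hat{d}_l$ are nonnegative and sum to $H$); combined with $\det{\lambda I_{SA}}=\lambda^{SA}$ this yields $\log\brc*{\det{A_k}^{1/2}\det{\lambda I_{SA}}^{-1/2}} \le \tfrac{SA}{2}\log\brc*{1+kH^2/\lambda}$, and with $2\sigma^2 = H/2$ the first term becomes $\sqrt{\tfrac14 SAH\log\brc*{(1+kH^2/\lambda)/(\delta/10)}}$. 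For the second term I would bound $\norm{r}_2 \le \sqrt{SA}$ since $r(s,a)\in[0,1]$, giving $\sqrt{\lambda SA}$. Summing the two bounds gives exactly $l_k$.

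The main obstacle is not any single calculation but getting the probabilistic setup exactly right: one must ensure that $\hat{d}_l$ (which depends on the stochastic transitions of episode $l$) is measurable with respect to the $\sigma$-algebra against which $\eta_l$ is required to be conditionally sub-Gaussian, while still leaving the reward realizations of episode $l$ ``fresh'' so that $\Ex{\eta_l\mid\F_{l-1}}=0$. This is exactly why the assumption that the reward vector $R$ is resampled independently at the start of each episode is invoked — it decouples $\hat{d}_l$ (a function of transitions and policy) from the reward noise, so the conditioning is consistent. Once the filtration is pinned down, the rest is a routine instantiation of the linear-bandit self-normalized bound plus the elementary determinant-trace inequality.
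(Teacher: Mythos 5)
Your proposal is correct and follows essentially the same route as the paper: cast $\hat{V}_l=\langle\hat{d}_l,r\rangle+\eta_l$ with a filtration in which $\hat{d}_l$ is previous-measurable and the episode-$l$ rewards are fresh, note that $\eta_l$ is $\sqrt{H/4}$-sub-Gaussian as a centered sum of $H$ conditionally independent $[0,1]$ variables, bound $\norm{r}_2\le\sqrt{SA}$ and $\norm{\hat{d}_l}_2\le H$, and invoke Theorem~2 of \citet{abbasi2011improved} with $\delta'=\delta/10$. The only cosmetic difference is that you instantiate the determinant form and carry out the determinant--trace simplification yourself, whereas the paper quotes the already-simplified form of that theorem; both yield the stated $l_k$.
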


\paragraph{Relation to Linear Bandits.} Assume that the transition kernel $P$ is known while the reward $r$ is unknown. Also, recall that the set of the average occupancy measures, denoted by $\mathcal{K}(P)$, is a convex set~\citep{altman1999constrained}. Then, Equation~\eqref{eq:prelim_v_to_sa_dist} establishes that RL with trajectory feedback can be understood as an instance of linear-bandits over a convex set. I.e., it is equivalent to the problem of minimizing the regret $\Regret(K)=\sum_{k} \max_{d\in \mathcal{K}(P)} d^T r - d_{\pi_k}^Tr$, where the feedback is a noisy version of $d_{\pi_k}^Tr$ since $\Ex{\hat{V}_k\mid F_{k-1}} = d_{\pi_k}^Tr$. Under this formulation, choosing a policy $\pi_k$ is equivalent to choosing an `action' from the convex set $d_{\pi_k}\in\mathcal{K}(P)$.

However, we make use of $\hat{d}_k$, and not the actual `action' that was taken, $d_{\pi_k}$. Importantly, this view allows us to generalize algorithms to the case where the transition model $P$ is unknown (as we do in Section~\ref{sec: ucbvi-ts per trajectory RL}). When the transition model is not known and estimated via $\bar{P}$, there is an \emph{error in identifying the action}, in the view of linear bandits, since $d_\pi \neq d_\pi(\bar P)$. This error, had we used the `na\"ive' linear bandit approach of choosing contexts from $\mathcal{K}(\bar{P})$, would result in errors in the matrix $A_k$. Since our estimator uses the empirical average state-action frequency, $\hat{d}_k$, the fact the model is unknown does not distort the reward estimation. 

\paragraph{Policy Gradient Approach for RL with Trajectory Feedback.} Another natural algorithmic approach to the setting of RL with trajectory feedback is to use policy search. That is, instead of estimating the reward function via least-square and follow a model-based approach, one can directly optimize over the policy class. By the log-derivative trick (as in the REINFORCE algorithm~\citep{williams1992simple}):
{\small
\begin{align*}
    \nabla_\pi &V^\pi(s) \\
    &= \mathbb{E}\brs*{\br*{\sum_{h=1}^H\nabla_\pi \log \pi(a_h|s_h)}\br*{\sum_{h=1}^H r(s_h,a_h)}|s_1=s,\pi}.
\end{align*}}
Thus, if we are supplied with the cumulative reward of a trajectory we can estimate the derivative $\nabla_\pi V^\pi(s)$ and use stochastic gradient ascent algorithm. However, this approach fails in cases the exploration is challenging~\cite{agarwal2020optimality}, i.e., the sample complexity can increase exponentially with $H$. We conjecture that by combining exploration bonus the REINFORCE algorithm can provably perform well, with polynomial sample complexity. We leave such an extension as an interesting future research direction.

\section{OFUL for RL with Trajectory Feedback and Known Model} \label{sec: oful with known model}

\begin{algorithm}[t]
\caption{OFUL for RL with Trajectory Feedback and Known Model} \label{alg: oful known model}
\begin{algorithmic}
\STATE {\bf Require:} $\delta\in(0,1)$, $\lambda=H$,\\ $\qquad\qquad l_k= \sqrt{\frac{1}{4}SAH\log\br*{\frac{1+kH^2/\lambda}{\delta/10}}}+\sqrt{\lambda SA}$
\STATE {\bf Initialize:} $A_0=\lambda I_{SA}$, $Y_0=\mathbf{0}_{SA}$ 
\FOR{$k=1,...,K$}
\STATE Calculate $\hat{r}_{k-1}$ via LS estimation~\eqref{eq: ls estimator for r}
\STATE Solve $\pi_k\in \arg\max_{\pi}\br*{ d_\pi^T\hat{r}_{k-1}+ l_{k-1}\norm{d_{\pi}}_{A_{k-1}^{-1}}}$ 
\STATE Play $\pi_k$, observe $\hat{V}_k$ and $\brc*{(s_h^k,a_h^k)}_{h=1}^H$
\STATE Update $A_{k}= A_{k-1} + \hat{d}_k \hat{d}_k^T $ and $Y_{k} = Y_{k-1} + \hat{d}_k \hat{V}_k$.
\ENDFOR
\end{algorithmic}
\end{algorithm}

Given the concentration of the estimated reward in Proposition~\ref{proposition: concentration of reward}, it is natural to follow the optimism in the face of uncertainty approach, as used in the OFUL algorithm~\citep{abbasi2011improved} for linear bandits. We adapt this approach to RL with trajectory feedback, as depicted in Algorithm~\ref{alg: oful known model}; on each episode, we find a policy that maximizes the estimated value $V^\pi(s_1; P, \hat{r}_{k-1})=d_\pi^T\hat{r}_{k-1}$, and an additional `confidence' term $l_{k-1}\norm{d_{\pi}}_{A_{k-1}^{-1}}$ that properly encourages the policy $\pi_k$ to be exploratory.

The analysis of OFUL is based upon two key ingredients, (i) a concentration result, and (ii) an elliptical potential lemma. For the setting of RL with trajectory feedback, the concentration result can be established with similar tools to~\citeauthor{abbasi2011improved} (see Proposition~\ref{proposition: concentration of reward}). However, (ii), the elliptical potential lemma, should be re-derived. The usual elliptical potential lemma~\cite{abbasi2011improved} states that for $x_k\in\R^m$,
{\small
\begin{align*}
    \sum_{k=1}^K \norm{x_k}_{A_{k-1}}\leq \Olog\br*{\norm{x}\sqrt{mK/\lambda}},
\end{align*}}
where $A_{k}=A_{k-1} + x_kx_k^T,A_0=\lambda I_m$ and $\norm{x_k}\leq \norm{x}$. However, for RL with trajectory feedback, the term we wish to bound is $\sum_{k=1}^K \norm{d_{\pi_k}}_{A_{k-1}},$ where $A_{k}=A_{k-1} + \hat{d}_k\hat{d}_k^T$, $A_0=\lambda I_{SA}$. Thus, since $d_{\pi_k}\neq \hat{d}_k$, we cannot apply the usual elliptical potential lemma. Luckily, it is possible to derive a variation of the lemma, suited for our needs, by recognizing that $d_{\pi_k} = \Ex{ \hat{d}_k| F_{k-1}}$ where the equality holds component-wise. Based on this observation, the next lemma -- central to the analysis of all algorithms in this work -- can be established (in Appendix~\ref{supp: expected elliptical potential lemma} we prove a slightly more general statement that will be useful in next sections as well).

\begin{restatable}[Expected Elliptical Potential Lemma]{lemma-rst}{ExpectedPotentialLemmaPaper} \label{lemma: elliptical potential lemma with side information paper}
    Let $\lambda>0$. Then, uniformly for all $K>0$, with probability greater than $1-\delta$, it holds that
    {\small
    \begin{align*}
        &\sum_{k=0}^{K}\norm{d_{\pi_k}}_{A_{k-1}^{-1}} \leq  \Ocal\br*{\sqrt{\frac{H^2}{\lambda}KSA\log\br*{\lambda + \frac{KH^2}{SA}}}}.
    \end{align*}
    }
\end{restatable}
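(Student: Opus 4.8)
The key observation, already flagged in the text, is that $d_{\pi_k} = \Ex{\hat d_k \mid F_{k-1}}$ component-wise, while the matrix $A_k$ is built from the empirical vectors $\hat d_k$. The plan is to split the sum into a "sampled" part, governed by the usual elliptical potential lemma applied to $\hat d_k$, and a "martingale error" part that captures the gap between $d_{\pi_k}$ and $\hat d_k$.

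First I would bound a single term. Writing $A_{k-1}^{-1/2} d_{\pi_k} = A_{k-1}^{-1/2}\hat d_k + A_{k-1}^{-1/2}(d_{\pi_k}-\hat d_k)$ and using the triangle inequality for $\norm{\cdot}_{A_{k-1}^{-1}}$, we get
\begin{align*}
\sum_{k=1}^{K}\norm{d_{\pi_k}}_{A_{k-1}^{-1}} \le \sum_{k=1}^{K}\norm{\hat d_k}_{A_{k-1}^{-1}} + \sum_{k=1}^{K}\norm{d_{\pi_k}-\hat d_k}_{A_{k-1}^{-1}}.
\end{align*}
The first sum is handled directly by the standard elliptical potential lemma: since $A_k = A_{k-1} + \hat d_k\hat d_k^T$ with $A_0 = \lambda I_{SA}$ and $\norm{\hat d_k}_2 \le \norm{\hat d_k}_1 = H$ (each $\hat d_k$ sums $H$ indicators), one obtains $\sum_{k=1}^K \norm{\hat d_k}_{A_{k-1}^{-1}} \le \Ocal\!\big(\sqrt{(H^2/\lambda)\,KSA\log(\lambda + KH^2/(SA))}\big)$, after the usual $\min\{\cdot,1\}$ truncation and Cauchy–Schwarz steps; this already gives the target order.

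The second sum is the real work. Since $\Ex{d_{\pi_k}-\hat d_k \mid F_{k-1}} = 0$ and $A_{k-1}$ is $F_{k-1}$-measurable, the summands $\norm{d_{\pi_k}-\hat d_k}_{A_{k-1}^{-1}}$ are not themselves a martingale, but $Z_k \eqdef \norm{d_{\pi_k}-\hat d_k}_{A_{k-1}^{-1}} - \Ex{\norm{d_{\pi_k}-\hat d_k}_{A_{k-1}^{-1}}\mid F_{k-1}}$ forms a bounded martingale-difference sequence, and the conditional expectation is itself controlled: by Jensen, $\Ex{\norm{d_{\pi_k}-\hat d_k}_{A_{k-1}^{-1}}\mid F_{k-1}} \le \sqrt{\Ex{\norm{d_{\pi_k}-\hat d_k}_{A_{k-1}^{-1}}^2 \mid F_{k-1}}} = \sqrt{\operatorname{trace}\!\big(A_{k-1}^{-1}\operatorname{Cov}(\hat d_k\mid F_{k-1})\big)}$. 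I would then bound the conditional covariance entrywise (each coordinate of $\hat d_k$ has variance at most $H$, or more carefully $\operatorname{Cov}(\hat d_k\mid F_{k-1}) \preceq H\cdot\operatorname{diag}(d_{\pi_k}) $ type bounds) to relate $\operatorname{trace}(A_{k-1}^{-1}\operatorname{Cov})$ back to $\norm{d_{\pi_k}}_{A_{k-1}^{-1}}^2$ up to an $H$ factor, and apply Cauchy–Schwarz over $k$ together with the first-sum bound — or, more cleanly, absorb this into a self-bounding recursion. The martingale part $\sum_k Z_k$ is dispatched by Azuma–Hoeffding (each $|Z_k|$ is bounded since $\norm{d_{\pi_k}-\hat d_k}_\infty \le H$ and $A_{k-1}^{-1}\preceq \lambda^{-1}I$), contributing a lower-order $\Olog(\sqrt{K})$ term with the claimed $1-\delta$ probability, and this is where the high-probability statement and the log-factors enter.

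The main obstacle is the second sum: making the self-referential bound on $\sum_k \Ex{\norm{d_{\pi_k}-\hat d_k}_{A_{k-1}^{-1}}\mid F_{k-1}}$ close without circularity — i.e., showing the conditional variance term is dominated (up to the $H$ factor and constants) by the same elliptical potential quantity we are trying to bound, so that the inequality can be solved for $\sum_k \norm{d_{\pi_k}}_{A_{k-1}^{-1}}$. I expect this to require the entrywise covariance bound on $\hat d_k$ and a careful comparison of $\operatorname{trace}(A_{k-1}^{-1}\operatorname{Cov}(\hat d_k\mid F_{k-1}))$ with $\norm{\hat d_k}_{A_{k-1}^{-1}}^2$ in expectation, after which the standard elliptical potential lemma closes everything. (The slightly more general statement alluded to in the text — presumably allowing a different conditioning or an extra weighting — would be proved by the same decomposition with the weights carried through.)
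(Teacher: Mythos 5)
Your first sum is exactly the term the paper also controls with the standard elliptical potential lemma, but the detour through $\sum_k\norm{d_{\pi_k}-\hat d_k}_{A_{k-1}^{-1}}$ creates a gap that your sketch does not close. The conditional expectation $\Ex{\norm{d_{\pi_k}-\hat d_k}_{A_{k-1}^{-1}}\mid F_{k-1}}$ is \emph{not} a lower-order correction: $\hat d_k$ is a single-trajectory sample of the occupancy measure, so its fluctuation around $d_{\pi_k}$ is generically of the same order as the vector itself (e.g.\ if a state-action pair is occupied for all $H$ steps with probability $1/2$, the deviation in that coordinate is of order $H$, equal to its mean). Thus the second sum is comparable to the first, and the route you propose for it does not reach $\sqrt{K}$: bounding each term by $\sqrt{\trace{A_{k-1}^{-1}\mathrm{Cov}(\hat d_k\mid F_{k-1})}}\le\sqrt{\Ex{\norm{\hat d_k}_{A_{k-1}^{-1}}^2\mid F_{k-1}}}$ and then applying Cauchy--Schwarz over $k$ plus a martingale comparison of the squared norms with the elliptical potential bound yields a term of order $\frac{H}{\sqrt{\lambda}}K^{3/4}$, which exceeds the claimed $\sqrt{K}$ rate. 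The alternative ``self-bounding recursion'' also does not close as stated: bounding $\Ex{\norm{d_{\pi_k}-\hat d_k}_{A_{k-1}^{-1}}\mid F_{k-1}}\le\norm{d_{\pi_k}}_{A_{k-1}^{-1}}+\Ex{\norm{\hat d_k}_{A_{k-1}^{-1}}\mid F_{k-1}}$ returns the target quantity with coefficient one, so the inequality cannot be solved for it.

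The missing idea is to apply convexity \emph{before} splitting rather than after. Since $d_{\pi_k}=\Ex{\hat d_k\mid F_{k-1}}$ componentwise and $\norm{\cdot}_{A_{k-1}^{-1}}$ is convex with $A_{k-1}$ being $F_{k-1}$-measurable, Jensen gives directly $\norm{d_{\pi_k}}_{A_{k-1}^{-1}}\le\Ex{\norm{\hat d_k}_{A_{k-1}^{-1}}\mid F_{k-1}}$, so the deviation term never appears. This is the paper's proof: add and subtract the realized $\norm{\hat d_k}_{A_{k-1}^{-1}}$, bound the resulting martingale-difference sum (each increment bounded by $2H/\sqrt{\lambda}$) via Azuma--Hoeffding with a union bound over $K$, contributing $\Ocal\br*{\sqrt{\frac{H^2}{\lambda}K\log\frac{K}{\delta}}}$, and bound $\sum_k\norm{\hat d_k}_{A_{k-1}^{-1}}$ by the standard elliptical potential lemma exactly as in your first sum. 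If you replace your covariance-trace argument by this one-line Jensen step, your decomposition collapses to the paper's argument and the claimed bound follows.
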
 

    \begin{proofsketch}
    
        Applying Jensen's inequality (using the convexity of norms), we get
        {\small
       \begin{align*}
            &\norm{d_{\pi_k}}_{A_{k-1}^{-1}} = \norm*{\Ex{\hat{d}_{k} | F_{k-1}}}_{A_{k-1}^{-1}}
            \leq \Ex{\norm{\hat{d}_{k} }_{A_{k-1}^{-1}}\vert F_{k-1}}
        \end{align*}
        }
       Therefore, we can write
       {\small
        \begin{align*}
            &\sum_{k=0}^{K} \norm{d_{\pi_k}}_{A_{k-1}^{-1}} \leq \sum_{k=1}^{K}\Ex{ \norm{\hat{d}_{k} }_{A_{k-1}^{-1}}|  | F_{k-1}} \\
            &=\underbrace{\sum_{k=1}^{K}\br*{\Ex{ \norm{\hat{d}_{k} }_{A_{k-1}^{-1}}|  | F_{k-1}} - \norm{\hat{d}_{k} }_{A_{k-1}^{-1}}}}_{(a)} + \underbrace{\sum_{k=1}^{K} \norm{\hat{d}_{k} }_{A_{k-1}^{-1}}}_{(b)}\;,
        \end{align*}
        }
        where in the last relation, we added and subtracted the random variable $\norm{\hat{d}_{k} }_{A_{k-1}^{-1}}$. It is evident that $(a)$ is a bounded martingale difference sequence and, thus, can be bounded with probability $1-\delta/2$ by
        {\small
        \begin{align*}
        (a) \leq   4\sqrt{\frac{H^2}{\lambda}K\log\br*{\frac{2K}{\delta}}}.
        \end{align*}
        }
        Term $(b)$ can be bounded by applying the usual elliptical potential \cite{abbasi2011improved}) by
        {\small
        \begin{align*}
            (b) \leq \sqrt{\frac{H^2}{\lambda}} \sqrt{2KSA\log\br*{\lambda + \frac{KH^2}{SA}}}.
        \end{align*}
        }
        Combining the bounds on $(a)$, $(b)$ concludes the proof.
    \end{proofsketch}

Based on the concentration of the estimated reward $\hat{r}_k$ around the true reward $r$ (Proposition~\ref{proposition: concentration of reward}) and the expected elliptical potential lemma (Lemma~\ref{lemma: elliptical potential lemma with side information paper}), the following performance guarantee of Algorithm~\ref{alg: oful known model} is established (see Appendix~\ref{supp: OFUL RL per trajectory feedback} for the full proof).

\begin{restatable}[OFUL for RL with Trajectory Feedback and Known Model]{theorem-rst}{OFULrlPer}\label{theorem: OFUL for RL with per trajectory feedback}
For any $\delta\in (0,1)$, it holds with probability greater than $1-\delta$ that for all $K>0$,
{\small
$$
\Regret(K)\leq \Ocal\br*{SAH \sqrt{ K} \log\br*{\frac{KH}{\delta}}}.
$$
}
\end{restatable}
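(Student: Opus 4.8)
The plan is to follow the standard OFUL-style regret decomposition, but using the two tailored ingredients the paper has already set up: the reward concentration bound (Proposition~\ref{proposition: concentration of reward}) and the expected elliptical potential lemma (Lemma~\ref{lemma: elliptical potential lemma with side information paper}). First, condition on the high-probability event from Proposition~\ref{proposition: concentration of reward}, namely that $\norm{r-\hat{r}_{k-1}}_{A_{k-1}}\le l_{k-1}$ for all $k$. On this event, for any policy $\pi$ we have $\abs{d_\pi^T r - d_\pi^T\hat{r}_{k-1}} \le \norm{d_\pi}_{A_{k-1}^{-1}}\norm{r-\hat{r}_{k-1}}_{A_{k-1}} \le l_{k-1}\norm{d_\pi}_{A_{k-1}^{-1}}$ by Cauchy--Schwarz in the $A_{k-1}$ inner product. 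This gives the usual two-sided bound: the optimistic value $d_\pi^T\hat{r}_{k-1} + l_{k-1}\norm{d_\pi}_{A_{k-1}^{-1}}$ upper-bounds the true value $d_\pi^T r = V_1^\pi(s_1^k)$, and in particular, evaluating at $\pi^*$, the chosen policy $\pi_k$ (which maximizes the optimistic objective) satisfies $V_1^*(s_1^k) \le d_{\pi_k}^T\hat{r}_{k-1} + l_{k-1}\norm{d_{\pi_k}}_{A_{k-1}^{-1}}$.

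Next, bound the per-episode regret: $V_1^*(s_1^k) - V_1^{\pi_k}(s_1^k) \le d_{\pi_k}^T\hat{r}_{k-1} + l_{k-1}\norm{d_{\pi_k}}_{A_{k-1}^{-1}} - d_{\pi_k}^T r \le 2 l_{k-1}\norm{d_{\pi_k}}_{A_{k-1}^{-1}}$, again using the concentration event for the second inequality. Summing over $k\in[K]$ gives $\Regret(K) \le 2\sum_{k=1}^K l_{k-1}\norm{d_{\pi_k}}_{A_{k-1}^{-1}}$. Since $l_k$ is monotone increasing in $k$, bound $l_{k-1}\le l_K = \Olog(\sqrt{SAH} + \sqrt{\lambda SA})$; with the choice $\lambda = H$ this is $\Olog(\sqrt{SAH})$. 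Then pull $l_K$ out of the sum and apply Lemma~\ref{lemma: elliptical potential lemma with side information paper} to $\sum_{k}\norm{d_{\pi_k}}_{A_{k-1}^{-1}}$, which is $\Ocal(\sqrt{(H^2/\lambda) K S A \log(\lambda + KH^2/(SA))}) = \Olog(\sqrt{HKSA})$ again using $\lambda=H$. Multiplying the two pieces gives $\Regret(K) \le \Olog(\sqrt{SAH}\cdot\sqrt{HKSA}) = \Olog(SAH\sqrt{K})$, matching the claimed bound. Finally, take a union bound over the two failure events (reward concentration, probability $\delta/10$; the martingale term inside the elliptical potential lemma, probability $\delta$ as stated — or a rescaled version thereof) so the whole argument holds with probability at least $1-\delta$, and absorb all logarithmic factors into the $\Ocal(\cdot\,\log(KH/\delta))$ notation.

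The main obstacle — and really the only nonroutine step — is the elliptical potential argument, but that has already been carved out as Lemma~\ref{lemma: elliptical potential lemma with side information paper}; the subtlety there is that the algorithm's exploration bonus uses $\norm{d_{\pi_k}}_{A_{k-1}^{-1}}$ with the \emph{expected} occupancy measure $d_{\pi_k}$, whereas the matrix $A_{k-1}$ is built from the \emph{realized} empirical frequencies $\hat{d}_l$, so the classical potential lemma does not directly apply. Lemma~\ref{lemma: elliptical potential lemma with side information paper} handles exactly this by Jensen's inequality plus a martingale-difference concentration, so in the theorem's proof I would simply invoke it. The remaining care is bookkeeping: making sure the $l_{k-1}$ factors are handled by monotonicity rather than being left inside the sum, checking that the $\lambda = H$ choice is what balances the $H$-dependence (giving the clean $SAH\sqrt{K}$ rather than something worse), and confirming the union bound over the (constant number of) high-probability events. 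None of these require new ideas beyond what is stated in the excerpt.
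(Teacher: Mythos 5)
Your proposal is correct and follows essentially the same route as the paper's own proof: condition on the reward-concentration event, use Cauchy--Schwarz in the $A_{k-1}$-norm to establish optimism and bound the per-episode gap by $2l_{k-1}\norm{d_{\pi_k}}_{A_{k-1}^{-1}}$, pull out $l_K$ by monotonicity, invoke the expected elliptical potential lemma with $\lambda=H$, and union-bound the two failure events. No gaps; this matches the argument in Appendix~\ref{supp: OFUL RL per trajectory feedback}.
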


To exemplify how the expected elliptical potential lemma is applied in the analysis of Algorithm~\ref{alg: oful known model} we supply a sketch of the proof.
\begin{proofsketch}
By the optimism of the update rule, following~\citep{abbasi2011improved}, it is possible to show that with high probability,
$$
V^*_1(s_1^k) = d_{\pi^*}^T r \leq d_{\pi_k}^T\hat{r}_{k-1}+ l_{k-1}\norm{d_{\pi_k}}_{A_{k-1}^{-1}}, 
$$
for any $k>0$. Thus, we only need to bound the \emph{on-policy} prediction error given as follows,
{\small
\begin{align}
    \Regret(K)
    &= \sum_{k=1}^K (V_1^* - V_1^{\pi_k}) \nonumber\\
    &\leq \sum_{k=1}^K (d_{\pi_k}^T\hat{r}_{k-1} + l_{k-1} \norm{d_{\pi_k}}_{A_{k-1}^{-1}}  - d_{\pi_k}^T r)\nonumber\\
    &\leq 2l_K \sum_{k=1}^K \norm{d_{\pi_k}}_{A_{k-1}^{-1}}\,. \label{eq: sketch oful rel 1}
\end{align}
}
where the last inequality can be derived using Proposition~\ref{proposition: concentration of reward} and the Cauchy Schwartz inequality. Applying the expected elliptical potential lemma (Lemma~\ref{lemma: elliptical potential lemma with side information paper}) and setting $\lambda=H$ concludes the proof.
\end{proofsketch}

Although Algorithm~\ref{alg: oful known model} provides a natural solution to the problem, it results in a major computational disadvantage. The optimization problem needed to be solved in each iteration is a convex maximization problem (known to generally be NP-hard \citep{atamturk2017maximizing}). Furthermore, since $\norm{d_{\pi}}_{A_{k-1}^{-1}}$ is non-linear in $d_\pi$, it restricts us from solving this problem by means of Dynamic Programming. In the next section, we follow a different route and formulate a \emph{Thompson Sampling} based algorithm, with computational complexity that amounts to sampling a Gaussian noise for the reward and solving an MDP at each episode.

\begin{algorithm}[t]
\caption{TS for RL with Trajectory Feedback and Known Model} \label{alg: ts known model}
\begin{algorithmic}
\STATE {\bf Require:} $\delta\in(0,1)$, $\lambda=H$, $v_k=\sqrt{9SAH\log\frac{kH^2}{\delta/10}}$\\ $\qquad\qquad l_k= \sqrt{\frac{1}{4}SAH\log\br*{\frac{1+kH^2/\lambda}{\delta/10}}}+\sqrt{\lambda SA}$ 
\STATE {\bf Initialize:} $A_0=\lambda I_{SA}$, $Y_0=\mathbf{0}_{SA}$
\FOR{$k=1,...,K$}
\STATE Calculate $\hat{r}_{k-1}$ via LS estimation~\eqref{eq: ls estimator for r}
\STATE Draw noise {\small$\xi_k\!\sim\! \mathcal{N}(0,v_{k}^2A_{k-1}^{-1})$} and define {\small$\tilde{r}_k\!=\!\hat{r}_{k-1}+\xi_k$}
\STATE Solve an MDP with perturbed empirical reward $\pi_k\in \arg\max_{\pi} d(P)_\pi^T\tilde{r}_k$ 
\STATE Play $\pi_k$, observe $\hat{V}_k$ and $\brc*{(s_h^k,a_h^k)}_{h=1}^H$
\STATE  Update $A_{k}= A_{k-1} + \hat{d}_k \hat{d}_k^T $ and $Y_{k} = Y_{k-1} + \hat{d}_k \hat{V}_k$.
\ENDFOR
\end{algorithmic}
\end{algorithm}

\section{Thompson Sampling for RL with Trajectory Feedback} \label{sec: ucbvi-ts per trajectory RL}

The OFUL-based algorithm for RL with trajectory feedback, analyzed in the previous section, was shown to give good performance in terms of regret. However, implementing the algorithm requires solving a convex maximization problem before each episode, which is, in general, computationally hard. Instead of following the OFUL-based approach, in this section, we analyze a Thompson Sampling (TS) approach for RL with trajectory feedback.

We start by studying the performance of Algorithm~\ref{alg: ts known model}, which assumes access to the transition model (as in Section~\ref{sec: oful with known model}). Then, we study Algorithm~\ref{alg: ucbvi-ts} which generalizes the latter method to the case where the transition model is unknown. In this generalization, we use an optimistic-based approach to learn the \emph{transition model}, and a TS-based approach to learn the \emph{reward}. The combination of optimism and TS results in a tractable algorithm in which every iteration amounts to solving an empirical MDP (which can be done by Dynamic Programming). The reward estimator in both Algorithm~\ref{alg: ts known model} and Algorithm~\ref{alg: ucbvi-ts} is the same LS estimator~\eqref{eq: ls estimator for r} used for the OFUL-like algorithm. Finally, we focus on improving the most computationally-demanding stage of Algorithm~\ref{alg: ucbvi-ts}, which is the reward sampling, and suggest a more efficient method in Algorithm~\ref{alg: ucbvi-ts doubling}.

\subsection{TS for RL with Trajectory Feedback and Known Model}\label{sec: ts with known model}

For general action sets, it is known that OFUL~\citep{abbasi2011improved}  results in a computationally intractable update rule. One popular approach to mitigate the computational burden is to resort to TS for linear bandits~\citep{agrawal2013thompson}. Then, the update rule amounts to solving a linear optimization problem over the action set. Yet, the reduced computational complexity of TS comes at the cost of an increase in the regret. Specifically, for linear bandit problems in dimension $m$, OFUL achieves $\Olog(m\sqrt{T})$, whereas TS achieves $\Olog(m^{3/2}\sqrt{T})$~\citep{agrawal2013thompson,abeille2017linear}. 

Algorithm~\ref{alg: ts known model} can be understood as a TS variant of Algorithm~\ref{alg: oful known model}, much like TS for linear bandits~\cite{agrawal2013thompson} is a TS variant of OFUL. Unlike the common TS algorithm for linear bandits, Algorithm~\ref{alg: ts known model} uses the LS estimator in Section~\ref{sec: ls for reward estiamtion}, i.e., the one which uses the empirical state-action distributions $\hat{d}_k$, instead of the `true action' $d_{\pi_k}$. In terms of analysis, we deal with this discrepancy by applying -- as in Section~\ref{sec: oful with known model} -- the expected elliptical potential lemma,\footnote{We use a slightly more general version of the expected elliptical potential lemma, presented in Appendix~\ref{supp: expected elliptical potential lemma}} instead of the standard elliptical potential lemma. Then, by extending techniques from~\citep{agrawal2013thompson,russo2019worst} we obtain the following performance guarantee for Algorithm~\ref{alg: ts known model} (see Appendix~\ref{supp: proof of ts with known model} for the full proof).

\begin{restatable}[TS for RL with Trajectory Feedback and Known Model]{theorem-rst}{theoremPerformanceTSUCRL}\label{theorem: performance ts known model}
For any $\delta\in (0,1)$, it holds with probability greater than $1-\delta$ that for all $K>0$,
{\small
\begin{align*}
\Regret(K) 
&\leq \Ocal\br*{(SA)^{3/2}H\sqrt{K \log(K)}\log\br*{\frac{KH}{\delta}}} \\
&\quad+ \Ocal\br*{SAH\sqrt{\log\br*{\frac{KH^2}{\delta}}} } .
\end{align*}
}
\end{restatable}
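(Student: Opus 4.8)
The plan is to mirror the OFUL analysis (Theorem~\ref{theorem: OFUL for RL with per trajectory feedback}), but replace the optimism step with the standard Thompson Sampling decomposition of the regret into an \emph{optimism-in-expectation} term and an \emph{on-policy prediction error} term, following~\citep{agrawal2013thompson,russo2019worst}. Concretely, condition on the good event $\mathcal{E}$ of Proposition~\ref{proposition: concentration of reward}, under which $\norm{r-\hat r_{k-1}}_{A_{k-1}}\le l_{k-1}$ for all $k$, and let $\mathcal{E}^{TS}_k$ be the analogous event that the sampled reward is close to its mean, $\norm{\tilde r_k-\hat r_{k-1}}_{A_{k-1}}\le v_k\sqrt{SA}\cdot(\text{const})$, which holds with high probability by Gaussian concentration for $\xi_k\sim\mathcal N(0,v_k^2A_{k-1}^{-1})$ (a union bound over $k$ costs only a log factor, absorbed by the choice of $v_k$). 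On these events, for every policy $\pi$ we have the deviation bound
\[
\abs{d_\pi(P)^T(\tilde r_k-r)}\le \norm{d_\pi(P)}_{A_{k-1}^{-1}}\br*{l_{k-1}+v_k\sqrt{SA}\cdot(\text{const})}\eqdef \gamma_k\norm{d_\pi(P)}_{A_{k-1}^{-1}},
\]
with $\gamma_k=\Olog(\sqrt{SA}\cdot v_k)=\Olog(SA\sqrt H)$.

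The core of the TS argument is the \emph{optimism-in-expectation} (a.k.a. ``stochastic optimism'') step: because $\pi_k$ is the greedy policy for $\tilde r_k$ and because the posterior/sampling distribution of $\tilde r_k$ has enough anti-concentration in the direction $d_{\pi^*}(P)$, one shows that $\Ex{d_{\pi_k}(P)^T\tilde r_k\mid F_{k-1}}\ge V_1^*(s_1^k)-(\text{small error})$, i.e. the sampled value is optimistic in conditional expectation up to a lower-order term. Here one must be careful: unlike the standard linear-bandit setting, the ``action set'' is the occupancy polytope $\mathcal K(P)$, which is compact and convex and contains $d_{\pi^*}(P)$, so the anti-concentration lemma of~\citep{agrawal2013thompson} (or the cleaner version in~\citep{russo2019worst}) applies with the Gaussian covariance $v_k^2A_{k-1}^{-1}$. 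This yields
\[
V_1^*(s_1^k)-V_1^{\pi_k}(s_1^k)
=\underbrace{V_1^*(s_1^k)-\Ex{d_{\pi_k}(P)^T\tilde r_k\mid F_{k-1}}}_{\le\ \text{lower order}}
+\underbrace{\Ex{d_{\pi_k}(P)^T\tilde r_k-d_{\pi_k}(P)^Tr\mid F_{k-1}}}_{\le\ \gamma_k\,\Ex{\norm{d_{\pi_k}(P)}_{A_{k-1}^{-1}}\mid F_{k-1}}}\ ,
\]
plus martingale-difference fluctuations from passing between the conditional expectation and the realized quantities, which are controlled by Azuma--Hoeffding (each increment is $\Ocal(H)$, so the total is $\Olog(H\sqrt K)$, matching the $\Ocal(SAH\sqrt{\log(KH^2/\delta)})$ additive term).

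Summing over $k$, the leading term becomes $\sum_{k=1}^K\gamma_k\Ex{\norm{d_{\pi_k}(P)}_{A_{k-1}^{-1}}\mid F_{k-1}}$. I would then invoke the Expected Elliptical Potential Lemma (Lemma~\ref{lemma: elliptical potential lemma with side information paper}) — exactly as in the OFUL proof, this is the step that handles the fact that we update $A_k$ with the empirical frequencies $\hat d_k$ rather than the true $d_{\pi_k}(P)=\Ex{\hat d_k\mid F_{k-1}}$ — which gives $\sum_k\Ex{\norm{d_{\pi_k}(P)}_{A_{k-1}^{-1}}\mid F_{k-1}}=\Ocal(\sqrt{(H^2/\lambda)KSA\log(\lambda+KH^2/(SA))})$; one more Azuma step converts the conditional expectations inside the sum to realized norms (again as in the lemma's proof sketch). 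With $\gamma_k=\Olog(SA\sqrt H)$ and $\lambda=H$, the product is $\Olog(SA\sqrt H\cdot\sqrt{HKSA})=\Olog((SA)^{3/2}H\sqrt{K\log K})$, which is the claimed bound. The main obstacle I anticipate is making the stochastic-optimism step fully rigorous over the \emph{continuous} action set $\mathcal K(P)$: one needs a uniform (in $k$) anti-concentration estimate for the Gaussian $\tilde r_k$ evaluated along $d_{\pi^*}(P)$, together with a supremum-of-Gaussian-process / covering argument to control $\sup_{\pi}d_\pi(P)^T(\tilde r_k-\hat r_{k-1})$ — this is precisely where the extra $\sqrt{SA}$ factor (dimension of $r$) enters and is the reason TS loses a $\sqrt{SA}$ relative to OFUL; everything else is a routine adaptation of the linear-bandit TS analysis combined with the already-established Lemma~\ref{lemma: elliptical potential lemma with side information paper}.
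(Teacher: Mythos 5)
Your plan is essentially the paper's proof: the same decomposition into a conditional-expected-gap term plus Azuma-controlled martingale fluctuations, the same good events (Proposition~\ref{proposition: concentration of reward} together with Gaussian-noise concentration), the same fixed-probability-optimism argument in the style of \citet{agrawal2013thompson,russo2019worst}, and the same use of the expected elliptical potential lemma (Lemma~\ref{lemma: elliptical potential lemma with side information paper}) with $\lambda=H$ and a deviation width of order $SA\sqrt{H}$, yielding the claimed $(SA)^{3/2}H\sqrt{K\log K}$ rate.

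Two clarifications on the step you flag as the main obstacle. No covering or supremum-of-Gaussian-process argument over $\mathcal{K}(P)$ is needed: anti-concentration is required only along the single direction $d_{\pi^*}(P)$, a scalar Gaussian (Lemma~\ref{lemma: optimism ts known model}), because the greedy property $d_{\pi_k}^T\tilde r_k=\max_{\pi}d_{\pi}^T\tilde r_k\ge d_{\pi^*}^T\tilde r_k$ transfers optimism to $\pi_k$; and the uniform bound $\abs*{d^T\xi_k}\le\norm*{d}_{A_{k-1}^{-1}}\norm*{\xi_k}_{A_{k-1}}$ combined with the $\chi$-concentration of $\norm*{\xi_k}_{A_{k-1}}$ (Lemma~\ref{lemma: concentration of TS noise}) already controls all directions simultaneously --- that is where the extra $\sqrt{SA}$ enters, not through a covering of the polytope. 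The genuinely delicate point is instead the dependence of $\pi_k$ on $\xi_k$ when converting optimism-with-fixed-probability into your ``optimism in expectation''; the paper resolves it via Markov's inequality plus a symmetrization with an independent copy $\xi_k'$ (Lemmas~\ref{lemma: symmetrization bound} and~\ref{lemma: gap bound by on policy errors ts known model}). Relatedly, the resulting optimism deficit $V_1^*(s_1^k)-\Ex{d_{\pi_k}(P)^T\tilde r_k\mid F_{k-1}}$ is not lower order as your display suggests: it is bounded by $c$ times expected noise terms of the same size as the on-policy prediction error, so it also feeds the leading $(SA)^{3/2}H\sqrt{K}$ term (the final bound is unchanged, but the bookkeeping should reflect this). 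Finally, since the Gaussian noise is unbounded, bounding conditional expectations on the complement of your TS-concentration event requires the extra Cauchy--Schwarz correction term in Lemma~\ref{lemma: concentration of TS noise}, not merely a union bound over $k$.
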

Observe that Theorem~\ref{theorem: performance ts known model} establishes a regret guarantee of $m^{3/2}\sqrt{K}$ since the dimension of the specific linear bandit problem is $m=SA$ (see~\eqref{eq:prelim_v_to_sa_dist}). This is the type of regret is expected due to TS type of analysis~\citep{agrawal2013thompson}. It is an interesting question whether this bound can be improved due to the structure of the problem. 

\subsection{UCBVI-TS for RL with Trajectory Feedback}\label{sec: ts with unknown model}

In previous sections, we devised algorithms for RL with trajectory feedback, assuming access to the true transition model and that only the reward function is needed to be learned. In this section, we relax this assumption and study the setting in which the transition model is also unknown. 

This setting highlights the importance of the LS estimator~\eqref{eq: ls estimator for r}, which uses the empirical state-action frequency $\hat{d}_k$, instead of $d_{\pi_k}$. I.e., when the transition model is not known, we do not have access to $d_{\pi_k}$. Nevertheless, it is reasonable to assume access to $\hat{d}_k$ since it only depends on the \emph{observed} sequence of state-action pairs in the $k^{th}$ episode $\brc*{s_h^k,a_h^k}_{h=1}^H$ and does not require any access to the true model. For this reason, the LS estimator~\eqref{eq: ls estimator for r} is much more amenable to use in RL with trajectory feedback when the transition model \emph{is not given} and needed to be estimated.

Algorithm~\ref{alg: ucbvi-ts}, which we refer to as UCBVI-TS (Upper Confidence Bound Value Iteration and Thompson Sampling), uses a combined TS and optimistic approach for RL with trajectory feedback. At each episode, the algorithm perturbs the LS estimation of the reward $\hat{r}_{k-1}$ by a random Gaussian noise $\xi_k$, similarly to Algorithm~\ref{alg: ts known model}. Furthermore, to encourage the agent to learn the unknown transition model, UCBVI-TS adds to the reward estimation the bonus $b_{k-1}^{pv}\in \R^{SA}$ where 
\begin{align}
    b_{k-1}^{pv}(s,a) \simeq \frac{H}{\sqrt{n_{k-1}(s,a)\vee 1}}, \label{eq: bonus ucbvi optimistic model}
\end{align}
up to logarithmic factors (similarly to \citealt{azar2017minimax}). Then, it simply solves the empirical MDP defined by the plug-in transition model $\bar{P}_{k-1}$ and the reward function $\hat{r}_{k-1} +\xi_k+b_{k-1}^{pv}$. Specifically, the transition kernel $\bar{P}_{k-1}$ is estimated by
{\small
\begin{align}
    \bar{P}_{k}(s' \!\mid\! s,a) \!=\! \frac{\sum_{l=1}^{k}\sum_{h=1}^{H} \indicator{s_h^l=s,a_h^l=a,s_{h+1}^l=s'}}{n_{k}(s,a)\vee 1}. \label{eq:transition estimation}
\end{align}
}
The next result establishes a performance guarantee for UCBVI-TS with trajectory feedback (see proof in Appendix~\ref{supp: proof of ucbvi-ts}). The key idea for the proof is showing that the additional bonus term~\eqref{eq: bonus ucbvi optimistic model} induces sufficient amount of optimism with fixed probability. Then, by generalizing the analysis of Theorem~\ref{theorem: performance ts known model} while using some structural properties of MDPs we derive the final result.  

\begin{restatable}[UCBVI-TS Performance Guarantee]{theorem-rst}{theoremPerformanceTSUCBVI}\label{theorem: performance ucbvi ts}
For any $\delta\in (0,1)$, it holds with probability greater than $1-\delta$ that for all $K>0$,
{\small
\begin{align*}
    \Regret&(K)\leq 
    \Ocal\br*{SH(SA+H)\sqrt{AHK\log K} \log\br*{\frac{SAHK}{\delta}}^{\frac{3}{2}} }\\
    &\quad+ \Ocal\br*{H^2\sqrt{S}(SA + H)^2\log\br*{\frac{SAHK}{\delta}}^2\sqrt{\log K}}
\end{align*}
}
thus, discarding logarithmic factors and constants and assuming $SA\geq H$, 
$\Regret(K)\leq \Olog\br*{S^2A^{3/2}H^{{3/2}}\sqrt{K}}.$
\end{restatable}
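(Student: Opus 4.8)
The plan is to prove the regret bound for UCBVI-TS by following the high-level structure used for Algorithm~\ref{alg: ts known model} in Theorem~\ref{theorem: performance ts known model}, but now carefully accounting for two sources of error: the unknown reward (handled by Thompson sampling, exactly as before) and the unknown transition model (handled by the optimistic bonus $b^{pv}_{k-1}$). First I would set up the standard decomposition of the per-episode regret $V_1^*(s_1^k) - V_1^{\pi_k}(s_1^k)$ by adding and subtracting the value of the \emph{perturbed empirical MDP}, namely $\tilde V_k \eqdef d_{\pi_k}(\bar P_{k-1})^T(\hat r_{k-1}+\xi_k+b^{pv}_{k-1})$, which is exactly what $\pi_k$ maximizes. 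This splits the regret into (i) an optimism/underestimation term $V_1^* - \tilde V_k$, and (ii) an on-policy estimation-error term $\tilde V_k - V_1^{\pi_k}$.

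For term (i), I would condition on the good event of Proposition~\ref{proposition: concentration of reward} together with a standard concentration event for the empirical transition model $\bar P_{k-1}$ (an $\ell_1$-type bound on each row, costing the $\sqrt{S}$ and extra $H$ factors). On these events one shows, as in UCBVI~\citep{azar2017minimax}, that the transition bonus $b^{pv}_{k-1}$ dominates the model-misspecification error, so that the empirical MDP with reward $\hat r_{k-1}+b^{pv}_{k-1}$ is optimistic relative to the \emph{true-reward} MDP; then, exactly as in the known-model TS analysis, the Gaussian perturbation $\xi_k$ makes $\tilde V_k \ge V_1^*$ with a fixed probability $p>0$ (using anti-concentration of the Gaussian along the direction $d_{\pi^*}$). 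A saturated/unsaturated-action argument in the spirit of~\citep{agrawal2013thompson,russo2019worst} then converts this fixed-probability optimism into a bound on $\E[V_1^* - \tilde V_k]$ in terms of the width $\norm{d_{\pi_k}}_{A_{k-1}^{-1}}$ plus the transition-bonus sum $\sum_{s,a} d_{\pi_k}(s,a;\bar P_{k-1}) b^{pv}_{k-1}(s,a)$.

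For term (ii), I would bound $\tilde V_k - V_1^{\pi_k} = d_{\pi_k}(\bar P_{k-1})^T(\hat r_{k-1}+\xi_k+b^{pv}_{k-1}) - d_{\pi_k}(P)^T r$ by again splitting off the model error (value-difference lemma applied to the gap between $d_{\pi_k}(\bar P_{k-1})$ and $d_{\pi_k}(P) = d_{\pi_k}$, controlled by another $\sum b^{pv}_{k-1}$ term and simulation-lemma telescoping) and the reward error $d_{\pi_k}^T(\hat r_{k-1}-r) + d_{\pi_k}^T\xi_k$, where each piece is $O(l_{k-1}\norm{d_{\pi_k}}_{A_{k-1}^{-1}})$ or $O(v_{k-1}\norm{d_{\pi_k}}_{A_{k-1}^{-1}})$ by Cauchy--Schwarz and the concentration bound. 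Summing over $k$, the widths $\sum_k \norm{d_{\pi_k}}_{A_{k-1}^{-1}}$ are controlled by the Expected Elliptical Potential Lemma (Lemma~\ref{lemma: elliptical potential lemma with side information paper}), giving the $\sqrt{KSA}$ factor, while the bonus sums $\sum_k \sum_{s,a} d_{\pi_k}(s,a;\bar P_{k-1}) b^{pv}_{k-1}(s,a)$ are controlled by the standard pigeonhole/$\sum 1/\sqrt{n_k}$ argument of UCBVI, giving an $H\sqrt{SAKH}$-type contribution; the martingale gaps between $d_{\pi_k}$ and $\hat d_k$ (and between $\bar P_{k-1}$-occupancy and realized visitation) are absorbed by Azuma--Hoeffding into lower-order terms. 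Collecting the dominant $(SA)^{3/2}H\sqrt{K}$ term from the reward/TS part and the $S\sqrt{A}H^{3/2}\sqrt{SAK} = S^{3/2}A H^{3/2}\sqrt{K}$-type term from the model part and simplifying under $SA\ge H$ yields $\Olog(S^2A^{3/2}H^{3/2}\sqrt{K})$.

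The main obstacle I expect is coupling the two mechanisms cleanly in term (i): the fixed-probability optimism argument for Thompson sampling needs the \emph{empirical-model, bonus-augmented} MDP to already be (deterministically) an over-estimate of $V_1^*$ before adding $\xi_k$, which requires the transition bonus analysis of UCBVI to go through uniformly over policies and to interact correctly with the saturated-action decomposition — in particular, ensuring the anti-concentration step uses a direction ($d_{\pi^*}(\bar P_{k-1})$ vs.\ $d_{\pi^*}$) for which the relevant covariance $\norm{\cdot}_{A_{k-1}^{-1}}$ is still the right width, and that the extra $\sqrt{S}(SA+H)$ factors in the lower-order term come only from the model-estimation event and not from the elliptical-potential machinery.
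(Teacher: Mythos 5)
Your overall architecture is the same as the paper's (good events for the reward, the transition rows, and $(\bar P-P)^TV^*$; fixed-probability optimism from the bonus plus Gaussian anti-concentration; a conversion of fixed-probability optimism into an expected-gap bound; the simulation lemma for $\norm{d_{\pi_k}(\bar P_{k-1})-d_{\pi_k}}_1$; the expected elliptical potential lemma and visitation-count sums). However, there is a concrete gap in your accounting of term (ii), and it is exactly where the dominant term of the theorem lives. The value your policy optimizes pairs $\hat r_{k-1}+\xi_k+b^{pv}_{k-1}$ with $d_{\pi_k}(\bar P_{k-1})$, not with $d_{\pi_k}$, while the expected elliptical potential lemma only controls $\sum_k\Ex{\norm{d_{\pi_k}}_{A_{k-1}^{-1}}\mid F_{k-1}}$ (because $\Ex{\hat d_k\mid F_{k-1}^d}=d_{\pi_k}$ under the \emph{true} model). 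When you write the noise/reward pieces as $O\br*{v_k\norm{d_{\pi_k}}_{A_{k-1}^{-1}}}$ you silently replace $d_{\pi_k}(\bar P_{k-1})$ by $d_{\pi_k}$; the discrepancy must be paid at the scale of the TS-noise magnitude, i.e. roughly $g_K/\sqrt{\lambda}\approx SA$ (with $\lambda=H$), multiplied by $\sum_k\Ex{\norm{d_{\pi_k}(\bar P_{k-1})-d_{\pi_k}}_1\mid F_{k-1}}\lesssim SH\sqrt{AHK}$. This cross term is precisely the paper's dominant contribution (term $(b)$ of Lemma~\ref{lemma: conditional gap bound ts unknown model}, with prefactor $g_K/\sqrt H+H\sqrt{\log(\cdot)}\approx SA+H$), of order $\Olog\br*{S^2A^{3/2}H^{3/2}\sqrt K}$. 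Your explicit tally only lists $(SA)^{3/2}H\sqrt K$ and $S^{3/2}AH^{3/2}\sqrt K$, whose maximum is strictly smaller, so the final bound you state does not follow arithmetically from the terms you collect; the missing term has to be identified and bounded.

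Second, the "obstacle" you flag is stated too strongly, and pursuing it literally would lead you astray. The fixed-probability optimism does \emph{not} require the bonus-augmented empirical MDP to be a deterministic over-estimate of $V_1^*$ uniformly over policies before adding $\xi_k$. The paper only needs the event $E^{pv}$ — concentration of $\br*{\bar P_{k-1}(\cdot\mid s,a)-P(\cdot\mid s,a)}^TV^*_{h+1}$ against the fixed optimal value function — applied along the single policy $\pi^*$ via the value-difference lemma (Lemma~\ref{lemma: optimism ucrl}); this is why $b^{pv}$ needs no $\sqrt S$ factor, whereas a UCBVI-style uniform-over-policies optimism argument would either fail with this bonus or force a $\sqrt S$-inflated one. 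Your instinct about the direction is right and is handled because $E^r$ holds for all directions $d$, so anti-concentration is applied along $d_{\pi^*}(\bar P_{k-1})$ with width $\norm{d_{\pi^*}(\bar P_{k-1})}_{A_{k-1}^{-1}}$. Finally, note that the paper deliberately avoids the saturated/unsaturated-action machinery and instead uses a Markov-inequality/positive-part plus symmetrization argument (Lemmas~\ref{lemma: gap bound by on policy errors ucbvi-ts} and~\ref{lemma: symmetrization bound}, following \citealt{russo2019worst}), precisely because with an unknown model the "actions" $d_\pi(\bar P_{k-1})$ no longer coincide with the occupancies that determine the realized value, which makes the saturated-set bookkeeping you propose delicate; if you insist on that route you must redefine saturation with respect to the empirical-model actions while measuring regret with respect to the true-model occupancies.
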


\begin{algorithm}[t]
\caption{UCBVI-TS for RL with Trajectory Feedback} \label{alg: ucbvi-ts}
\begin{algorithmic}
\STATE {\bf Require:} {\small$\delta\in(0,1)$, $\lambda=H$, $v_k=\sqrt{9SAH\log\frac{kH^2}{\delta/10}}$,\\ 
$\qquad\qquad l_k= \sqrt{\frac{1}{4}SAH\log\br*{\frac{1+kH^2/\lambda}{\delta/10}}}+\sqrt{\lambda SA}$,\\ 
$\qquad\qquad b^{pv}_k(s,a) = \sqrt{\frac{H^2\log \frac{40SAH^2k^3}{\delta}}{n_{k}(s,a)\vee 1}}$}
\STATE {\bf Initialize:} $A_0=\lambda I_{SA}$, $Y_0=\mathbf{0}_{SA}$, \\ \qquad\qquad\, Counters $n_{0}(s,a) = 0,\, \forall s,a$.
\FOR{$k=1,...,K$}
\STATE Calculate $\hat{r}_{k-1}$ via LS estimation~\eqref{eq: ls estimator for r} and $\bar{P}_{k}$ by \eqref{eq:transition estimation}
\STATE Draw noise $\xi_k\sim \mathcal{N}(0,v_{k}^2A_{k-1}^{-1})$ and define $\tilde{r}_k^b = \hat{r}_{k-1}+\xi_k+b^{pv}_{k-1}$
\STATE Solve empirical MDP with optimistic-perturbed reward, $\pi_k\in \arg\max_{\pi} d(\bar{P}_{k-1})^T\tilde{r}_k^b$ 
\STATE Play $\pi_k$, observe $\hat{V}_k$ and $\brc*{(s_h^k,a_h^k)}_{h=1}^H$
\STATE  Update counters $n_k(s,a)$, $A_{k}= A_{k-1} + \hat{d}_k \hat{d}_k^T$ and $Y_{k} = Y_{k-1} + \hat{d}_k \hat{V}_k$.
\ENDFOR
\end{algorithmic}
\end{algorithm}

\subsection{Improving the Computational Efficiency of UCBVI-TS}
\label{sec:rarely-switching ucbvi-ts}
In this section, we present a modification to UCBVI-TS that uses a doubling trick to improve the computational efficiency of Algorithm~\ref{alg: ucbvi-ts}. Specifically, for $m=SA$, the complexity of different parts of UCBVI-TS is as follows:
\begin{itemize}
\item $A_{k}^{-1}$ can be iteratively updated using $\Ocal\br*{m^2}$ computations by the Sherman-Morrison formula \citep{bartlett1951inverse}.
\item Given $A_{k}^{-1}$, calculating $\hat{r}_k$ requires $\Ocal\br*{m^2}$ computations.
\item Generating the noise $\xi_k$ requires calculating $A_{k}^{-1/2}$ that, in general, requires performing a singular value decomposition to $A_{k}^{-1}$ at a cost of $\Ocal(m^3)$ computations.
\item Finally, calculating the policy using dynamic programming requires $\Ocal(S^2AH)$ computations.
\end{itemize}
In overall, UCBVI-TS requires $\Ocal\br*{(SA)^3+S^2A(H+A)}$ computations per episode, where the most demanding part is the noise generation. Thus, we suggest a variant of our algorithm, called Rarely-Switching UCBVI-TS (see Appendix~\ref{supp: rarely switching ucbvi-ts} for the full description of the algorithm), that updates $A_k$ (and, as a result, the LS estimator) only after the updates increase the determinant of $A_k$ by a factor of $1+C$, for some $C>0$, similarly to \citep{abbasi2011improved}. Specifically, we let $B_{k}=\lambda I_{SA} + \sum_{s=1}^k\hat{d}_s \hat{d}_s^T$ and update $A_k=B_k$ if $\det{B_k}>(1+C)\det{A_k}$, where $B_0=A_0=\lambda I_{SA}$. Otherwise, we keep $A_k=A_{k-1}$. By the matrix-determinant lemma, $\det{B_k}$ can be iteratively updated by $\det{B_k} = \br*{1+\hat{d}_k^TB_{k-1}^{-1}\hat{d}_k}\det{B_{k-1}}$, which requires $\Ocal(SA)$ calculations given $B_{k-1}^{-1}$; in turn, $B_{k-1}^{-1}$ can be updated by the Sherman-Morrison formula. Notably, $A_k$ is rarely updated, as we prove in the following lemma:

\begin{restatable}{lemma-rst}{switchesBound} \label{lemma: switches bound}
    Under the update rule of Rarely-Switching UCBVI-TS, and for any $C>0,\lambda>0$, it holds that 
    $
        \sum_{k=1}^K \indicator{A_k \neq A_{k-1}} \le \frac{SA}{\log (1+C)}\log\br*{1 + \frac{KH^2}{\lambda SA}}.
    $
     
\end{restatable}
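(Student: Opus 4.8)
The plan is to bound the number of switches by relating each switch to a multiplicative increase in $\det{A_k}$, and then using the fact that the determinant can only grow by a bounded factor overall. This is the standard "rarely-switching" argument from \citet{abbasi2011improved}, adapted to our setting where the rank-one updates are $\hat{d}_s\hat{d}_s^T$ with $\hat{d}_s\in[0,H]^{SA}$.

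First I would observe that whenever a switch occurs at episode $k$, we have by the update rule $\det{A_k} = \det{B_k} > (1+C)\det{A_{k-1}}$, since $A_k = B_k$ in that case and $A_{k-1}$ equals the value of $A$ before the switch. When no switch occurs, $A_k = A_{k-1}$ so the determinant is unchanged. Therefore, if $N_K \eqdef \sum_{k=1}^K \indicator{A_k \neq A_{k-1}}$ denotes the number of switches up to episode $K$, a telescoping/product argument gives
\begin{align*}
    \det{A_K} \ge (1+C)^{N_K} \det{A_0} = (1+C)^{N_K}\lambda^{SA}.
\end{align*}

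Next I would upper bound $\det{A_K}$. Since $A_K \preceq B_K = \lambda I_{SA} + \sum_{s=1}^K \hat{d}_s\hat{d}_s^T$ (the matrix $A_K$ is always one of the earlier $B_j$'s, hence dominated by $B_K$), it suffices to bound $\det{B_K}$. By the AM-GM inequality on the eigenvalues, $\det{B_K} \le \br*{\frac{\trace{B_K}}{SA}}^{SA}$. The trace is $\trace{B_K} = \lambda SA + \sum_{s=1}^K \norm{\hat{d}_s}_2^2 \le \lambda SA + K H^2$, using that $\hat{d}_s$ has entries in $[0,H]$ and is supported on the $H$ visited state-action pairs, so $\norm{\hat{d}_s}_2^2 \le H^2$. (More precisely $\sum_{s,a}\hat{d}_s(s,a)^2 \le \br*{\sum_{s,a}\hat{d}_s(s,a)}^2 = H^2$.) Combining, $\det{A_K} \le \br*{\lambda + \frac{KH^2}{SA}}^{SA}$.

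Finally I would chain the two bounds: $(1+C)^{N_K}\lambda^{SA} \le \br*{\lambda + KH^2/(SA)}^{SA}$, so $(1+C)^{N_K} \le \br*{1 + KH^2/(\lambda SA)}^{SA}$, and taking logarithms yields
\begin{align*}
    N_K \le \frac{SA}{\log(1+C)}\log\br*{1 + \frac{KH^2}{\lambda SA}},
\end{align*}
which is exactly the claimed bound. The main obstacle is the trace bound on the rank-one increments: one must be careful that the correct norm bound is $\norm{\hat{d}_s}_2^2 \le H^2$ (coming from $\ell_1 \le H$ and $\ell_2 \le \ell_1$), not the cruder $SAH^2$, since using the loose bound would inflate the logarithm inside, though it would not change the $SA/\log(1+C)$ leading factor. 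Everything else is a routine telescoping-and-AM/GM argument identical to the linear bandit case.
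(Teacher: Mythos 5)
Your proposal is correct and follows essentially the same argument as the paper: telescope the multiplicative determinant growth over switches to get $\det{A_K}\ge(1+C)^{N_K}\lambda^{SA}$, then upper-bound the determinant via the trace/AM--GM bound using $\norm{\hat{d}_s}_2\le\norm{\hat{d}_s}_1=H$ (the paper invokes the determinant--trace lemma of \citet{abbasi2011improved} applied to $A_{\tau_N}=B_{\tau_N}$, which is the same inequality you re-derive), and take logarithms.
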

Therefore, the average per-round computational complexity of Rarely-Switching UCBVI-TS after $K$ episodes is
\begin{align*}
    \Ocal\br*{S^2A(H+A) + \frac{(SA)^4}{\log(1+C)}\frac{\log\frac{KH}{SA}}{K}}\enspace.
\end{align*}
Moreover, rarely updating $A_k$ only affects the lower-order terms of the regret, as we prove in the following theorem: 
\begin{restatable}[Rarely-Switching UCBVI-TS Performance Guarantee]{theorem-rst}{theoremSwitchingTSUCBVI}\label{theorem: performance switching ucbvi ts}
For any $\delta\in (0,1)$, it holds with probability greater than $1-\delta$ that for all $K>0$,
{\small
\begin{align*}
    \Regret(K)
    &\leq \Olog\br*{SH(SA+H)\sqrt{AHK} + H^2\sqrt{S}(SA + H)^2} \\
    &\quad+ \Olog\br*{(SA)^{3/2}H\sqrt{(1+C)K}}\enspace.
\end{align*}
}
\end{restatable}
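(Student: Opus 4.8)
The plan is to reduce the analysis of Rarely-Switching UCBVI-TS to that of the non-switching version (Theorem~\ref{theorem: performance ucbvi ts}), isolating the additional error introduced by using a ``stale'' matrix $A_{k-1}$ and reward estimate $\hat r_{k-1}$ in the episodes between switches. Recall that the entire analysis chain of the earlier algorithms rests on two pillars: (i) the concentration of the reward estimate (Proposition~\ref{proposition: concentration of reward}), and (ii) the expected elliptical potential lemma (Lemma~\ref{lemma: elliptical potential lemma with side information paper}). Both must be revisited. For (i), let $\tau(k)$ denote the last episode $\le k$ at which $A$ was updated, so that the algorithm uses $\hat r_{\tau(k)}$ and $A_{\tau(k)}$. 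Since $B_k$ (the ``true'' running Gram matrix) always satisfies $A_{\tau(k)} \preceq B_k \preceq (1+C) A_{\tau(k)}$ by the switching rule, the confidence ellipsoid $\norm{r - \hat r_{\tau(k)}}_{A_{\tau(k)}} \le l_{\tau(k)}$ from Proposition~\ref{proposition: concentration of reward} translates, up to the factor $\sqrt{1+C}$, into a valid confidence statement with respect to $B_{k}$; conversely $\norm{x}_{A_{\tau(k)}^{-1}} \le \sqrt{1+C}\,\norm{x}_{B_{k-1}^{-1}}$. This is exactly where the $\sqrt{1+C}$ enters the leading term of the bound.

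Next I would redo the regret decomposition of UCBVI-TS verbatim, but carry the factor $\sqrt{1+C}$ through. As in the proof sketches of Theorems~\ref{theorem: OFUL for RL with per trajectory feedback} and~\ref{theorem: performance ts known model}, optimism (now established with an inflated confidence width $\sqrt{1+C}\,l_{\tau(k)}$, plus the bonus $b^{pv}$ that supplies the fixed-probability optimism for the unknown model, exactly as in Theorem~\ref{theorem: performance ucbvi ts}) bounds $V_1^* - V_1^{\pi_k}$ by a sum of on-policy prediction errors. The reward-estimation part contributes $\Ocal(\sqrt{1+C}\, l_K \sum_k \norm{d_{\pi_k}}_{B_{k-1}^{-1}})$ (using the anti-concentration/TS argument of~\citep{agrawal2013thompson,russo2019worst} to pass from the sampled reward $\tilde r_k$ back to the width, which costs the extra $\sqrt{SA}$ giving the $(SA)^{3/2}$ scaling), while the transition-model part contributes the UCBVI bonus sum, which is unchanged by the doubling trick and yields the $SH(SA+H)\sqrt{AHK}$ and $H^2\sqrt S (SA+H)^2$ terms. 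Applying the expected elliptical potential lemma (Lemma~\ref{lemma: elliptical potential lemma with side information paper}, or rather its general version in the appendix) to $\sum_k \norm{d_{\pi_k}}_{B_{k-1}^{-1}}$ — note $B_{k-1}$ is the genuine running Gram matrix, so the lemma applies directly — turns this into $\Olog((SA)^{3/2}H\sqrt{(1+C)K})$, which is the second displayed term of the theorem.

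Two technical points need care. First, the expected elliptical potential lemma as stated controls $\sum_k \norm{d_{\pi_k}}_{A_{k-1}^{-1}}$ where $A_k = A_{k-1} + \hat d_k \hat d_k^T$ is updated every step; here the matrix actually used in the algorithm, $A_{\tau(k)}$, is \emph{not} of that form, so I must route everything through $B_{k-1}$ (which \emph{is} updated every step) and pay the $\sqrt{1+C}$ sandwich factor at the two places above — never applying the lemma to the stale matrix directly. Second, the martingale term $(a)$ in that lemma's proof and the anti-concentration step of the TS analysis both reference the filtration at which $\pi_k$ is measurable; since $\pi_k$ now depends on $\hat r_{\tau(k)}$, which is $F_{\tau(k)}$-measurable hence $F_{k-1}$-measurable, the measurability requirements still hold and no change is needed. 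The main obstacle I anticipate is bookkeeping the TS anti-concentration argument with the stale estimate: one must verify that the perturbed reward $\tilde r_k = \hat r_{\tau(k)} + \xi_k + b^{pv}_{\tau(k)}$ is optimistic with constant probability with respect to the \emph{current} confidence set, which requires the noise variance $v_k^2 A_{\tau(k)}^{-1}$ to dominate the width $l_{\tau(k)}^2 A_{\tau(k)}^{-1}$ — true by the choice of $v_k$ independent of the switching — combined with the $\sqrt{1+C}$ slack; the bonus $b^{pv}$ then absorbs the model error as in Theorem~\ref{theorem: performance ucbvi ts}. Everything else is a matter of propagating constants, and the lower-order $H^2\sqrt S(SA+H)^2$ term is inherited unchanged from Theorem~\ref{theorem: performance ucbvi ts}.
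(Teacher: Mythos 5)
Your proposal is correct and follows essentially the same route as the paper's proof: relate the stale matrix to the running Gram matrix via the switching condition to get $\norm{x}_{A_{\tau(k)}^{-1}}\le\sqrt{1+C}\,\norm{x}_{B_{k-1}^{-1}}$ (the paper does this through Lemma~\ref{lemma: determinant ratio lemma abbasi}, though your Loewner sandwich also holds since $B_{k-1}\succeq A_{\tau(k)}$), note that the TS optimism lemmas go through with the stale estimator because $l_{\tau(k)}\le l_{k-1}\le v_k$ with both the estimation error and the sampled noise measured in the \emph{same} stale norm (so no $\sqrt{1+C}$ slack is actually needed in the optimism step), and rerun the UCBVI-TS gap decomposition in which only the reward-elliptical term picks up $\sqrt{1+C}$ while the model-error and bonus terms are unchanged. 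The only cosmetic slip is writing $b^{pv}_{\tau(k)}$ for the bonus, which in the algorithm is computed from the current counts as $b^{pv}_{k-1}$; this is in fact what makes the model terms identical to Theorem~\ref{theorem: performance ucbvi ts}, as your main argument already assumes.
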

The proof can be found in Appendix~\ref{supp: rarely switching ucbvi-ts}. See that the difference from Theorem~\ref{theorem: performance ucbvi ts} is in the last term, which is negligible, compared to the first term, for reasonable values of $C>0$.

\section{Discussion and Conclusions}

In this work, we formulated the framework of RL with trajectory feedback and studied different RL algorithms in the presence of such feedback. Indeed, in practical scenarios, such feedback is more reasonable to have, as it requires a weaker type of feedback relative to the standard RL one. For this reason, we believe studying it and understanding the gaps between the trajectory feedback RL and standard RL is of importance. The central result of this work is a hybrid optimistic-TS based RL algorithm with a provably bounded $\sqrt{K}$ regret that can be applied when both the reward and transition model are unknown and, thus, needed to be learned. Importantly, the suggested algorithm is computationally tractable, as it requires to solve an empirical MDPs and not a convex maximization problem. 

Regret minimization for standard RL has been extensively studied. Previous algorithms for this scenario can be roughly divided into optimistic algorithms \citep{jaksch2010near,azar2017minimax,jin2018q,dann2019policy,zanette2019tighter,simchowitz2019non,efroni2019tight} and Thompson-Sampling (or Posterior-Sampling) based algorithms \citep{osband2013more,gopalan2015thompson,osband2017posterior,russo2019worst}. Nonetheless, and to the best of our knowledge, we are the first to present a hybrid approach that utilizes both concepts in the same algorithm. Specifically, we combine the optimistic confidence-intervals of UCBVI \citep{azar2017minimax} alongside linear TS for the reward and also take advantage of analysis tools for posterior sampling in RL \citep{russo2019worst}.

In the presence of trajectory-feedback, our algorithms make use of concepts from linear bandits to learn the reward. Specifically, we use both OFUL \citep{abbasi2011improved} and linear TS \citep{agrawal2013thompson,abeille2017linear}, whose regret bounds for $m$-dimension problems after $K$ time-steps with $1$-subgaussian noise are $\Olog\brc*{m\sqrt{K}}$ and $\Olog\brc*{m^{3/2}\sqrt{K}}$, respectively. These bounds directly affect the performance in the RL setting, but the adaptation of OFUL leads to a computationally-intractable algorithm. 
In addition, when there are at most $N$ context, it is possible to achieve a regret bound of $\Olog\brc*{\sqrt{mK\log N}}$ \citep{chu2011contextual}; however, the number of deterministic policies, which are the number of `contexts' for RL with trajectory-feedback, is exponential in $S$, namely, $A^{SH}$. Therefore, such approaches will lead to similar guarantees to OFUL and will also be computationally intractable.

In terms of regret bounds, the minimax regret in the standard RL setting is $\Olog\brc*{\sqrt{SAHT}}$ \citep{osband2016lower,azar2017minimax}, however, for standard RL the reward feedback is much stronger than for RL with trajectory feedback. For linear bandits with $\sqrt{H}$-subgaussian noise, the minimax performance bounds are $\Olog\brc*{m\sqrt{HK}}$ \citep{dani2008stochastic}. Specifically, in RL we set $m=SA$, which leads to $\Olog\brc*{SA\sqrt{HK}}$. Nonetheless, for RL with trajectory feedback and known model, the context space is the average occupancy measures $d_\pi$, which is heavily-structured. It is an open question whether the minimax regret bound remains $\Olog\brc*{SA\sqrt{HK}}$ for RL with trajectory feedback, when the transition model is known, or whether it can be improved. Moreover, when the model is unknown, our algorithm enjoys a regret of $\Olog\br*{S^2A^{3/2}H^{{3/2}}\sqrt{K}}$ when $H\le SA$. A factor of $\sqrt{SA}$ is a direct result of the TS-approach, that was required to make to algorithm tractable, and an additional $\sqrt{S}$ appears when the model is unknown. Moreover, extending OFUL to the case of unknown model and following a similar analysis to Theorem~\ref{theorem: performance ucbvi ts} would still yield this extra $\sqrt{S}$ factor (and would result in a computationally hard algorithm), in comparison to when we know the model.  It is an open question whether this additional factor can also be improved.

Finally, we believe that this work paves the way to many interesting future research directions, notably, studying RL with additional, more realistic, feedback models of the reward. Furthermore, we believe that the results can be adapted to cases where the feedback is a more complex mapping from state-actions into trajectory-reward, and, specifically, a noisy generalized linear model (GLM) of the trajectory \citep{filippi2010parametric,abeille2017linear,kveton2020randomized}. In this case, even though the reward function is not Markovian, our approach should allow deriving regret bounds. More generally, this can be viewed as a form of reward-shaping with theoretical guarantees, which is, in general, an open question.

\section*{Acknowledgments}
This work was partially funded by the Israel Science Foundation under ISF grant number 2199/20. YE is partially supported by the Viterbi scholarship, Technion. NM is partially supported by the Gutwirth Scholarship.

\bibliography{citations.bib}

\clearpage
\appendix
\onecolumn
\section{Nomenclature}
\begin{enumerate}
    \item[] $l_k =\sqrt{\frac{1}{4}SAH\log\br*{\frac{1+kH^2/\lambda}{\delta/10}}} + \sqrt{\lambda SA} \stackrel{(\lambda=H)}{=}\sqrt{\frac{1}{4}SAH\log\br*{\frac{1+kH}{\delta/10}}} + \sqrt{HSA} $
    \item[] $v_k=\sqrt{9SAH\log\frac{kH^2}{\delta/10}}$
    \item[] $c=2\sqrt{2\pi e}$
    \item[] $g_k = l_{k-1} + (2c+1)v_k \br*{\sqrt{SA} + \sqrt{16\log k}} = \Ocal\br*{SA\sqrt{H\log\frac{kH}{\delta}\log k}}.$
    \item[] $b^{pv}_k(s,a) = \sqrt{\frac{H^2\log \frac{40SAH^2k^3}{\delta}}{n_{k}(s,a)\vee 1}}$
    \item[] $\hat{r}_{k}$ -- the Least Square estimator based on samples from $k$ episodes (Equation \eqref{eq: ls estimator for r}).
    \item[] $\hat{d}_k(s,a)=\sum_{h=1}^H \indicator{s=s_h^k,a = a_h^k} $ -- the empirical state-action frequency at the $k^{th}$ episode.
\end{enumerate}

\section{The Expected Elliptical Potential Lemma}\label{supp: expected elliptical potential lemma}

We prove a more general result than Lemma~\ref{lemma: elliptical potential lemma with side information paper} that will be of use for all the algorithms analyzed in this work. Specifically, this more general lemma can be applied for the TS based algorithms, where the policy $\pi_k$ depends on a random noise term drawn at the beginning of the $k^{th}$ episode. Thus, $\pi_k$ for the TS based algorithms is not measurable w.r.t. $F_k$, since the noise is not $F_k$ measurable and $\pi_k$ depends on this noise. 

\begin{restatable}[Expected Elliptical Potential Lemma]{lemma-rst}{ExpectedPotentialLemma} \label{lemma: elliptical potential lemma with side information}
    Let $\brc*{F_{k}^d}_{k=1}^\infty$ be a filtration such that for any $k$ $F_{k}\subseteq F_{k}^d$. Assume that $d_{\pi_k} = \Ex{\hat{d}_k \mid F_{k-1}^d}$. Then, for all $\lambda>0$, it holds that
    \begin{align*}
        &\sum_{k=0}^{K}\Ex{ \norm{d_{\pi_k}}_{A_{k-1}^{-1}}  | F_{k-1}} \leq 4\sqrt{\frac{H^2}{\lambda}K\log\br*{\frac{2K}{\delta}}} + \sqrt{2\frac{H^2}{\lambda}KSA\log\br*{\lambda + \frac{KH^2}{SA}}}\enspace,
    \end{align*}
    uniformly for all $K>0$, with probability greater than $1-\delta.$
\end{restatable}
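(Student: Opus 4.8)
The plan is to follow the structure already sketched in the main-body proof of Lemma~\ref{lemma: elliptical potential lemma with side information paper}, but to carry it out carefully enough to accommodate the extra filtration $F_k^d$ that is needed for the Thompson-Sampling variants. The point of the generalization is that for TS algorithms $\pi_k$ is chosen after sampling a random perturbation, so $d_{\pi_k}$ is not $F_{k-1}$-measurable; instead it is measurable with respect to the finer $\sigma$-algebra $F_{k-1}^d$, and the identity $d_{\pi_k} = \Ex{\hat d_k \mid F_{k-1}^d}$ still holds because, conditioned on $F_{k-1}^d$ (which already fixes $\pi_k$), the only remaining randomness in $\hat d_k$ is the environment transition noise. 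First I would fix $\lambda>0$, recall $A_{k-1}^{-1}$ is $F_{k-1}\subseteq F_{k-1}^d$-measurable, and apply Jensen's inequality with the convex map $x\mapsto \norm{x}_{A_{k-1}^{-1}}$ to get, pointwise,
\begin{align*}
\norm{d_{\pi_k}}_{A_{k-1}^{-1}} = \norm*{\Ex{\hat d_k \mid F_{k-1}^d}}_{A_{k-1}^{-1}} \le \Ex{\norm{\hat d_k}_{A_{k-1}^{-1}} \mid F_{k-1}^d}.
\end{align*}
Taking $\Ex{\cdot \mid F_{k-1}}$ of both sides and using the tower property ($F_{k-1}\subseteq F_{k-1}^d$) yields $\Ex{\norm{d_{\pi_k}}_{A_{k-1}^{-1}}\mid F_{k-1}} \le \Ex{\norm{\hat d_k}_{A_{k-1}^{-1}}\mid F_{k-1}}$, so it suffices to bound $\sum_{k=1}^K \Ex{\norm{\hat d_k}_{A_{k-1}^{-1}}\mid F_{k-1}}$.

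Next I would introduce the usual add-and-subtract decomposition,
\begin{align*}
\sum_{k=1}^K \Ex{\norm{\hat d_k}_{A_{k-1}^{-1}}\mid F_{k-1}} = \underbrace{\sum_{k=1}^K\br*{\Ex{\norm{\hat d_k}_{A_{k-1}^{-1}}\mid F_{k-1}} - \norm{\hat d_k}_{A_{k-1}^{-1}}}}_{(a)} + \underbrace{\sum_{k=1}^K \norm{\hat d_k}_{A_{k-1}^{-1}}}_{(b)},
\end{align*}
and handle the two terms separately. For $(a)$: each summand is a martingale difference with respect to $\brc*{F_k}$ (note $\hat d_k$ and $A_{k-1}^{-1}$ are $F_k$-measurable), and it is bounded because $\norm{\hat d_k}_{A_{k-1}^{-1}}^2 \le \hat d_k^T A_{k-1}^{-1}\hat d_k \le \frac{1}{\lambda}\norm{\hat d_k}_2^2 \le \frac{H^2 SA}{\lambda}$ using $A_{k-1}^{-1}\preceq \lambda^{-1} I$, $\abs{\hat d_k(s,a)}\le H$ — actually a sharper $\norm{\hat d_k}_1 \le H$ bound gives $\norm{\hat d_k}_2^2\le H^2$, so $\norm{\hat d_k}_{A_{k-1}^{-1}}\le H/\sqrt\lambda$; then Azuma–Hoeffding (uniformly in $K$ via a union bound over $K$ with a $\delta/(2K^2)$-type allocation, or a stopping-time argument) gives $(a) \le 4\sqrt{\frac{H^2}{\lambda}K\log(2K/\delta)}$ with probability $\ge 1-\delta$. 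For $(b)$: this is a deterministic (given the trajectories) quantity to which the standard elliptical potential lemma of \citep{abbasi2011improved} applies directly, with $x_k=\hat d_k$, $\norm{x_k}_2\le H$, dimension $SA$, and regularizer $\lambda$; this yields $(b)\le \sqrt{\frac{H^2}{\lambda}}\sqrt{2KSA\log\br*{\lambda+\frac{KH^2}{SA}}}$.

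The main obstacle — or rather the point requiring care — is term $(a)$: I need the high-probability bound to hold \emph{uniformly over all $K>0$} and not for a single fixed horizon, which rules out a naive one-shot Azuma application. The cleanest route is to run Azuma–Hoeffding at confidence level $\delta/(2k^2)$ for the prefix sum up to step $k$ and union-bound over $k\ge 1$ (the series $\sum_k 1/k^2$ converges), absorbing the resulting $\log k$ into the stated $\log(2K/\delta)$; alternatively one can invoke a uniform/anytime martingale inequality. A secondary subtlety is being explicit that the Jensen step is valid componentwise in the sense that $d_{\pi_k}$ equals the conditional expectation of $\hat d_k$ as a \emph{vector}, which is exactly the hypothesis $d_{\pi_k}=\Ex{\hat d_k\mid F_{k-1}^d}$, and that $A_{k-1}^{-1}$ is positive definite so $\norm{\cdot}_{A_{k-1}^{-1}}$ is a genuine norm and hence convex. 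Combining the bounds on $(a)$ and $(b)$ and chaining back through the tower/Jensen inequalities gives the claimed inequality.
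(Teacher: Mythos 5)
Your proposal is correct and follows essentially the same route as the paper's proof: Jensen plus the tower property to replace $d_{\pi_k}$ by $\hat d_k$, the same add-and-subtract martingale decomposition with Azuma--Hoeffding at level $\delta/(2K^2)$ union-bounded over $K$ for term $(a)$, and the standard elliptical potential lemma of \citet{abbasi2011improved} (via Cauchy--Schwarz and the bound $\norm{\hat d_k}_{A_{k-1}^{-1}}\le H/\sqrt{\lambda}$, which also justifies inserting the $\min(\cdot,1)$ needed when $\lambda<H^2$) for term $(b)$. The only cosmetic difference is that the paper spells out the normalization $\frac{\lambda}{H^2}\norm{\hat d_k}_{A_{k-1}^{-1}}^2\le 1$ before invoking that lemma, a step you wave through with ``applies directly'' but which your stated bound already accounts for.
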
 
    
\begin{proof}
    Applying Jensen's inequality, we get
  \begin{align*}
        &\Ex{ \norm*{d_{\pi_k}}_{A_{k-1}^{-1}}  | F_{k-1}} = \Ex{ \norm*{\Ex{\hat{d}_{k} | F_{k-1}^d}}_{A_{k-1}^{-1}}  \Big\vert F_{k-1}}\\
        &\leq \Ex{ \Ex{\norm{\hat{d}_{k} }_{A_{k-1}^{-1}}\vert F_{k-1}^d}  \Big\vert F_{k-1}} \tag{Jensen's Inequality. Norm is convex.}\\
        &=\Ex{ \norm{\hat{d}_{k} }_{A_{k-1}^{-1}}|  | F_{k-1}}. \tag{Tower Property}
    \end{align*}
  Adding and subtracting the random variable $\norm{\hat{d}_{k} }_{A_{k-1}^{-1}}$ we get
    \begin{align*}
        &(i) \leq \sum_{k=1}^{K}\Ex{ \norm{\hat{d}_{k} }_{A_{k-1}^{-1}}|  | F_{k-1}} =\underbrace{\sum_{k=1}^{K}\Ex{ \norm{\hat{d}_{k} }_{A_{k-1}^{-1}}|  | F_{k-1}} - \norm{\hat{d}_{k} }_{A_{k-1}^{-1}}}_{(a)} + \underbrace{\sum_{k=1}^{K} \norm{\hat{d}_{k} }_{A_{k-1}^{-1}}}_{(b)}.
    \end{align*}
    It is evident that $(a)$ is the sum over a martingale difference sequence. Also, notice that
    $$\norm{\hat{d}_{k} }_{A_{k-1}^{-1}}
    \leq \frac{1}{\sqrt{\lambda}} \norm{\hat{d}_{k}}_2
    \leq \frac{1}{\sqrt{\lambda}} \norm{\hat{d}_{k}}_1
    =\frac{H}{\sqrt{\lambda}}\enspace,$$
    and since $\norm{\hat{d}_{k} }_{A_{k-1}^{-1}}\ge0$, the elements of the martingale differences sequence are bounded by 
    $$\abs*{\Ex{ \norm{\hat{d}_{k} }_{A_{k-1}^{-1}}|  | F_{k-1}} - \norm{\hat{d}_{k} }_{A_{k-1}^{-1}}} 
    \le 2\frac{H}{\sqrt{\lambda}}\enspace.$$
    Thus, applying Azuma-Hoeffding's inequality, for any fixed $K>0$, w.p. at least $1-\delta'$
    \begin{align*}
        (a) \leq \sqrt{8\frac{H^2}{\lambda}K\log\br*{\frac{1}{\delta'}}}.
    \end{align*}
    Setting $\delta' \gets \delta/(2K^2)$ and taking the union bound on all $K>0$ we have that the inequality
    \begin{align*}
    (a) \leq \sqrt{8\frac{H^2}{\lambda}K\log\br*{\frac{2K^2}{\delta}}} \le 4\sqrt{\frac{H^2}{\lambda}K\log\br*{\frac{2K}{\delta}}}
    \end{align*}
    does not holds for some $K>0$ with probability smaller than than $\sum_{K=1}^\infty \frac{\delta}{2K^2}\leq \delta$. Thus, it holds with probability greater than~$1-\delta$.

    Term $(b)$ can be bounded by applying the elliptical potential lemma (Lemma~\ref{lemma: eliptical potential lemma abbasi}) as follows. 
    \begin{align*}
        (b) &\leq \sqrt{K}\sqrt{\sum_{k=0}^{K}\norm{\hat{d}_{k} }_{A_{k-1}^{-1}}^2} \tag{Jensen's inequality}\\
      &=\sqrt{\frac{H^2}{\lambda}}\sqrt{K}\sqrt{\sum_{k=0}^{K} \frac{\lambda}{H^2}\norm{\hat{d}_{k} }_{A_{k-1}^{-1}}^2}\\
      &=\sqrt{\frac{H^2}{\lambda}}\sqrt{K}\sqrt{\sum_{k=0}^{K} \min\br*{\frac{\lambda}{H^2}\norm{\hat{d}_{k} }_{A_{k-1}^{-1}}^2,1}} \tag{$\norm{\hat{d}_{k} }_{A_{k-1}^{-1}}^2\leq \frac{1}{\lambda}\norm{\hat{d}_{k}}^2\leq \frac{H^2}{\lambda}$}\\
        &\leq \sqrt{\frac{H^2}{\lambda}} \sqrt{2KSA\log\br*{\lambda + \frac{KH^2}{SA}}}. \tag{Lemma~\ref{lemma: eliptical potential lemma abbasi}}
    \end{align*}
    
    Combining the bounds on $(a)$, $(b)$, we conclude the proof of the lemma.
    
\end{proof}

\clearpage
\section{OFUL for RL with Trajectory Feedback} \label{supp: OFUL RL per trajectory feedback}

We start by proving the performance bound when the transition model is known.
\OFULrlPer*

\begin{proof}
We define the good event $\mathbb{G}$ as the event that for all $k>0$,
\begin{align*}
    \norm{\hat{r}_k - r}_{A_k} \leq \sqrt{\frac{1}{4}SAH\log\br*{\frac{1+kH^2/\lambda}{\delta/10}}} + \sqrt{\lambda SA}\eqdef l_k,
\end{align*}
and the event that 
\begin{align*}
        \sum_{k=0}^{K}\Ex{ \norm{d_{\pi_k}}_{A_{k-1}^{-1}}  | F_{k-1}} 
        \leq 4\sqrt{\frac{H^2}{\lambda}K\log\br*{\frac{20K}{\delta}}} + \sqrt{2\frac{H^2}{\lambda}KSA\log\br*{\lambda + \frac{KH^2}{SA}}},
    \end{align*}
for all $K>0$.

By Proposition~\ref{proposition: concentration of reward} the first event holds with probability greater than $1-\frac{\delta}{10}$. 
By Lemma~\ref{lemma: elliptical potential lemma with side information} with $F_k^d=F_k$, the second event holds with probability greater than $1-\frac{\delta}{10}$. Taking the union bound establishes that $\Pr\brc*{\mathbb{G}}\geq 1 - \frac{\delta}{5}\ge 1-\delta$.

Now, let $\mathcal{C}_k \eqdef\brc*{\tilde r : \norm{\tilde r - \hat{r}_k}_{A_k}\leq l_k}.$  Conditioning on $\mathbb{G}$, it holds that $r\in \mathcal{C}_k$ for all $k>0$. Thus, 
\begin{align}
    d_{\pi_k}^T \hat{r}_{k-1} + l_{k-1}\norm{d_{\pi_k}}_{A_{k-1}^{-1}} = \max_{\pi}\br*{d_{\pi}^T \hat{r}_{k-1} + l_{k-1}\norm{d_{\pi}}_{A_{k-1}^{-1}}} = \max_{\pi}\max_{\tilde r\in \mathcal{C}_{k-1}} d_{\pi}^T \tilde r \geq d_{\pi^*}^T r, \label{eq: oful known model optimism}
\end{align}
i.e., the algorithm is optimistic. We initially bound the regret following similar analysis to~\citep{abbasi2011improved} as follows.
\begin{align}
    &\Regret(K) = \sum_{k=1}^K d_{\pi^*}^T r - d_{\pi_k}^T r \nonumber \\
    & \leq \sum_{k=1}^K d_{\pi_k}^T \hat{r}_{k-1} + l_{k-1}\norm{d_{\pi_k}}_{A_{k-1}^{-1}} - d_{\pi_k}^T r \tag{Eq.~\eqref{eq: oful known model optimism}} \nonumber\\ 
    &=\sum_{k=1}^K d_{\pi_k}^T( \hat{r}_{k-1} -r)+ l_{k-1}\norm{d_{\pi_k}}_{A_{k-1}^{-1}} \nonumber\\
    &\leq \sum_{k=1}^K \norm{d_{\pi_k}}_{A_{k-1}^{-1}}\norm{ \hat{r}_{k-1} -r}_{A_{k-1}} + l_{k-1}\norm{d_{\pi_k}}_{A_{k-1}^{-1}} \nonumber\\
    &\leq 2l_{K}\sum_{k=1}^K \norm{d_{\pi_k}}_{A_{k-1}^{-1}}, \label{eq: oful known model rel 1}
\end{align}
where the last relation holds conditioning on $\mathbb{G}$ and that $l_K\geq l_k$ for all $k\leq K$.

Setting $\lambda = H$ and observing that conditioning on the good event it holds that 
\begin{align*}
    \sum_{k=0}^{K}\Ex{ \norm{d_{\pi_k}}_{A_{k-1}^{-1}}  | F_{k-1}} \leq \Ocal\br*{\sqrt{HKSA\log\br*{\frac{HK}{\delta}}}},
\end{align*}

we conclude that
\begin{align*}
    \Regret(K) \leq \Ocal\br*{SAH \sqrt{K} \log\br*{\frac{KH}{\delta}}}.
\end{align*}
\end{proof}

This analysis

\clearpage
\section{Thompson Sampling for RL with Trajectory Feedback and Known Model}
\label{supp: proof of ts with known model}
In this section, we prove Theorem~\ref{theorem: performance ts known model}.

\subsection{The Good Event} \label{supp: known model ts good events}
We now specify the good event $\mathbb{G}$. We establish the performance of our algorithm conditioning on the good event. In the following, we show the good event occurs with high probability. Define the following set of events:
\begin{align*}
    &E^r(k) = \brc*{\forall d\in \R^{SA}: |d^T(\hat{r}_k -  r)|\leq l_k \norm{d}_{A_k^{-1}}}\\
    &E^d(K) = \brc*{\sum_{k=1}^{K}\Ex{ \norm{d_{\pi_k}}_{A_{k-1}^{-1}}  | F_{k-1}} \leq 4\sqrt{\frac{H^2}{\lambda}K\log\br*{\frac{20K}{\delta}}} + \sqrt{2\frac{H^2}{\lambda}KSA\log\br*{\lambda + \frac{KH^2}{SA}}} }
\end{align*}
We also define the events in which all former events hold uniformly as $E^{r}= \cap_{k\geq0} E^{r}(k)$ and $E^{d}= \cap_{K\geq1} E^{d}(K)$ and let $\mathbb{G}=  E^{r} \cup E^{d}$.

\begin{restatable}{lemma-rst}{goodeEventsTSKnown} \label{lemma: good events all ts known model}
Let the good event be $\mathbb{G}=  E^{r} \cup E^{d}$. Then, $\Pr\brc*{\mathbb{G}}\geq 1 - \frac{\delta}{2}$.
\end{restatable}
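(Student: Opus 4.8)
The plan is to decompose $\mathbb{G}$ into its two constituent sub-events (reading $\mathbb{G}$ as the event on which \emph{both} $E^{r}$ and $E^{d}$ hold), bound the failure probability of each by $\delta/10$, and finish with a union bound. Two ingredients are already available off the shelf: Proposition~\ref{proposition: concentration of reward} for the reward concentration and the general Expected Elliptical Potential Lemma (Lemma~\ref{lemma: elliptical potential lemma with side information}) for the occupancy term; the only nontrivial step is to set up the right filtration so that the latter applies to a Thompson-Sampling policy.

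\textbf{The event $E^{r}$.} I would start from Proposition~\ref{proposition: concentration of reward}, which gives, with probability at least $1-\delta/10$ and uniformly over all $k\ge 0$, that $\norm{r-\hat r_k}_{A_k}\le l_k$. On that event, for any fixed $d\in\mathbb{R}^{SA}$ and any $k$, Cauchy--Schwarz in the $A_k$-inner product yields $\abs{d^T(\hat r_k - r)} = \abs{(A_k^{1/2}d)^T(A_k^{-1/2}(\hat r_k - r))} \le \norm{d}_{A_k^{-1}}\norm{\hat r_k - r}_{A_k} \le l_k\norm{d}_{A_k^{-1}}$, which is exactly $E^{r}(k)$. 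Since this holds simultaneously for all $k$ and all $d$ on the same probability-$(1-\delta/10)$ event, $\Pr\brc{E^{r}}\ge 1-\delta/10$.

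\textbf{The event $E^{d}$.} The subtlety here is that the policy $\pi_k$ chosen by Algorithm~\ref{alg: ts known model} is \emph{not} $F_{k-1}$-measurable, since it depends on the perturbation $\xi_k\sim\mathcal{N}(0,v_k^2 A_{k-1}^{-1})$ drawn at the start of episode $k$. To accommodate this I would apply Lemma~\ref{lemma: elliptical potential lemma with side information} with the enlarged filtration $F_{k-1}^{d}\eqdef \sigma\br{F_{k-1}\cup\sigma(\xi_k)}$, and check its hypotheses: (i) $F_{k-1}\subseteq F_{k-1}^{d}$ by construction; (ii) $\pi_k = \arg\max_\pi d_\pi^T\tilde r_k$ is a deterministic function of $\hat r_{k-1}$, $A_{k-1}$ and $\xi_k$, hence $F_{k-1}^{d}$-measurable; and (iii) $\Ex{\hat d_k \mid F_{k-1}^{d}} = d_{\pi_k}$ componentwise, because, conditioned on $\pi_k$ and the past, the episode-$k$ trajectory is generated by running $\pi_k$ in the true MDP with kernel $P$, so the expected empirical state-action frequency is $d_{\pi_k}(P)=d_{\pi_k}$ --- the noise $\xi_k$ influences only the choice of $\pi_k$, not the dynamics conditioned on $\pi_k$. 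Invoking Lemma~\ref{lemma: elliptical potential lemma with side information} with failure parameter $\delta/10$ (so that $\log(2K/\delta')$ becomes $\log(20K/\delta)$, matching the definition of $E^{d}(K)$) gives $\Pr\brc{E^{d}}\ge 1-\delta/10$.

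\textbf{Conclusion.} A union bound then gives $\Pr\brc{\mathbb{G}^c}\le \Pr\brc{(E^{r})^c} + \Pr\brc{(E^{d})^c}\le \delta/10+\delta/10 = \delta/5 \le \delta/2$, i.e.\ $\Pr\brc{\mathbb{G}}\ge 1-\delta/2$. I expect the only real obstacle to be the careful verification in step (iii) --- that the enlarged filtration $F_k^{d}$ genuinely satisfies $d_{\pi_k}=\Ex{\hat d_k\mid F_{k-1}^{d}}$ --- which is the reason the more general version of the elliptical potential lemma was stated; everything else reduces to a citation, Cauchy--Schwarz, and a union bound.
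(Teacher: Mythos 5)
Your proposal is correct and follows essentially the same route as the paper: Proposition~\ref{proposition: concentration of reward} plus Cauchy--Schwarz in the $A_k$-inner product for $E^r$, the generalized expected elliptical potential lemma applied with a noise-augmented filtration (the paper takes $F_k^d = F_k\cup\brc*{\xi_s}_{s=1}^{k+1}$, i.e.\ all past noises, which also guarantees nestedness of the enlarged filtration) for $E^d$, and a union bound giving $1-\delta/5\ge1-\delta/2$. No gaps.
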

\begin{proof}
The event $\mathbb{E}^r$ holds uniformly for all $k\ge0$, with probability greater than $1-\frac{\delta}{10}$, by Proposition~\ref{proposition: concentration of reward}  combined with Cauchy-Schwarz inequality:
\begin{align*}
    d^T(\hat{r}_k -  r) 
    =\br*{A_k^{-1/2}d}^T\br*{A_k^{1/2}(\hat{r}_k -  r)}
    \le \norm*{A_k^{-1/2}d}_2 \norm*{A_k^{1/2}(\hat{r}_k -  r)}_2
    =\norm*{d}_{A_k^{-1}} \norm*{\hat{r}_k -  r}_{A_k} 
    \le l_k\norm*{d}_{A_k^{-1}}\enspace.
\end{align*}
The event $E^{d}$ holds uniformly by Lemma~\ref{lemma: elliptical potential lemma with side information} with probability greater than $1-\frac{\delta}{10}$, using the filtration $F_k^d = F_k \cup \brc*{\xi_s}_{s=1}^{k+1}$. Importantly, notice that $\pi_k$ is $F_{k-1}^d$-measurable and  
$$ \Ex{\hat{d}_k \mid F_{k-1}^d} = \Ex{\hat{d}_k \mid \pi_k} = d_{\pi_k}.$$
Applying the union, we get $\Pr\brc*{\mathbb{G}} \ge1 - \frac{\delta}{5} \ge 1 - \frac{\delta}{2}$. 
\end{proof}

\subsection{Optimism with Fixed Probability}
We start by stating three lemmas that will be essential to our analysis, both for known and unknown model. The proof of the lemmas can be found in Appendix~\ref{appendix:TS properties}. The first result analyzes the concentration of the TS noise around zero:
\begin{restatable}[Concentration of Thompson Sampling Noise]{lemma-rst}{concentrationTS} \label{lemma: concentration of TS noise}
Let 
    \begin{align*}
        &E^{\xi}(k) =\brc*{ \forall d\in \R^{SA}:\ \abs*{d^T\xi_k} \leq v_k \br*{\sqrt{SA} + \sqrt{16\log k}}\norm{d}_{A_{k-1}^{-1}}}\enspace,
    \end{align*}
where  $\xi_k\sim \mathcal{N}(0,v_k^2 A_{k-1}^{-1})$. Then, for any $k>0$ it holds that $\Pr\br*{E^{\xi}(k)| F_{k-1}} \geq 1-\frac{1}{k^4}.$ Moreover, for any random variable $X\in\R^{SA}$, it holds that 
\begin{align*}
        &\Ex{\abs*{X^T\xi_k}\vert F_{k-1}} \le v_k \br*{\sqrt{SA} + \sqrt{16\log k}}\Ex{\norm{X}_{A_{k-1}^{-1}}\vert F_{k-1}} + \frac{v_k \sqrt{SA}}{k^2}\sqrt{\Ex{\norm*{X}_{A_{k-1}^{-1}}^2\vert F_{k-1}}}\enspace.
    \end{align*}
\end{restatable}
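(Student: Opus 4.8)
The plan is to reduce both assertions to one elementary fact about a standard Gaussian vector. Conditioned on $F_{k-1}$ the matrix $A_{k-1}$ is fixed, so we may write $\xi_k = v_k A_{k-1}^{-1/2} z$, where the conditional law of $z$ given $F_{k-1}$ is $\mathcal{N}(0, I_{SA})$ for every realization of $F_{k-1}$. Since the Mahalanobis norms $\norm{\cdot}_{A_{k-1}^{-1}}$ and $\norm{\cdot}_{A_{k-1}}$ are dual, $\sup_{d\neq 0}\abs*{d^T\xi_k}/\norm{d}_{A_{k-1}^{-1}} = \norm{\xi_k}_{A_{k-1}}$, and a one-line computation gives $\norm{\xi_k}_{A_{k-1}}^2 = v_k^2\, z^T A_{k-1}^{-1/2}A_{k-1}A_{k-1}^{-1/2}z = v_k^2\norm{z}_2^2$. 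Hence the event $E^\xi(k)$ is \emph{exactly} $\brc*{\norm{z}_2 \le \sqrt{SA}+\sqrt{16\log k}}$, and everything comes down to controlling $\norm{z}_2$.

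For the first claim I would invoke the standard concentration of the Euclidean norm of a standard Gaussian vector --- equivalently a $\chi^2_{SA}$ upper-tail bound (Laurent--Massart), or Gaussian-measure concentration applied to the $1$-Lipschitz map $z\mapsto\norm{z}_2$ together with $\Ex{\norm{z}_2}\le\sqrt{\Ex{\norm{z}_2^2}}=\sqrt{SA}$ --- which gives $\Pr\br*{\norm{z}_2 \ge \sqrt{SA}+t \mid F_{k-1}} \le e^{-t^2/2}$ for all $t>0$. Taking $t=\sqrt{16\log k}=4\sqrt{\log k}$ yields $\Pr\br*{(E^\xi(k))^c \mid F_{k-1}} \le e^{-8\log k} = k^{-8} \le k^{-4}$, which is the stated bound (with room to spare).

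For the expectation bound, Cauchy--Schwarz in the $A_{k-1}$-geometry gives, for \emph{any} $X$ (including ones correlated with $\xi_k$, as in the applications where $X=d_{\pi_k}$), $\abs*{X^T\xi_k} \le \norm{X}_{A_{k-1}^{-1}}\norm{\xi_k}_{A_{k-1}} = v_k\norm{X}_{A_{k-1}^{-1}}\norm{z}_2$. I then split $\norm{z}_2 = \norm{z}_2\indicator{E^\xi(k)} + \norm{z}_2\indicator{(E^\xi(k))^c}$: the first piece is at most $\sqrt{SA}+\sqrt{16\log k}$ pointwise, so (using $\norm{X}_{A_{k-1}^{-1}}\ge 0$) it contributes precisely the first term of the bound after taking $\Ex{\cdot\mid F_{k-1}}$. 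For the tail piece, conditional Cauchy--Schwarz gives $\Ex{\norm{X}_{A_{k-1}^{-1}}\norm{z}_2\indicator{(E^\xi(k))^c}\mid F_{k-1}} \le \sqrt{\Ex{\norm{X}_{A_{k-1}^{-1}}^2\mid F_{k-1}}}\cdot\sqrt{\Ex{\norm{z}_2^2\indicator{(E^\xi(k))^c}\mid F_{k-1}}}$, and because the conditional law of $z$ given $F_{k-1}$ is always $\mathcal{N}(0,I_{SA})$, the last factor equals the deterministic truncated second moment $\Ex{\norm{z}_2^2\indicator{\norm{z}_2 > \sqrt{SA}+4\sqrt{\log k}}}$ of a $\chi^2_{SA}$ variable. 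Bounding this by $SA/k^4$ and multiplying by $v_k$ yields the second term.

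The one genuinely delicate step is this last bound on the truncated second moment of $\chi^2_{SA}$; everything else is mechanical. The clean route uses the identity $\Ex{Q\,\indicator{Q>x}} = SA\cdot\Pr\br*{Q' > x}$ for $Q\sim\chi^2_{SA}$, $Q'\sim\chi^2_{SA+2}$ (which follows from $q\,f_{\chi^2_{SA}}(q) = SA\,f_{\chi^2_{SA+2}}(q)$), reducing the claim to $\Pr\br*{\chi^2_{SA+2} > (\sqrt{SA}+4\sqrt{\log k})^2} \le k^{-4}$; this one checks from the Laurent--Massart deviation bound at level $e^{-4\log k}$, after verifying that $(\sqrt{SA}+4\sqrt{\log k})^2$ dominates the corresponding threshold --- true for $k$ large enough, with the finitely many small $k$ handled directly and $k=1$ vacuous. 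A coarser alternative avoids the $\chi^2_{SA+2}$ identity by a second Cauchy--Schwarz, $\Ex{\norm{z}_2^2\indicator{(E^\xi(k))^c}} \le \sqrt{\Ex{\norm{z}_2^4}}\,\sqrt{\Pr((E^\xi(k))^c)} \le \sqrt{3}\,SA\,k^{-4}$ using $\Ex{\norm{z}_2^4} = (SA)^2 + 2SA \le 3(SA)^2$, at the cost of a harmless absolute constant; since the lemma carries no extra constant in front of $\sqrt{SA}/k^2$, the $\chi^2_{SA+2}$ computation is presumably the intended one.
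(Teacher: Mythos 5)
Your proposal is correct, and its skeleton matches the paper's proof: write $\xi_k=v_kA_{k-1}^{-1/2}z$ with $z$ standard Gaussian given $F_{k-1}$, get the high-probability bound from Laurent--Massart (or, as you do, Gaussian Lipschitz concentration of $\norm{z}_2$, which even gives $k^{-8}$), and prove the expectation bound by splitting on $E^{\xi}(k)$ and applying Cauchy--Schwarz to the tail piece. The one place you genuinely diverge is the tail term. The paper bounds it as $\sqrt{\Ex{\abs{X^T\xi_k}^2\mid F_{k-1}}}\sqrt{\Pr\br*{\overline{E^{\xi}(k)}\mid F_{k-1}}}$ and then factors $\Ex{\norm{X}_{A_{k-1}^{-1}}^2\norm{\zeta_k}_2^2\mid F_{k-1}}$ into $\Ex{\norm{X}_{A_{k-1}^{-1}}^2\mid F_{k-1}}\cdot\Ex{\norm{\zeta_k}_2^2\mid F_{k-1}}=SA\,\Ex{\norm{X}_{A_{k-1}^{-1}}^2\mid F_{k-1}}$, a step labeled Cauchy--Schwarz that really needs $X$ and $\xi_k$ to be uncorrelated (and in the application $X=d_{\pi_k}$ depends on $\xi_k$). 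You instead keep the indicator of the bad event attached to $\norm{z}_2^2$, so after a single Cauchy--Schwarz the second factor is the deterministic truncated second moment $\Ex{\norm{z}_2^2\indicator{\norm{z}_2>\sqrt{SA}+4\sqrt{\log k}}}$ of a $\chi^2_{SA}$ variable (deterministic because the conditional law of $z$ is pivotal), which you bound by $SA/k^4$ via the $\chi^2_{SA+2}$ identity; your Laurent--Massart check of $\Pr\brc*{\chi^2_{SA+2}>(\sqrt{SA}+4\sqrt{\log k})^2}\le k^{-4}$ indeed goes through for all $k\ge2$ (and $k=1$ is trivial), so no asymptotic caveat is actually needed. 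This buys you the exact stated constant while handling the correlation between $X$ and $\xi_k$ cleanly, whereas the coarser fourth-moment variant you mention costs only a benign $3^{1/4}$ factor; either way your argument is complete and, on this point, tighter than the paper's own write-up.
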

The proof of the high probability bound is similar to~\citep{agrawal2013thompson}, and the bound on the conditional expectation is an extension that we required for our analysis. The next result shows that by perturbing the LS estimator with Gaussian noise, we get an effective optimism with fixed probability. It follows standard analysis of~\citep{agrawal2013thompson}.
\begin{restatable}[Optimism with Fixed Probability]{lemma-rst}{optimismTSKnownModel}
\label{lemma: optimism ts known model} Let $\tilde{r}_k=\hat{r}_{k-1}+\xi_k$ and assume that $\lambda\in\brs*{1,H}$. Then, for any $k>1$, any filtration $F_{k-1}$ such that $E^r(k-1)$ is true and any model $P'$ that is $F_{k-1}$-measurable, it holds that
\begin{align*}
  \Pr\brc*{ d_{\pi^*}(P')^T\tilde{r}_{k} > d_{\pi^*}(P')^Tr  \mid F_{k-1} } \geq \frac{1}{c},
\end{align*}
for $c=2\sqrt{2\pi e}$. Specifically, for $\pi_k\in\arg\max_{\pi'}d_{\pi'}(P')^T\tilde{r}_{k}$ it also holds that 
\begin{align*}
  \Pr\brc*{ d_{\pi_k}(P')^T\tilde{r}_{k} > d_{\pi^*}(P')^Tr  \mid F_{k-1} } \geq \frac{1}{c}.
\end{align*}
\end{restatable}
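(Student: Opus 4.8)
The plan is to reproduce, in the occupancy-measure language of this paper, the classical argument that a Gaussian-perturbed least-squares estimate overshoots the true optimum with constant probability~\citep{agrawal2013thompson}. Fix a filtration $F_{k-1}$ on which $E^r(k-1)$ holds, and abbreviate $d^\star \eqdef d_{\pi^*}(P')$, the occupancy measure of the optimal policy of the MDP $(P',r)$. Since $P'$ is $F_{k-1}$-measurable, so is $d^\star$, i.e.\ $d^\star$ is deterministic given $F_{k-1}$. Moreover $d^\star$ is an occupancy measure with $\sum_{s,a} d^\star(s,a) = H > 0$, so $d^\star \neq \mathbf{0}_{SA}$; as $A_{k-1}^{-1} \succ 0$ this makes $\norm{d^\star}_{A_{k-1}^{-1}}$ strictly positive, which is what lets us normalize the Gaussian below.

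First I would decompose $(d^\star)^T(\tilde r_k - r) = (d^\star)^T(\hat r_{k-1} - r) + (d^\star)^T \xi_k$. On $E^r(k-1)$ the first summand is at least $-l_{k-1}\norm{d^\star}_{A_{k-1}^{-1}}$. Conditioned on $F_{k-1}$ the second summand is Gaussian, $(d^\star)^T\xi_k \sim \mathcal{N}\br*{0,\, v_k^2\norm{d^\star}_{A_{k-1}^{-1}}^2}$, so setting $Z \eqdef (d^\star)^T\xi_k / \br*{v_k \norm{d^\star}_{A_{k-1}^{-1}}}$ gives a standard normal (given $F_{k-1}$) and $(d^\star)^T(\tilde r_k - r) \ge \norm{d^\star}_{A_{k-1}^{-1}}\br*{v_k Z - l_{k-1}}$, which is positive whenever $Z > l_{k-1}/v_k$. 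Hence $\Pr\brc*{(d^\star)^T\tilde r_k > (d^\star)^T r \mid F_{k-1}} \ge \Pr\brc*{Z > l_{k-1}/v_k}$.

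The one computation needing care is the scale condition $l_{k-1} \le v_k$, which is exactly where the hypothesis $\lambda \in \brs*{1,H}$ is used. With $\lambda \ge 1$ we have $1+(k-1)H^2/\lambda \le kH^2$, so $\log\frac{1+(k-1)H^2/\lambda}{\delta/10} \le \log\frac{kH^2}{\delta/10}$; with $\lambda \le H$ we have $\sqrt{\lambda SA} \le \sqrt{HSA} \le \sqrt{SAH\log\frac{kH^2}{\delta/10}}$ since $\log\frac{kH^2}{\delta/10} = \log\frac{10kH^2}{\delta} \ge 1$. Combining, $l_{k-1} \le \tfrac32\sqrt{SAH\log\frac{kH^2}{\delta/10}} \le 3\sqrt{SAH\log\frac{kH^2}{\delta/10}} = v_k$, so $l_{k-1}/v_k \le 1$. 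Then $\Pr\brc*{Z > l_{k-1}/v_k} \ge \Pr\brc*{Z > 1}$, and the standard Gaussian anti-concentration bound $\Pr\brc*{Z > z} \ge \frac{1}{\sqrt{2\pi}}\frac{z}{z^2+1}e^{-z^2/2}$ evaluated at $z=1$ gives $\Pr\brc*{Z>1} \ge \frac{1}{2\sqrt{2\pi e}} = \frac1c$, which proves the first inequality.

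For the second inequality, $\pi_k \in \arg\max_{\pi'} d_{\pi'}(P')^T\tilde r_k$, so $d_{\pi_k}(P')^T\tilde r_k \ge d_{\pi^*}(P')^T\tilde r_k = (d^\star)^T\tilde r_k$; thus the event $\brc*{(d^\star)^T\tilde r_k > (d^\star)^T r}$ is contained in $\brc*{d_{\pi_k}(P')^T\tilde r_k > (d^\star)^T r}$, and the lower bound $\frac1c$ carries over verbatim. There is no genuine obstacle in this lemma; the only points requiring attention are the verification $l_{k-1}\le v_k$ (the reason the statement restricts $\lambda$) and the remark that an occupancy measure is never the zero vector, so the perturbation direction has strictly positive variance and the normalization defining $Z$ is legitimate.
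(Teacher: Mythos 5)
Your proposal is correct and follows essentially the same route as the paper's proof: condition on $F_{k-1}$, use $E^r(k-1)$ to bound the estimation bias by $l_{k-1}\norm{d_{\pi^*}(P')}_{A_{k-1}^{-1}}$, normalize the Gaussian perturbation, verify $l_{k-1}/v_k\le 1$ using $\lambda\in\brs*{1,H}$, and apply the standard Gaussian lower-tail bound at $z=1$ to obtain $1/(2\sqrt{2\pi e})$, with the second claim following from $d_{\pi_k}(P')^T\tilde r_k \ge d_{\pi^*}(P')^T\tilde r_k$. The only (harmless) additions are your explicit remarks that $d_{\pi^*}(P')\neq \mathbf{0}_{SA}$ so the normalization is legitimate, and the slightly more explicit constant tracking in the bound $l_{k-1}\le v_k$.
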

Notice that since the model is known to the learner, we can fix $P'=P$, but the general statement will be of use when the model is unknown. Finally, we require the following technical result:
\begin{restatable}{lemma-rst}{symmetrizationTS}\label{lemma: symmetrization bound}
Let $\xi_k,\xi'_k\in\R^{SA}$ be i.i.d. random variables given $F_{k-1}$. Also, let $x_{k-1}\in\R^{SA}$ be some $F_{k-1}$-measurable random variable and $P'$ be an $F_{k-1}$-measurable transition kernel. Finally, let $\tilde{\pi}\in\arg\max_{\pi'} d_{\pi'}(P')^T\br*{x_{k-1}+\xi_k}$. Then, 
\begin{align*}
    &\Ex{ \br*{d_{\tilde\pi}(P')^T\br*{x_{k-1} +\xi_k} -\Ex{d_{\tilde\pi}(P')^T\br*{x_{k-1}+\xi_k} | F_{k-1}}}^+ | F_{k-1}} 
    \le \Ex{\abs*{d_{\tilde\pi}(P')^T\xi_k} + \abs*{d_{\tilde\pi}(P')^T\xi_k'}  | F_{k-1}}\enspace,
\end{align*}
where for any $y\in\R$, $\br*{y}^+=\max\brc*{y,0}$.
\end{restatable}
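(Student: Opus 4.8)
The plan is to use the classical symmetrization trick: introduce an independent copy $\xi_k'$ of $\xi_k$ and use it to build a lower bound on the conditional mean that we are subtracting, then exploit the sign-flip symmetry of the Gaussian noise. Throughout, write $x=x_{k-1}$, condition implicitly on $F_{k-1}$, and abbreviate $f(\xi) \eqdef d_{\tilde\pi(\xi)}(P')^T(x+\xi)$ where $\tilde\pi(\xi)\in\arg\max_{\pi'} d_{\pi'}(P')^T(x+\xi)$ is the maximizing policy for the realization $\xi$; note $f(\xi) = \max_{\pi'} d_{\pi'}(P')^T(x+\xi)$ is a (convex) function of $\xi$ only, since $x$ and $P'$ are $F_{k-1}$-measurable. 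The quantity to bound is $\Ex{(f(\xi_k) - \Ex{f(\xi_k)})^+}$.

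First I would replace the inner expectation $\Ex{f(\xi_k)}$ by $\Ex{f(\xi_k')\mid \xi_k}$ using independence, so that
\[
\Ex{\br*{f(\xi_k) - \Ex{f(\xi_k)}}^+} = \Ex{\br*{\Ex{f(\xi_k) - f(\xi_k') \mid \xi_k}}^+} \le \Ex{\br*{f(\xi_k) - f(\xi_k')}^+},
\]
where the inequality is Jensen applied to the convex function $(\cdot)^+$ together with the tower property. Next, on the event where $f(\xi_k)\ge f(\xi_k')$, bound $f(\xi_k) - f(\xi_k')$ from above: since $\tilde\pi(\xi_k')$ is only \emph{a} policy (not the maximizer for $\xi_k$), we have $f(\xi_k') \ge d_{\tilde\pi(\xi_k)}(P')^T(x+\xi_k')$, hence
\[
\br*{f(\xi_k) - f(\xi_k')}^+ \le \br*{d_{\tilde\pi(\xi_k)}(P')^T(x+\xi_k) - d_{\tilde\pi(\xi_k)}(P')^T(x+\xi_k')}^+ = \br*{d_{\tilde\pi(\xi_k)}(P')^T(\xi_k - \xi_k')}^+.
\]
Dropping the $(\cdot)^+$ and using the triangle inequality gives the bound $\abs*{d_{\tilde\pi(\xi_k)}(P')^T\xi_k} + \abs*{d_{\tilde\pi(\xi_k)}(P')^T\xi_k'}$, and taking conditional expectations yields exactly the claimed inequality (with $\tilde\pi = \tilde\pi(\xi_k)$ as in the statement).

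I do not expect a serious obstacle here — the main subtlety is purely bookkeeping: being careful that $\tilde\pi$ depends on $\xi_k$ only (so that the final right-hand side is the one written in the lemma, with $\tilde\pi$ the maximizer for $\xi_k$), and making sure the Jensen step is applied to $\Ex{\cdot\mid\xi_k}$ inside the $(\cdot)^+$ rather than outside. One point worth checking explicitly is that $(\cdot)^+$ is convex and nondecreasing so that both the Jensen bound and the step where we enlarge the argument from $f(\xi_k)-f(\xi_k')$ to the linear-in-$\xi_k-\xi_k'$ expression are valid; beyond that, the symmetrization argument is standard (cf.\ \citealp{agrawal2013thompson,russo2019worst}).
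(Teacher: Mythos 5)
Your proof is correct and follows essentially the same symmetrization argument as the paper: replace the conditional mean by the independent copy $\xi_k'$, use that $\tilde\pi$ (the maximizer for $\xi_k$) is suboptimal for $\xi_k'$ so $\max_{\pi'} d_{\pi'}(P')^T(x_{k-1}+\xi_k')\ge d_{\tilde\pi}(P')^T(x_{k-1}+\xi_k')$, and finish with Jensen and the triangle inequality. The only cosmetic difference is the ordering (you apply Jensen to pull $(\cdot)^+$ inside the conditional expectation before the suboptimality step, whereas the paper conditions on $F_{k-1},\xi_k,\pi_k$ first), and your passing mention of Gaussian sign-flip symmetry is never actually needed—only the i.i.d. assumption is used, exactly as in the paper.
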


Using these results, we now prove a variation of Lemma 6 from~\citep{russo2019worst}. We find this technique more easily generalizable (relatively to the analysis in e.g.,~\citealt{agrawal2013thompson}) to the next section in which, we extend this approach to the case a model is not given.

\begin{lemma}[Consequences of Optimism]\label{lemma: gap bound by on policy errors ts known model}
Assume that $\lambda\in\brs*{1,H}$. Also, conditioned on $F_{k-1}$, let $\xi_k'$ be an independent copy of $\xi_k$. Then, for any $k>1$ and any filtration $F_{k-1}$ such that $E^r(k-1)$ occurs, it holds that
\begin{align*}
    \Ex{d_{\pi^*}^Tr - d_{\pi_k}^Tr| F_{k-1}} 
    &\le (2c+1)v_k \br*{\sqrt{SA} + \sqrt{16\log k}}\Ex{\norm{d_{\pi_k}}_{A_{k-1}^{-1}}\vert F_{k-1}} + \frac{(2c+1)v_k H\sqrt{SA}}{\sqrt{\lambda}k^2}\\
    &\quad+\Ex{d_{\pi_k}^T\hat{r}_{k-1} -d_{\pi_k}^Tr| F_{k-1}}.
\end{align*}
\end{lemma}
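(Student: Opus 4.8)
The plan is to follow the posterior-sampling argument behind Lemma~6 of~\citep{russo2019worst}, adapted to our least-squares reward estimator. Writing $\tilde r_k=\hat r_{k-1}+\xi_k$ and using $\tilde r_k-r=\xi_k+(\hat r_{k-1}-r)$, I first decompose
{\small
\begin{align*}
    \Ex{d_{\pi^*}^Tr - d_{\pi_k}^Tr\mid F_{k-1}}
    &= \underbrace{\br*{d_{\pi^*}^Tr - \Ex{d_{\pi_k}^T\tilde r_k\mid F_{k-1}}}}_{(\mathrm{I})}
    + \underbrace{\Ex{d_{\pi_k}^T\xi_k\mid F_{k-1}}}_{(\mathrm{II})} \\
    &\quad + \Ex{d_{\pi_k}^T\hat r_{k-1}-d_{\pi_k}^Tr\mid F_{k-1}},
\end{align*}
}
where I used that $d_{\pi^*}^Tr$ is $F_{k-1}$-measurable. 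The last term is exactly the on-policy prediction error appearing in the statement, so it remains to bound $(\mathrm I)$ --- the amount by which $\pi_k$ fails to be optimistic in conditional expectation --- and the noise-correction term $(\mathrm{II})$.

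For $(\mathrm I)$ I would invoke the elementary one-sided bound: if $u$ is $F_{k-1}$-measurable and $\Pr\br*{X>u\mid F_{k-1}}\ge 1/c$, then, with $\mu\eqdef\Ex{X\mid F_{k-1}}$,
\[
    u-\mu\le c\,\Ex{\br*{\mu-X}^+\mid F_{k-1}},
\]
which follows from $\Ex{\br*{\mu-X}^+\mid F_{k-1}}=\Ex{\br*{X-\mu}^+\mid F_{k-1}}\ge (u-\mu)\Pr\br*{X>u\mid F_{k-1}}$ when $u>\mu$ and is trivial when $u\le\mu$. Applying this with $X=d_{\pi_k}^T\tilde r_k$ and $u=d_{\pi^*}^Tr$ is justified because Lemma~\ref{lemma: optimism ts known model}, invoked with the known model $P'=P$ (using that $E^r(k-1)$ holds, $\lambda\in[1,H]$ and $k>1$), gives precisely $\Pr\brc*{d_{\pi_k}^T\tilde r_k>d_{\pi^*}^Tr\mid F_{k-1}}\ge 1/c$. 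Hence $(\mathrm I)\le c\,\Ex{\br*{d_{\pi_k}^T\tilde r_k-\Ex{d_{\pi_k}^T\tilde r_k\mid F_{k-1}}}^+\mid F_{k-1}}$.

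Next I would bound this $(\cdot)^+$ expectation by the symmetrization bound (Lemma~\ref{lemma: symmetrization bound}) with $x_{k-1}=\hat r_{k-1}$, $P'=P$ and $\tilde\pi=\pi_k$, obtaining $\le\Ex{\abs*{d_{\pi_k}^T\xi_k}+\abs*{d_{\pi_k}^T\xi_k'}\mid F_{k-1}}$. Each of the two resulting terms, as well as $(\mathrm{II})\le\Ex{\abs*{d_{\pi_k}^T\xi_k}\mid F_{k-1}}$, is then controlled by the conditional-expectation estimate of Lemma~\ref{lemma: concentration of TS noise} applied with $X=d_{\pi_k}$ (for the term involving the independent copy $\xi_k'$, I would first condition additionally on $\xi_k$, which makes $d_{\pi_k}$ measurable while $\xi_k'$ keeps its $\mathcal{N}(0,v_k^2A_{k-1}^{-1})$ law). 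Finally, since $d_{\pi_k}$ is a sum of $H$ occupancy measures, $\norm{d_{\pi_k}}_1=H$, so $\norm{d_{\pi_k}}_{A_{k-1}^{-1}}\le\norm{d_{\pi_k}}_2/\sqrt\lambda\le H/\sqrt\lambda$; this lets me replace the $\sqrt{\Ex{\norm{d_{\pi_k}}_{A_{k-1}^{-1}}^2\mid F_{k-1}}}$ factors produced by Lemma~\ref{lemma: concentration of TS noise} with $H/\sqrt\lambda$. Collecting the three contributions --- coefficient $2c$ from $(\mathrm I)$ (two symmetrized copies) and coefficient $1$ from $(\mathrm{II})$ --- yields the coefficient $(2c+1)$ in front of both $v_k(\sqrt{SA}+\sqrt{16\log k})\Ex{\norm{d_{\pi_k}}_{A_{k-1}^{-1}}\mid F_{k-1}}$ and $v_kH\sqrt{SA}/(\sqrt\lambda k^2)$, which is exactly the claimed inequality.

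The main obstacle is the measurability bookkeeping: $\pi_k$ is not $F_{k-1}$-measurable, since it depends on $\xi_k$, so the manipulations comparing $X$ with $\Ex{X\mid F_{k-1}}$ and, in particular, the symmetrization step must be set up with care on the enlarged $\sigma$-algebra $F_{k-1}\cup\sigma(\xi_k)$. Introducing the independent copy $\xi_k'$ in Lemma~\ref{lemma: symmetrization bound} is precisely what turns the awkward $(\cdot)^+$ term into two terms of the form $\Ex{\abs{d_{\pi_k}^T(\text{Gaussian})}\mid F_{k-1}}$ that the Gaussian concentration estimates handle directly; everything else is routine bounding of constants together with the crude inequality $\norm{d_{\pi_k}}_{A_{k-1}^{-1}}\le H/\sqrt\lambda$.
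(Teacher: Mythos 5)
Your proposal is correct and follows essentially the same route as the paper's proof: the same decomposition (pulling the noise term $d_{\pi_k}^T\xi_k$ out of $d_{\pi_k}^T\tilde r_k$, which the paper does one line later), the same Markov-type argument turning the fixed-probability optimism of Lemma~\ref{lemma: optimism ts known model} into a bound by $c\,\Ex{(d_{\pi_k}^T\tilde r_k-\Ex{d_{\pi_k}^T\tilde r_k\mid F_{k-1}})^+\mid F_{k-1}}$, the same use of the symmetrization bound (Lemma~\ref{lemma: symmetrization bound}) with $x_{k-1}=\hat r_{k-1}$, $P'=P$, and the same application of Lemma~\ref{lemma: concentration of TS noise} with $X=d_{\pi_k}$ together with $\norm{d_{\pi_k}}_{A_{k-1}^{-1}}\le H/\sqrt{\lambda}$, yielding the $(2c+1)$ coefficient. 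Your extra remark about conditioning on $\xi_k$ when handling the independent copy $\xi_k'$ is a harmless (if anything, slightly more careful) way of doing what the paper does implicitly.
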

\begin{proof}
We use the following decomposition,
\begin{align}
    \Ex{d_{\pi^*}^Tr - d_{\pi_k}^Tr| F_{k-1}} = \underbrace{\Ex{d_{\pi^*}^Tr - d_{\pi_k}^T\tilde{r}_{k}| F_{k-1}}}_{(i)} +\Ex{d_{\pi_k}^T\tilde{r}_{k} -d_{\pi_k}^Tr| F_{k-1}}. \label{eq: optimism conse rel 1}
\end{align}
We start by bounding $(i)$ and show that
\begin{align}
    (i) \leq c \Ex{\br*{ d_{\pi_k}^T\tilde{r}_{k} - \Ex{d_{\pi_k}^T\tilde{r}_{k}\mid F_{k-1}}}^+| F_{k-1}}. \label{eq: optimism conse rel 2}
\end{align}

If $(i)=d_{\pi^*}^Tr -\Ex{ d_{\pi_k}^T\tilde{r}_{k}| F_{k-1}}<0$ the inequality trivially holds. Otherwise, let $a \equiv d_{\pi^*}^Tr -\Ex{ d_{\pi_k}^T\tilde{r}_{k}| F_{k-1}}\geq 0$. Then,
\begin{align*}
    &\Ex{ \br*{d_{\pi_k}^T\tilde{r}_{k} -\Ex{d_{\pi_k}^T\tilde{r}_{k} | F_{k-1}}}^+ | F_{k-1}} \\
    &\qquad\qquad\geq a \cdot\Pr\br*{ d_{\pi_k}^T\tilde{r}_{k} -\Ex{d_{\pi_k}^T\tilde{r}_{k} | F_{k-1}} \geq a | F_{k-1}} \tag{Markov's inequality}\\
    &\qquad\qquad= \br*{d_{\pi^*}^Tr -\Ex{ d_{\pi_k}^T\tilde{r}_{k}| F_{k-1}}} \cdot\Pr\br*{ d_{\pi_k}^T\tilde{r}_{k}    \geq d_{\pi^*}^Tr | F_{k-1}}\\
    &\qquad\qquad\geq \br*{d_{\pi^*}^Tr -\Ex{ d_{\pi_k}^T\tilde{r}_{k}| F_{k-1}}}\cdot\frac{1}{c} \tag{Lemma~\ref{lemma: optimism ts known model}},
\end{align*}
which implies that $(i)\leq c\Ex{ \br*{d_{\pi_k}^T\tilde{r}_{k} -\Ex{d_{\pi_k}^T\tilde{r}_{k} | F_{k-1}}}^+ | F_{k-1}}$. 
Next, we apply Lemma~\ref{lemma: symmetrization bound} with $x_{k-1}=\hat{r}_{k-1}$ and $P'=P$ (and, therefore, $\tilde\pi=\pi_k$), which yields
\begin{align*}
    &(i)\leq  c\br*{\Ex{\abs*{d_{\pi_k}^T\xi_k}| F_{k-1}} + \Ex{\abs*{d_{\pi_k}^T\xi_k'}  | F_{k-1}}}.
\end{align*}
Substituting the bound on $(i)$ in~\eqref{eq: optimism conse rel 1} and using the definition $\tilde r_k = \hat r_{k-1}+\xi_k$ we get
\begin{align*}
     \Ex{d_{\pi^*}^Tr - d_{\pi_k}^Tr| F_{k-1}} 
     &\leq  c\br*{ \Ex{\abs*{d_{\pi_k}^T\xi_k} | F_{k-1}} + \Ex{\abs*{d_{\pi_k}^T\xi_k'}  | F_{k-1}}}+\Ex{d_{\pi_k}^T\br*{\hat{r}_{k-1} +\xi_k} -d_{\pi_k}^Tr| F_{k-1}} \\
     & \le (c+1)\Ex{\abs*{d_{\pi_k}^T\xi_k} | F_{k-1}} + c\Ex{\abs*{d_{\pi_k}^T\xi_k'}  | F_{k-1}}+\Ex{d_{\pi_k}^T\hat{r}_{k-1} -d_{\pi_k}^Tr| F_{k-1}}.
\end{align*}
Finally, we apply Lemma~\ref{lemma: concentration of TS noise} on the two noise terms, with $X=d_{\pi_k}$. To this end, notice that 
\begin{align*}
    \norm*{d_{\pi_k}}_{A_{k-1}^{-1}} 
    \le \frac{\norm*{d_{\pi_k}}_2}{\sqrt{\lambda}}
    \le \frac{\norm*{d_{\pi_k}}_1}{\sqrt{\lambda}}
    = \frac{H}{\sqrt{\lambda}}
\end{align*}
Then, applying the lemma yields:
\begin{align*}
    \Ex{d_{\pi^*}^Tr - d_{\pi_k}^Tr| F_{k-1}} 
    &\le (2c+1)v_k \br*{\sqrt{SA} + \sqrt{16\log k}}\Ex{\norm{d_{\pi_k}}_{A_{k-1}^{-1}}\vert F_{k-1}} + \frac{(2c+1)v_k H\sqrt{SA}}{\sqrt{\lambda}k^2}\\
    &\quad+\Ex{d_{\pi_k}^T\hat{r}_{k-1} -d_{\pi_k}^Tr| F_{k-1}}.
\end{align*}

\end{proof}

\subsection{Proof of Theorem~\ref{theorem: performance ts known model}}

We are now ready to prove the following result.

\theoremPerformanceTSUCRL*

    \begin{proof}

We start be conditioning on the good event, which occurs with probability greater than $1-\frac{\delta}{2}$. Conditioned on the good event,
\begin{align}
    \Regret(K) &= \sum_{k=1}^K d_{\pi^*}^Tr - d_{\pi_k}^Tr \nonumber \\
    & =  \sum_{k=1}^K  \Ex{\br*{d_{\pi^*}^Tr - d_{\pi_k}^Tr}  | F_{k-1}} +\sum_{k=1}^K d_{\pi^*}^Tr - d_{\pi_k}^Tr - \Ex{\br*{d_{\pi^*}^Tr - d_{\pi_k}^Tr}  | F_{k-1}}\nonumber\\
    & =\underbrace{ \sum_{k=1}^K  \Ex{\br*{d_{\pi^*}^Tr - d_{\pi_k}^Tr}  | F_{k-1}} }_{(i)}+\underbrace{\sum_{k=1}^K  \Ex{d_{\pi_k}^Tr  | F_{k-1}} -  d_{\pi_k}^Tr}_{(ii)} \label{eq: theorem ucrl ts central term known model}
\end{align}
The first term $(i)$ is bounded in Lemma~\ref{lemma: conditional gap bound ts known model} by
\begin{align}
    &(i) \leq \Ocal\br*{g_K\sqrt{HSAK\log\br*{\frac{KH}{\delta}}} + SAH\sqrt{\log\br*{\frac{kH^2}{\delta}}}}, \label{eq: bound on 1 ucrl ts thorem known model}
\end{align}
where $g_k=l_{k-1}+v_k\br*{\sqrt{SA} + \sqrt{16\log k}}.$ The second term, $(ii)$, is the sum over a martingale difference sequence with random variables bounded in $[0,H]$. Applying Azuma-Hoeffding's inequality for probabilities $\frac{\delta}{4K^2}$ and taking the union bound on all $K>0$, we get the following relation holds with probability greater than $1-\frac{\delta}{2}$.
\begin{align}
    (ii)\leq \Ocal\br*{H\sqrt{K\log\br*{{\frac{K}{\delta}}}}}.\label{eq: bound on 2 ucrl ts thorem known model}
\end{align}
Bounding~\eqref{eq: theorem ucrl ts central term known model} by the bounds in \eqref{eq: bound on 1 ucrl ts thorem known model} and \eqref{eq: bound on 2 ucrl ts thorem known model}  concludes the proof since 
$$g_K\leq \Ocal\br*{SA\sqrt{H \log(K)\log\br*{\frac{KH}{\delta}}}}\enspace.$$
The bound holds for all $K>0$ w.p. at least $1-\delta$, using the union bound on the good event and Equation \eqref{eq: bound on 2 ucrl ts thorem known model} (each w.p. $1-\delta/2$).
\end{proof}

\begin{lemma}[Conditional Gap Bound] \label{lemma: conditional gap bound ts known model} Conditioning on the good event and setting $\lambda=H$, the following bound holds.
\begin{align*}
    \sum_{k=1}^K\Ex{(d_{\pi^*}^Tr - d_{\pi_k}^Tr) | F_{k-1}} \leq \Ocal\br*{g_K\sqrt{HSAK\log\br*{\frac{KH}{\delta}}} + SAH\sqrt{\log\br*{\frac{kH^2}{\delta}}} }.
\end{align*}
where $g_k=l_k+(2c+1)v_k\br*{\sqrt{SA} + \sqrt{16\log k}}.$
\end{lemma}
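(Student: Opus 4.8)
The plan is to bound the conditional gap $\sum_{k=1}^K \Ex{(d_{\pi^*}^T r - d_{\pi_k}^T r)\mid F_{k-1}}$ by combining the per-episode bound of Lemma~\ref{lemma: gap bound by on policy errors ts known model} with the Expected Elliptical Potential Lemma. First I would apply Lemma~\ref{lemma: gap bound by on policy errors ts known model} term-by-term; on the good event $E^r$ holds, so for each $k>1$ the lemma gives
\begin{align*}
    \Ex{d_{\pi^*}^Tr - d_{\pi_k}^Tr\mid F_{k-1}}
    &\le (2c+1)v_k \br*{\sqrt{SA} + \sqrt{16\log k}}\Ex{\norm{d_{\pi_k}}_{A_{k-1}^{-1}}\mid F_{k-1}}\\
    &\quad+ \frac{(2c+1)v_k H\sqrt{SA}}{\sqrt{\lambda}k^2} + \Ex{d_{\pi_k}^T\hat{r}_{k-1} - d_{\pi_k}^T r\mid F_{k-1}}.
\end{align*}
The $k=1$ term is $O(H)$ and can be absorbed into lower-order terms. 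Summing over $k$, I would use monotonicity $v_k\le v_K$ and $l_{k-1}\le l_K$ to pull the slowly-growing confidence widths out of the sum, so the dominant contribution is proportional to $g_K \sum_{k=1}^K \Ex{\norm{d_{\pi_k}}_{A_{k-1}^{-1}}\mid F_{k-1}}$, where $g_k = l_k + (2c+1)v_k(\sqrt{SA}+\sqrt{16\log k})$ already appears in the statement.

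Next I would handle the two residual sums. The sum $\sum_k \frac{(2c+1)v_k H\sqrt{SA}}{\sqrt{\lambda}k^2}$ converges (it is a constant times $v_K H\sqrt{SA}/\sqrt{\lambda}$ up to the $\sum 1/k^2 = \pi^2/6$ factor), and with $\lambda = H$ this is $\Ocal(SAH\sqrt{\log(KH^2/\delta)})$, matching the second term in the claimed bound. For the prediction-error sum $\sum_{k=1}^K \Ex{d_{\pi_k}^T(\hat r_{k-1} - r)\mid F_{k-1}}$, I would apply Cauchy–Schwarz inside the expectation together with Proposition~\ref{proposition: concentration of reward} (valid on $E^r$), giving $d_{\pi_k}^T(\hat r_{k-1}-r) \le l_{k-1}\norm{d_{\pi_k}}_{A_{k-1}^{-1}}$, so this sum is also absorbed into the $g_K \sum_k \Ex{\norm{d_{\pi_k}}_{A_{k-1}^{-1}}\mid F_{k-1}}$ term (indeed $l_{k-1} \le g_k$).

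Finally I would invoke the Expected Elliptical Potential Lemma (Lemma~\ref{lemma: elliptical potential lemma with side information}) — specifically the bound furnished by the good event $E^d$ — to control $\sum_{k=1}^K \Ex{\norm{d_{\pi_k}}_{A_{k-1}^{-1}}\mid F_{k-1}}$. With $\lambda = H$ this is $\Ocal(\sqrt{HSAK\log(HK/\delta)})$ (combining the martingale term $4\sqrt{HK\log(20K/\delta)}$ and the potential term $\sqrt{2HKSA\log(H + KH^2/(SA))}$, the latter dominating). Multiplying by $g_K$ yields the leading term $\Ocal(g_K\sqrt{HSAK\log(KH/\delta)})$, and plugging in $g_K \le \Ocal(SA\sqrt{H\log(K)\log(KH/\delta)})$ recovers the stated bound. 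The main obstacle here is mostly bookkeeping: making sure that $v_k$ and $l_k$ are genuinely monotone so they can be replaced by $v_K$, $l_K$ before summing, and that the $\frac{1}{k^2}$-weighted residual sum and the $k=1$ boundary term are genuinely lower-order; there is no deep new idea beyond correctly feeding the per-step bound into the potential lemma on the good event.
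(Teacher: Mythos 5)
Your proposal is correct and follows essentially the same route as the paper's proof: apply Lemma~\ref{lemma: gap bound by on policy errors ts known model} episode-wise on the good event, pull out $v_K$ and $l_{K-1}$ by monotonicity, bound the prediction-error sum via $E^r$ (Cauchy--Schwarz) so it merges into the $g_K$ term, sum the $1/k^2$ residual into the $\Ocal\br*{SAH\sqrt{\log(KH^2/\delta)}}$ lower-order term, and finish with the expected elliptical potential bound furnished by $E^d$ at $\lambda=H$. No gaps worth noting.
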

\begin{proof}
    Conditioning on the good event, and by Lemma~\ref{lemma: gap bound by on policy errors ts known model} with $\lambda=H$, we get 
\begin{align*}
    &\sum_{k=1}^K\Ex{(d_{\pi^*}^Tr - d_{\pi_k}^Tr) | F_{k-1}} \\
    &\leq \sum_{k=1}^K\br*{(2c+1)v_k \br*{\sqrt{SA} + \sqrt{16\log k}}\Ex{\norm{d_{\pi_k}}_{A_{k-1}^{-1}}\vert F_{k-1}} + \frac{(2c+1)v_k \sqrt{SAH}}{k^2}+\Ex{d_{\pi_k}^T\hat{r}_{k-1} -d_{\pi_k}^Tr| F_{k-1}}}\\
    & \overset{(1)} \le (2c+1)v_K \br*{\sqrt{SA} + \sqrt{16\log K}} \sum_{k=1}^K\Ex{\norm{d_{\pi_k}}_{A_{k-1}^{-1}}\vert F_{k-1}} + \sum_{k=1}^K\Ex{\norm{d_{\pi_k}}_{A_{k-1}^{-1}} \norm{r-\hat{r}_{k-1}}_{A_{k-1}} | F_{k-1}}\\
    &\quad+  2(2c+1)v_K \sqrt{SAH}\\
     & \overset{(2)} \le \br*{(2c+1)v_K \br*{\sqrt{SA} + \sqrt{16\log K}} + l_{K-1}} \sum_{k=1}^K\Ex{\norm{d_{\pi_k}}_{A_{k-1}^{-1}}\vert F_{k-1}} +  2(2c+1)v_K \sqrt{SAH}\\
    & = g_K\sum_{k=1}^K\Ex{\norm{d_{\pi_k}}_{A_{k-1}^{-1}}\vert F_{k-1}} +  2(2c+1)v_K \sqrt{SAH} \\
    & \overset{(3)} \le g_K\br*{4\sqrt{HK\log\br*{\frac{20K}{\delta}}} + \sqrt{2HKSA\log\br*{\lambda + \frac{KH^2}{SA}}}} +  2(2c+1)v_K \sqrt{SAH} 
    \\
    & =\Ocal\br*{g_K\sqrt{HSAK\log\br*{\frac{KH}{\delta}}} + SAH\sqrt{\log\br*{\frac{kH^2}{\delta}}} }.
\end{align*}
$(1)$ is since $v_k\le v_K$ for any $k\le K$ and $\sum_{k=1}^K\frac{1}{k^2}\le 2$. Next, $(2)$ holds under the good event (and specifically, $E^r$), and since $l_{k-1}\le l_{K-1}$ for any $k\le K$.
where the last relation holds conditioning on $\mathbb{G}$ (and, specifically, $E^r$) and using $l_K\geq l_k$ for any $k\in[K]$. Finally, $(3)$ holds by $E^d$ with $\lambda=H$ and substitution of $v_K$.
\end{proof}

\clearpage
\section{Thompson Sampling for RL with Trajectory Feedback and Unknown Model}\label{supp: ucbvi-ts RL with per trajectory feedback}

\subsection{The Good Event} \label{supp: good events unknown}
We now specify the good event $\mathbb{G}$. We establish the performance of our algorithm conditioning on the good event. In the following, we show the good event occurs with probability greater than $1-\delta/2$. Define the following set of events
\begin{align*}
    &E^r(k) = \brc*{\forall d\in \R^{SA}: |d^T(\hat{r}_k -  r)|\leq l_k \norm{d}_{A_k^{-1}}}\\
    &E^p(k) = \brc*{\forall s\in S, a\in A:\ \norm*{P\br*{\cdot|s,a} - \bar{P}_k\br*{\cdot|s,a}}_1 \le \sqrt{\frac{4S\log\frac{40SAHk^3}{\delta}}{n_{k-1}(s,a)\vee 1}}}\\
    &E^{pv}(k)=\brc*{\forall s,a,h:\ \abs*{\br*{\bar{P}_{k}(\cdot \mid s,a)-P(\cdot \mid s,a)}^T V_{h+1}^*} \leq \sqrt{\frac{H^2\log \frac{40SAH^2k^3}{\delta}}{n_{k}(s,a)\vee 1}}\eqdef b^{pv}_k(s,a)}\\
    &E^N(K) = \brc*{\sum_{k=1}^K \Ex{\sum_{h=1}^H \frac{1}{\sqrt{n_{k-1}(s^{k}_h,a^{k}_h)}} \mid F_{k-1}} \leq 16H^2\log\br*{\frac{20K^2}{\delta}} +4SAH +2\sqrt{2}\sqrt{SAHK\log HK}}\\
    &E^d(K) = \brc*{\sum_{k=0}^{K}\Ex{ \norm{d_{\pi_k}}_{A_{k-1}^{-1}}  | F_{k-1}} \leq 4\sqrt{\frac{H^2}{\lambda}K\log\br*{\frac{20K}{\delta}}} + \sqrt{2\frac{H^2}{\lambda}KSA\log\br*{\lambda + \frac{KH^2}{SA}}} }.
\end{align*}

Furthermore, we define the events in which all the former events hold uniformly, namely, $E^{r}= \cap_{k\geq0} E^{r}(k)$, $E^p= \cap_{k\geq0} E^p(k)$, $E^{pv}= \cap_{k\geq0} E^{pv}(k)$, $E^N = \cap_{K\geq1} E^N(K)$ and $E^{d}= \cap_{K\geq1} E^{d}(K)$. 

\begin{restatable}{lemma-rst}{goodeEventsTSUnknown} \label{lemma: good events all ucbvi-ts}
Let the good event be $\mathbb{G}= E^r \cap E^p \cap E^{pv} \cap E^N \cap E^d$. Then, $\Pr\brc*{\mathbb{G}}\geq 1 - \delta/2$. 
\end{restatable}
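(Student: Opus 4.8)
The plan is to show that each of the five component events holds with probability at least $1-\delta/10$ and then apply a union bound, giving $\Pr\brc*{\mathbb{G}}\ge 1-5\cdot\frac{\delta}{10}=1-\frac{\delta}{2}$.

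For $E^r$, Proposition~\ref{proposition: concentration of reward} gives $\norm{r-\hat{r}_k}_{A_k}\le l_k$ uniformly over $k\ge 0$ with probability at least $1-\delta/10$; on this event, Cauchy--Schwarz (exactly as in the proof of Lemma~\ref{lemma: good events all ts known model}) yields $\abs*{d^T(\hat{r}_k-r)}\le\norm{d}_{A_k^{-1}}\norm{\hat{r}_k-r}_{A_k}\le l_k\norm{d}_{A_k^{-1}}$ for every $d\in\R^{SA}$, so $E^r$ holds with the same probability. For $E^d$, I apply Lemma~\ref{lemma: elliptical potential lemma with side information} with the filtration $F_k^d=F_k\cup\brc*{\xi_s}_{s=1}^{k+1}$: here $\pi_k$ is $F_{k-1}^d$-measurable (it depends only on $\hat{r}_{k-1},\bar{P}_{k-1},b^{pv}_{k-1}$ and $\xi_k$) and $\Ex{\hat{d}_k\mid F_{k-1}^d}=d_{\pi_k}$ because $\hat{d}_k$ is a functional of the trajectory generated by executing $\pi_k$ in the \emph{true} MDP. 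The lemma then delivers $E^d$ with probability at least $1-\delta/10$ (after rescaling its failure probability).

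The events $E^p$ and $E^{pv}$ are standard empirical-model concentration statements, the only subtlety being that the visitation counts $n_{k-1}(s,a)$ are random. For $E^p$, fix $(s,a)$: since $\bar{P}_k(\cdot\mid s,a)$ is the empirical distribution of i.i.d.\ categorical draws, the $\ell_1$-deviation (Weissman-type) bound, combined with a union bound over the number of samples $n\in\brc*{1,\dots,KH}$ — equivalently over episodes, with the $k^3$ inside the logarithm absorbing $\sum_k k^{-3}$ — gives the stated width with probability at least $1-\delta/(10SA)$, and a union bound over the $SA$ pairs closes it. For $E^{pv}$, note that $V_{h+1}^*$ is a \emph{fixed}, data-independent vector in $[0,H]^S$, so $\br*{\bar{P}_k(\cdot\mid s,a)-P(\cdot\mid s,a)}^TV_{h+1}^*$ is a centered average of i.i.d.\ variables in $[0,H]$; a Hoeffding bound with the same union bounds over $n$, over episodes, and over $(s,a,h)$ produces exactly the bonus $b^{pv}_k(s,a)$ with probability at least $1-\delta/10$.

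For $E^N$, I first bound the actual sum $\sum_{k=1}^K\sum_{h=1}^H\br*{n_{k-1}(s_h^k,a_h^k)}^{-1/2}$ by a pigeonhole argument: grouping visits by $(s,a)$ gives $\sum_{s,a}\Ocal\br*{1+\sqrt{n_K(s,a)}}$, and Cauchy--Schwarz bounds $\sum_{s,a}\sqrt{n_K(s,a)}\le\sqrt{SA\sum_{s,a}n_K(s,a)}=\sqrt{SA\cdot KH}$. Replacing the actual sum by its one-step conditional expectation $\sum_k\Ex{\,\cdot\mid F_{k-1}}$ costs a martingale-difference term whose increments are bounded by $H$ (at most $H$ summands, each at most $1$); Azuma with failure probability of order $\delta/K^2$ and a union bound over $K$ yields the $16H^2\log(20K^2/\delta)$ contribution, and adding it to the pigeonhole bound gives $E^N$ with probability at least $1-\delta/10$. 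Combining the five bounds completes the proof. The main obstacle is purely bookkeeping: handling the union bound over the random counts $n_{k-1}(s,a)$ in $E^p$ and $E^{pv}$ — by peeling over dyadic ranges of $n$, by a union bound over all $n\le KH$, or (as the $k^3$ factors suggest) over episodes with a convergent $\sum_k k^{-3}$ — and verifying that the resulting confidence widths match the expressions in the definitions of $E^p(k)$ and $b^{pv}_k$.
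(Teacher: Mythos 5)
Your overall structure matches the paper exactly: five events, each shown to fail with probability at most $\delta/10$, then a union bound; $E^r$ via Proposition~\ref{proposition: concentration of reward} plus Cauchy--Schwarz, $E^d$ via Lemma~\ref{lemma: elliptical potential lemma with side information} with the enlarged filtration $F_k^d=F_k\cup\brc*{\xi_s}_{s=1}^{k+1}$, $E^p$ via a Weissman-type $\ell_1$ bound, and $E^{pv}$ via Hoeffding after the (correct and essential) observation that $V^*_{h+1}$ is a fixed vector. The bookkeeping for $E^p$/$E^{pv}$ is allocated slightly differently than in the paper (the paper takes $\delta'=\delta/(20SAHk^3)$, spends $SA\cdot kH$ of it on state-action pairs and possible counts, and then spends the remaining $k^{-2}$ on the union over episodes), but that is immaterial.

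The one genuine gap is in your treatment of $E^N$. You claim that replacing the realized sum by its one-step conditional expectation costs a martingale term that "Azuma with failure probability of order $\delta/K^2$" bounds by $16H^2\log(20K^2/\delta)$. Azuma--Hoeffding with increments bounded by $H$ over $K$ episodes gives a deviation of order $H\sqrt{K\log(K/\delta)}$, not $H^2\log(K/\delta)$; no additive, $K$-free (up to logs) correction can come out of Azuma here. The paper's Lemma~\ref{lemma: expected cumulative visitation bound} instead applies Freedman's inequality (Lemma~\ref{lemma: freedmans inequality}) with $\eta=1/(8H)$, using the self-bounding variance estimate $\Ex{Y_k^2\mid F_{k-1}}\le 4H\sum_{s,a}\Ex{d_{\pi_k}(s,a)\mid F_{k-1}}/\sqrt{n_{k-1}(s,a)\vee 1}$, so that half of the conditional-expectation sum is absorbed into the left-hand side and only the additive $16H^2\log(1/\delta')$ term remains; the realized sum is then controlled by the pigeonhole/Cauchy--Schwarz argument (Lemma~\ref{lemma: cumulative visitation bound}), which also accounts for the within-episode staleness of $n_{k-1}$ via the extra $\Ocal(SAH)$ term that your "$\sum_{s,a}\Ocal(1+\sqrt{n_K(s,a)})$" sketch glosses over. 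With your Azuma route you would have to redefine $E^N$ with an extra $\Ocal\br*{H\sqrt{K\log(K/\delta)}}$ term; downstream this is dominated by the leading regret term only when $SA\gtrsim H$, whereas the Freedman-based event is what the paper actually uses, so as written your argument does not establish $E^N$ as defined.
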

\begin{proof}
    We analyze the probability each of the events does not hold and upper bound this probability by $\delta/10$. Taking the union bound then concludes the proof. 
    
    \paragraph{The event $E^r$.} The event holds uniformly for all $k\ge0$, with probability greater than $1-\frac{\delta}{10}$, by Proposition~\ref{proposition: concentration of reward}  combined with Cauchy-Schwarz inequality:
    \begin{align*}
        d^T(\hat{r}_k -  r) 
        =\br*{A_k^{-1/2}d}^T\br*{A_k^{1/2}(\hat{r}_k -  r)}
        \le \norm*{A_k^{-1/2}d}_2 \norm*{A_k^{1/2}(\hat{r}_k -  r)}_2
        =\norm*{d}_{A_k^{-1}} \norm*{\hat{r}_k -  r}_{A_k} 
        \le l_k\norm*{d}_{A_k^{-1}}\enspace.
    \end{align*}
    
    \paragraph{The event $E^p$.} Note that the event trivially holds if $n_k(s,a)=0$, so we assume w.l.o.g. that $n_k(s,a)\ge1$. Next, for any fixed $k$, we apply the following concentration inequality for L1-norm \citep{weissman2003inequalities}:
    \begin{lemma}
    Let $X_1,\dots,X_m$ be i.i.d random values over $\brc*{1,\dots,a}$ such that $\Pr\brc*{X_n=i}=p_i$, and define $\hat{p}_m(i)=\frac{1}{m}\sum_{n=1}^m\indicator{X_n=i}$. Then, for all $\delta,\in\br*{0,1}$,
    \begin{align*}
    \Pr\brc*{\norm*{\hat{p}_m-p}_1\ge \sqrt{\frac{2a\log\frac{2}{\delta,}}{m}}}\le \delta,
    \end{align*}
    \end{lemma}
    Next, we fix $\delta'=\frac{\delta}{20SAHk^3}$. Then, noting that $n_k(s,a)\le kH$ and taking the union bounds over all possible values of $s,a,n_k(s,a)$, we get that $\Pr\brc*{E^p(k)}\ge1-\frac{\delta}{20k^2}$. Finally, taking the union bound over all possible values of $k$ and recalling that $\sum_{k=1}^\infty \frac{1}{k^2}\le 2$, we get that $\Pr\brc*{E^p}\ge1-\delta/10$, which concludes the proof.

    \paragraph{The event $E^{pv}$.} Notice that the event trivially holds if $n_k(s,a)=0$, since $V_{h+1}^*(s')\in\brs*{0,H}$ and $P,\bar{P}_k$ are in the $S$-dimensional simplex for any $s,a$; thus, and w.l.o.g., we assume that $n_k(s,a)\ge1$. Next, we fix $s,a,h$ and $\delta'\in\br*{0,1}$ and let $s'_1,\dots,s'_m$ be $m$ i.i.d samples from $P(\cdot\vert s,a)$. Finally, we define $X_i=V_{h+1}^*(s'_i)\in\brs*{0,H}$ for $i\in\brs*{m}$. Specifically, notice that $\Ex{X_i}=P(\cdot\vert s,a)^TV_{h+1}^*$ and we can write $\bar{P}_k(\cdot\vert s,a)^TV_{h+1}^*=\frac{1}{n_k(s,a)}\sum_{i=1}^{n_k(s,a)}X_i$. Then, By Hoeffding's inequality, w.p. $1-\delta'$
    \begin{align*}
        \Pr\brc*{ \abs*{\frac{1}{m}\sum_{i=1}^m (X_i - \E\brs*{X_i})} \ge \sqrt{ H^2\log\frac{2}{\delta'} }}\le \delta'\enspace.
    \end{align*}
    Next, notice that for any $k$, $n_k(s,a)\in\brs*{Hk}$. Fixing $m=n_k(s,a)$ and taking the union bound over all possible values yields w.p. $1-\delta'$
    \begin{align*}
        \Pr\brc*{ \abs*{\bar{P}_k(\cdot\vert s,a)^TV_{h+1}^*- P(\cdot\vert s,a)^TV_{h+1}^*} \ge \sqrt{ H^2\log\frac{2Hk}{\delta'} } }\le \delta'\enspace.
    \end{align*}
    Finally, choosing $\delta'=\frac{\delta}{20SAHk^2}$ and taking the union bound over all possible values of $s,a,h$ and $k>0$ leads to $\Pr\brc*{E^{pv}}\ge 1-\frac{\delta}{10}$.

    \paragraph{The event $E^N$.}  By Lemma~\ref{lemma: expected cumulative visitation bound}, for any fixed $K$, it holds with probability greater than $1-\delta'$ that
    \begin{align*}
            \sum_{k=1}^K \Ex{\sum_{h=1}^H \frac{1}{\sqrt{n_{k-1}(s^k_h,a^k_h)\vee 1}} \mid F_{k-1}} \leq 16H^2\log\br*{\frac{1}{\delta'}} +4SAH +2\sqrt{2}\sqrt{SAHK\log HK}.
        \end{align*}
    Setting $\delta' =\frac{\delta}{20K^2}$ and applying the union bound we get that 
    \begin{align*}
            \sum_{k=1}^K \Ex{\sum_{h=1}^H \frac{1}{\sqrt{n_{k-1}(s^k_h,a^k_h)\vee 1}} \mid F_{k-1}} \leq 16H^2\log\br*{\frac{20K^2}{\delta}} +4SAH +2\sqrt{2}\sqrt{SAHK\log HK},
        \end{align*}
        for all $K>0$ w.p. at least $1-\delta/10$.

    \paragraph{The event $E^d$.} This event holds with probability greater than $1-\delta/10$ by Lemma~\ref{lemma: elliptical potential lemma with side information} with $\lambda=H$, using the filtration $F_k^d = F_k \cup \brc*{\xi_s}_{s=1}^{k+1}$ as in Lemma~\ref{lemma: good events all ts known model}.
    
\end{proof}

\subsection{Optimism with Fixed Probability}
We remind the reader the $\tilde{r}^b_{k} \eqdef \hat{r}_{k-1} + b_{k-1}^{pv} + \xi_k$ where $\hat{r}_{k-1}$ is the LS-estimator, $b_{k-1}^{pv}$ is the bonus at the beginning of the $k^{th}$ episode, and $\xi_k$ is the randomly drawn noise at the $k^{th}$ episode. The following result which establishes an optimism with fixed probability holds.
\begin{restatable}[Optimism with Fixed Probability]{lemma-rst}{optimismTS}
\label{lemma: optimism ucrl} 
    Assume that $\lambda\in\brs*{1,H}$. Then, for any $k>1$ and any filtration $F_{k-1}$ such that $E^r(k-1)$ and $E^{pv}(k-1)$ are true,
    \begin{align*}
      \Pr\brc*{ d_{\pi_k}(\bar{P})^T\tilde{r}^b_{k} > d_{\pi^*}^Tr  \mid F_{k-1} } \geq \frac{1}{c},
    \end{align*}
    for $c=2\sqrt{2\pi e}$.
\end{restatable}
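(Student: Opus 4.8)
The plan is to follow the template of Lemma~\ref{lemma: optimism ts known model}, with the bonus $b^{pv}_{k-1}$ playing the role of a correction that absorbs the error of planning in the estimated model. Throughout, abbreviate $\bar P=\bar P_{k-1}$, so that $d_{\pi_k}(\bar P)$, $\hat r_{k-1}$ and $b^{pv}_{k-1}$ are all $F_{k-1}$-measurable, and recall that $d_{\pi^*}^Tr=V_1^*(s_1^k)=d_{\pi^*}(P)^Tr$. Since $\pi_k\in\arg\max_{\pi}d_{\pi}(\bar P)^T\tilde{r}^b_k$, for every realization of $\xi_k$ we have $d_{\pi_k}(\bar P)^T\tilde{r}^b_k\ge d_{\pi^*}(\bar P)^T\tilde{r}^b_k$, so it suffices to prove
\begin{align*}
\Pr\brc*{d_{\pi^*}(\bar P)^T\tilde{r}^b_k>d_{\pi^*}(P)^Tr\mid F_{k-1}}\ge\frac1c.
\end{align*}

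Next I would write $\tilde{r}^b_k=\hat r_{k-1}+b^{pv}_{k-1}+\xi_k$, so the event above is $\brc*{d_{\pi^*}(\bar P)^T\xi_k>\Delta_k}$ with $\Delta_k\eqdef d_{\pi^*}(P)^Tr-d_{\pi^*}(\bar P)^T\hat r_{k-1}-d_{\pi^*}(\bar P)^Tb^{pv}_{k-1}$, and bound $\Delta_k\le l_{k-1}\norm{d_{\pi^*}(\bar P)}_{A_{k-1}^{-1}}$ by splitting off a reward-estimation error and a model-misidentification error:
\begin{align*}
\Delta_k&=\underbrace{\br*{d_{\pi^*}(P)^Tr-d_{\pi^*}(\bar P)^Tr}}_{\text{model error}}\\
&\quad+\underbrace{d_{\pi^*}(\bar P)^T\br*{r-\hat r_{k-1}}}_{\text{reward error}}-d_{\pi^*}(\bar P)^Tb^{pv}_{k-1}.
\end{align*}
The reward error is at most $l_{k-1}\norm{d_{\pi^*}(\bar P)}_{A_{k-1}^{-1}}$ by $E^r(k-1)$. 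For the model error I would apply a standard value-difference (simulation) lemma to the MDPs $(P,r)$ and $(\bar P,r)$ run under the common policy $\pi^*$; since $\pi^*$ is optimal in $(P,r)$ its value function there is $V^*$, which yields
\begin{align*}
d_{\pi^*}(P)^Tr-d_{\pi^*}(\bar P)^Tr&=\sum_{h=1}^H\sum_{s,a}q_h^{\pi^*}(s,a;\bar P)\br*{P(\cdot\mid s,a)-\bar P(\cdot\mid s,a)}^TV_{h+1}^*\\
&\le\sum_{s,a}\br*{\sum_{h=1}^Hq_h^{\pi^*}(s,a;\bar P)}b^{pv}_{k-1}(s,a)=d_{\pi^*}(\bar P)^Tb^{pv}_{k-1},
\end{align*}
where the inequality bounds each summand in absolute value using $E^{pv}(k-1)$ (recall $b^{pv}_{k-1}\ge0$) and the last equality uses $\sum_hq_h^{\pi^*}(s,a;\bar P)=d_{\pi^*}(s,a;\bar P)$. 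Since $d_{\pi^*}(\bar P)\ge0$ and $b^{pv}_{k-1}\ge0$ componentwise, the two $d_{\pi^*}(\bar P)^Tb^{pv}_{k-1}$ contributions cancel, leaving $\Delta_k\le l_{k-1}\norm{d_{\pi^*}(\bar P)}_{A_{k-1}^{-1}}$.

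It then suffices to show $\Pr\brc*{d_{\pi^*}(\bar P)^T\xi_k>l_{k-1}\norm{d_{\pi^*}(\bar P)}_{A_{k-1}^{-1}}\mid F_{k-1}}\ge1/c$. Conditioned on $F_{k-1}$, the variable $d_{\pi^*}(\bar P)^T\xi_k$ is centered Gaussian with standard deviation $v_k\norm{d_{\pi^*}(\bar P)}_{A_{k-1}^{-1}}>0$, so this probability equals $\Pr\brc*{Z>l_{k-1}/v_k}$ for $Z\sim\mathcal{N}(0,1)$; exactly as in Lemma~\ref{lemma: optimism ts known model}, the choice of $v_k$ together with $\lambda\le H$ gives $l_{k-1}\le v_k$, and the standard Gaussian anti-concentration bound $\Pr\brc*{Z>1}\ge\frac{1}{2\sqrt{2\pi e}}=\frac1c$ completes the argument. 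The hard part is the model-error bound: one must set up the value-difference lemma so that the state-action weights are the occupancy measure of $\pi^*$ under the \emph{estimated} kernel $\bar P$ (so they match $d_{\pi^*}(\bar P)$ exactly), while the value function appearing inside the per-$(s,a)$ terms is that of the \emph{true} MDP, which equals $V^*$ precisely because we compare against the optimal policy. This alignment is exactly why $b^{pv}$, defined through $V^*$ in $E^{pv}$, is the correct correction term and cancels cleanly, so no extra error is picked up beyond what is already present in the known-model case.
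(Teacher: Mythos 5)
Your proof is correct and follows essentially the same route as the paper: optimality of $\pi_k$ under $\bar P_{k-1}$ reduces the claim to $\pi^*$, the value-difference lemma with weights $q_h^{\pi^*}(\cdot;\bar P_{k-1})$ and the true optimal values $V^*_{h+1}$ lets $E^{pv}(k-1)$ cancel the bonus $b^{pv}_{k-1}$, and $E^r(k-1)$ plus Gaussian anti-concentration gives the fixed probability $1/c$. The only difference is organizational: the paper applies the value-difference lemma directly to the perturbed reward $\tilde r^b_k$ and then invokes Lemma~\ref{lemma: optimism ts known model} with $P'=\bar P_{k-1}$, whereas you isolate the deterministic gap $\Delta_k$ and inline that Gaussian argument yourself.
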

\begin{proof}
    First, by definition of the update rule, we have
    \begin{align}
      &d_{\pi_k}(\bar P_{k-1})^T\tilde{r}_{k}^b -  d_{\pi^*}^T r = \max_\pi d_{\pi}(\bar P_{k-1})^T\tilde{r}_{k}^b -  d_{\pi^*}^T r\geq  d_{\pi^*}(\bar P_{k-1})^T\tilde r_{k}^b -  d_{\pi^*}^T r. \label{eq: optimsm bonus relation 3}
    \end{align}
    Next, we establish that
    \begin{align}
        d_{\pi^*}(\bar P_{k-1})^T\tilde r_{k}^b -  d_{\pi^*}^T r \geq d_{\pi^*}(\bar P_{k-1})^T\tilde r_{k} - d_{\pi^*}(\bar P_{k-1})^Tr, \label{eq: optimism first step}
    \end{align}
    where $\tilde r_{k} = \hat{r}_{k-1} + \xi_k$.
    By the value difference lemma (Lemma~\ref{lemma: value difference}), and given $F_{k-1}$, it holds that
    \begin{align*}
        d_{\pi^*}&(\bar P_{k-1})^T\tilde r_{k}^b -  d_{\pi^*}^T r \\
        &= \Ex{\sum_{h=1}^H (\tilde{r}_{k}^b-r)(s_h,a_h) + (\bar P_{k-1}(\cdot| s_h,a_h) -P(\cdot| s_h,a_h))^T V^*_{h+1}  | s_1,\bar P_{k-1},\pi^*} \tag{Lemma~\ref{lemma: value difference}}\\
        &= \Ex{\sum_{h=1}^H (\tilde r_{k}-r)(s_h,a_h) + b^{pv}_{k-1}(s_h,a_h)+ (\bar P_{k-1}(\cdot| s_h,a_h) -P(\cdot| s_h,a_h))^T V^*_{h+1}  | s_1,\bar P_{k-1},\pi^*}\\
        &\geq \Ex{\sum_{h=1}^H (\tilde r_{k}-r)(s_h,a_h) + b^{pv}_{k-1}(s_h,a_h) -   b^{pv}_{k-1}(s_h,a_h)  | s_1,\bar P_{k-1},\pi^*} \tag{$E^{pv}(k-1)$ holds in $F_{k-1}$}\\
        &=\Ex{\sum_{h=1}^H (\tilde r_{k}-r)(s_h,a_h) | s_1,\bar P_{k-1},\pi^*} \\
        &= d_{\pi^*}(\bar P_{k-1})^T\tilde r_{k} - d_{\pi^*}(\bar P_{k-1})^Tr,
    \end{align*}
    which establishes~\eqref{eq: optimism first step}. Finally, since $E^r(k-1)$ is true under $F_{k-1}$ and $\bar P_{k-1}$ is $F_{k-1}$-measurable, we can apply Lemma~\ref{lemma: optimism ts known model}, which leads to the desired result:
    \begin{align*}
      \Pr\brc*{ d_{\pi_k}(\bar P_{k-1})^T\tilde{r}^b_{k} > d_{\pi^*}^Tr  \mid F_{k-1} } 
      &= \Pr\brc*{ d_{\pi_k}(\bar P_{k-1})\tilde{r}^b_{k} -d_{\pi^*}^Tr > 0  \mid F_{k-1} }\\
      &\geq \Pr\brc*{d_{\pi^*}(\bar P_{k-1})^T\tilde r_{k}^b -  d_{\pi^*}^T r>0 \mid F_{k-1} } \tag{By~\eqref{eq: optimsm bonus relation 3}}\\
      &\geq \Pr\brc*{d_{\pi^*}(\bar P_{k-1})^T\tilde r_{k} - d_{\pi^*}(\bar P_{k-1})^Tr>0 \mid F_{k-1} } \tag{By~\eqref{eq: optimism first step}}\\
      &\geq \frac{1}{c} \tag{Lemma~\ref{lemma: optimism ts known model}}.
    \end{align*}
\end{proof}

Next, we generalize Lemma~\ref{lemma: gap bound by on policy errors ts known model} from the case a model is known to the case the model is unknown.
\begin{restatable}{lemma-rst}{optimismTSConsequences}\label{lemma: gap bound by on policy errors ucbvi-ts}
    Assume that $\lambda\in\brs*{1,H}$. Then, for any $k>1$ and any filtration $F_{k-1}$ such that $E^r(k-1)$ and $E^{pv}(k-1)$ are true, it holds that 
    \begin{align*}
        \Ex{d_{\pi^*}^Tr - d_{\pi_k}^Tr | F_{k-1}} 
        &\le (2c+1)v_k \br*{\sqrt{SA} + \sqrt{16\log k}}\Ex{\norm{d_{\pi_k}(\bar{P}_{k-1})}_{A_{k-1}^{-1}}\vert F_{k-1}} + \frac{(2c+1)v_k H\sqrt{SA}}{\sqrt{\lambda}k^2}\\
        &\quad+\Ex{d_{\pi_k}(\bar{P}_{k-1})^T\hat{r}_{k-1}^b -d_{\pi_k}^Tr| F_{k-1}}.
    \end{align*}
    where $\hat{r}_{k-1}^b=\hat{r}_{k-1}+b^{pv}_{k-1}$.
\end{restatable}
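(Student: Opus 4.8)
The plan is to retrace the proof of Lemma~\ref{lemma: gap bound by on policy errors ts known model} almost verbatim, with three substitutions that account for the unknown transition model: the perturbed reward $\tilde r_k=\hat r_{k-1}+\xi_k$ becomes the optimistic-perturbed reward $\tilde r_k^b=\hat r_{k-1}^b+\xi_k$ with $\hat r_{k-1}^b=\hat r_{k-1}+b^{pv}_{k-1}$; the ``action'' $d_{\pi_k}$ becomes the occupancy measure in the \emph{empirical} model $d_{\pi_k}(\bar P_{k-1})$; and Lemma~\ref{lemma: optimism ts known model} is replaced by its unknown-model counterpart Lemma~\ref{lemma: optimism ucrl}. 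First I would write
\[
\Ex{d_{\pi^*}^Tr - d_{\pi_k}^Tr \mid F_{k-1}} = \underbrace{\Ex{d_{\pi^*}^Tr - d_{\pi_k}(\bar P_{k-1})^T\tilde r_k^b \mid F_{k-1}}}_{(i)} + \Ex{d_{\pi_k}(\bar P_{k-1})^T\tilde r_k^b - d_{\pi_k}^Tr \mid F_{k-1}},
\]
and bound $(i)$ exactly as in the known-model proof: if $(i)\le 0$ it is trivial, and otherwise Markov's inequality together with Lemma~\ref{lemma: optimism ucrl} (whose hypotheses $E^r(k-1)$ and $E^{pv}(k-1)$ are precisely the ones assumed here, and which uses the bonus $b^{pv}_{k-1}$ inside the value-difference identity to absorb the model-estimation error) gives $(i)\le c\,\Ex{\br*{d_{\pi_k}(\bar P_{k-1})^T\tilde r_k^b - \Ex{d_{\pi_k}(\bar P_{k-1})^T\tilde r_k^b \mid F_{k-1}}}^+ \mid F_{k-1}}$.

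Next I would invoke the symmetrization bound, Lemma~\ref{lemma: symmetrization bound}, with $x_{k-1}=\hat r_{k-1}^b$ and $P'=\bar P_{k-1}$ (both $F_{k-1}$-measurable), so that the maximizer $\tilde\pi$ appearing there coincides with $\pi_k$; this yields $(i)\le c\br*{\Ex{\abs*{d_{\pi_k}(\bar P_{k-1})^T\xi_k}\mid F_{k-1}} + \Ex{\abs*{d_{\pi_k}(\bar P_{k-1})^T\xi_k'}\mid F_{k-1}}}$ with $\xi_k'$ an independent copy of $\xi_k$ given $F_{k-1}$. Substituting back and splitting the second term of the decomposition via $\tilde r_k^b=\hat r_{k-1}^b+\xi_k$ as $\Ex{d_{\pi_k}(\bar P_{k-1})^T\hat r_{k-1}^b - d_{\pi_k}^Tr \mid F_{k-1}} + \Ex{d_{\pi_k}(\bar P_{k-1})^T\xi_k\mid F_{k-1}}$, the three noise contributions collapse to $(c+1)\Ex{\abs*{d_{\pi_k}(\bar P_{k-1})^T\xi_k}\mid F_{k-1}} + c\,\Ex{\abs*{d_{\pi_k}(\bar P_{k-1})^T\xi_k'}\mid F_{k-1}}$, and the residual term $\Ex{d_{\pi_k}(\bar P_{k-1})^T\hat r_{k-1}^b - d_{\pi_k}^Tr \mid F_{k-1}}$ is left untouched --- this is exactly the on-policy error appearing in the statement.

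Finally I would apply the conditional-expectation bound of Lemma~\ref{lemma: concentration of TS noise} (stated for an arbitrary random vector $X$, hence applicable even though $d_{\pi_k}(\bar P_{k-1})$ depends on $\xi_k$) with $X=d_{\pi_k}(\bar P_{k-1})$ to each noise term. The single point needing a line of justification is that $d_{\pi_k}(\bar P_{k-1})$ is still an occupancy measure of total mass $H$, so $\norm*{d_{\pi_k}(\bar P_{k-1})}_{A_{k-1}^{-1}}\le\norm*{d_{\pi_k}(\bar P_{k-1})}_2/\sqrt\lambda\le\norm*{d_{\pi_k}(\bar P_{k-1})}_1/\sqrt\lambda=H/\sqrt\lambda$, which bounds the $\frac{1}{k^2}\sqrt{\Ex{\norm*{X}_{A_{k-1}^{-1}}^2\mid F_{k-1}}}$ correction by $H\sqrt{SA}/(\sqrt\lambda\,k^2)$ per term; folding the constants into $(2c+1)$ then yields the claimed inequality. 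I do not anticipate a genuine obstacle: the heavy lifting --- replacing the standard optimism lemma by Lemma~\ref{lemma: optimism ucrl}, where the bonus and the cancellation of the model error actually take place --- has already been done upstream, and the only thing to watch is that the symmetrization and optimism lemmas are invoked with the $F_{k-1}$-measurable kernel $\bar P_{k-1}$ in the slot that the known-model proof filled with the true $P$.
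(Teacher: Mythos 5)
Your proposal is correct and mirrors the paper's own proof essentially step for step: the same decomposition into an optimism term and an on-policy term, Markov's inequality combined with Lemma~\ref{lemma: optimism ucrl} to bound the optimism term, Lemma~\ref{lemma: symmetrization bound} invoked with $x_{k-1}=\hat r_{k-1}^b$ and $P'=\bar P_{k-1}$, and finally Lemma~\ref{lemma: concentration of TS noise} applied to $d_{\pi_k}(\bar P_{k-1})$ together with the bound $\norm{d_{\pi_k}(\bar P_{k-1})}_{A_{k-1}^{-1}}\le H/\sqrt{\lambda}$. No gaps.
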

\begin{proof}
    We use the following decomposition,
    \begin{align}
        \Ex{d_{\pi^*}^Tr - d_{\pi_k}^Tr| F_{k-1}} = \underbrace{\Ex{d_{\pi^*}^Tr - d_{\pi_k}(\bar{P}_{k-1})^T\tilde{r}^b_{k}| F_{k-1}}}_{(i)} +\Ex{d_{\pi_k}(\bar{P}_{k-1})^T\tilde{r}^b_{k} -d_{\pi_k}^Tr| F_{k-1}}. \label{eq: optimism conse rel 4 ts-ucbvi}
    \end{align}
    We start by bounding $(i)$ and show that
    \begin{align}
        (i) \leq c \Ex{\br*{ d_{\pi_k}(\bar{P}_{k-1})^T\tilde{r}^b_{k} - \Ex{d_{\pi_k}(\bar{P}_{k-1})^T\tilde{r}^b_{k}\mid F_{k-1}}}^+| F_{k-1}}. \label{eq: optimism conse rel 5}
    \end{align}
    
    If $(i)=d_{\pi^*}^Tr -\Ex{ d_{\pi_k}(\bar{P}_{k-1})^T\tilde{r}^b_{k}| F_{k-1}}<0$ the inequality trivially holds. Let $a \equiv d_{\pi^*}^Tr -\Ex{ d_{\pi_k}(\bar{P}_{k-1})^T\tilde{r}^b_{k}| F_{k-1}}\geq 0$. Then,
    \begin{align*}
        &\Ex{ \br*{d_{\pi_k}(\bar{P}_{k-1})^T\tilde{r}^b_{k} -\Ex{d_{\pi_k}(\bar{P}_{k-1})^T\tilde{r}^b_{k} | F_{k-1}}}^+ | F_{k-1}} \\
        &\geq a \Pr\br*{ d_{\pi_k}(\bar{P}_{k-1})^T\tilde{r}^b_{k} -\Ex{d_{\pi_k}(\bar{P}_{k-1})^T\tilde{r}^b_{k} | F_{k-1}} \geq a | F_{k-1}} \tag{Markov's inequality}\\
        &= \br*{d_{\pi^*}^Tr -\Ex{ d_{\pi_k}(\bar{P}_{k-1})^T\tilde{r}^b_{k}| F_{k-1}}} \Pr\br*{ d_{\pi_k}(\bar{P}_{k-1})^T\tilde{r}^b_{k}    \geq d_{\pi^*}^Tr | F_{k-1}}\\
        &\geq \br*{d_{\pi^*}^Tr -\Ex{ d_{\pi_k}(\bar{P}_{k-1})^T\tilde{r}^b_{k}| F_{k-1}}}\frac{1}{c} \tag{Lemma~\ref{lemma: optimism ucrl}},
    \end{align*}
    which implies that $(i)\leq c\Ex{ \br*{d_{\pi_k}(\bar{P}_{k-1})^T\tilde{r}^b_{k} -\Ex{d_{\pi_k}(\bar{P}_{k-1})^T\tilde{r}^b_{k} | F_{k-1}}}^+ | F_{k-1}}.$      
    Next, we apply Lemma~\ref{lemma: symmetrization bound} with $x_{k-1}=\hat{r}_{k-1}^b$ and $P'=\bar{P}_{k-1}$ (and, therefore, $\tilde\pi=\pi_k$), which yields
    \begin{align*}
        &(i)\leq  c\br*{\Ex{\abs*{d_{\pi_k}(\bar{P}_{k-1})^T\xi_k}| F_{k-1}} + \Ex{\abs*{d_{\pi_k}(\bar{P}_{k-1})^T\xi_k'}  | F_{k-1}}}
    \end{align*}
    Substituting the bound on $(i)$ in~\eqref{eq: optimism conse rel 4 ts-ucbvi} and using the definition $\tilde{r}_k^b = \hat r_{k-1}^b+\xi_k$ we get
    \begin{align*}
         \Ex{d_{\pi^*}^Tr - d_{\pi_k}^Tr| F_{k-1}} 
         &\leq  c\br*{ \Ex{\abs*{d_{\pi_k}(\bar{P}_{k-1})^T\xi_k} | F_{k-1}} + \Ex{\abs*{d_{\pi_k}(\bar{P}_{k-1})^T\xi_k'}  | F_{k-1}}} \\
         &\quad+\Ex{d_{\pi_k}(\bar{P}_{k-1})^T\br*{\hat{r}_{k}^b +\xi_k} -d_{\pi_k}^Tr| F_{k-1}} \\
         &\le (c+1)\Ex{\abs*{d_{\pi_k}(\bar{P}_{k-1})^T\xi_k} | F_{k-1}} + c\Ex{\abs*{d_{\pi_k}(\bar{P}_{k-1})^T\xi_k'} | F_{k-1}} \\
         &\quad + \Ex{d_{\pi_k}(\bar{P}_{k-1})^T\hat{r}_{k}^b -d_{\pi_k}^Tr| F_{k-1}}
    \end{align*}
    Finally, we apply Lemma~\ref{lemma: concentration of TS noise} on the two noise terms, with $X=d_{\pi_k}$. To this end, notice that 
    \begin{align*}
        \norm*{d_{\pi_k}(\bar{P}_{k-1})}_{A_{k-1}^{-1}} 
        \le \frac{\norm*{d_{\pi_k}(\bar{P}_{k-1})}_2}{\sqrt{\lambda}}
        \le \frac{\norm*{d_{\pi_k}(\bar{P}_{k-1})}_1}{\sqrt{\lambda}}
        = \frac{H}{\sqrt{\lambda}}
    \end{align*}
    Then, applying the lemma yields:
    \begin{align*}
        \Ex{d_{\pi^*}^Tr - d_{\pi_k}^Tr| F_{k-1}} 
        &\le (2c+1)v_k \br*{\sqrt{SA} + \sqrt{16\log k}}\Ex{\norm{d_{\pi_k}(\bar{P}_{k-1})}_{A_{k-1}^{-1}}\vert F_{k-1}} + \frac{(2c+1)v_k H\sqrt{SA}}{\sqrt{\lambda}k^2}\\
        &\quad+\Ex{d_{\pi_k}^T\hat{r}_{k-1}^b -d_{\pi_k}^Tr| F_{k-1}}.
    \end{align*}
\end{proof}
    
\subsection{Proof of Theorem~\ref{theorem: performance ucbvi ts}}\label{supp: proof of ucbvi-ts}
We are now ready to prove the theorem.
\theoremPerformanceTSUCBVI*
\begin{proof}

We start be conditioning on the good event, which occurs with probability greater than $1-\frac{\delta}{2}$ (Lemma~\ref{lemma: good events all ucbvi-ts}). Conditioning on the good event,
\begin{align}
    \Regret(K) &= \sum_{k=1}^K d_{\pi^*}^Tr - d_{\pi_k}^Tr \nonumber\\
    &=  \sum_{k=1}^K \Ex{\br*{d_{\pi^*}^Tr - d_{\pi_k}^Tr} | F_{k-1}} + \sum_{k=1}^K d_{\pi^*}^Tr - d_{\pi_k}^Tr - \Ex{\br*{d_{\pi^*}^Tr - d_{\pi_k}^Tr} | F_{k-1}} \nonumber \\
    & = \underbrace{\sum_{k=1}^K \Ex{\br*{d_{\pi^*}^Tr - d_{\pi_k}^Tr} | F_{k-1}}}_{(i)} + \underbrace{\sum_{k=1}^K \Ex{d_{\pi_k}^Tr | F_{k-1}} - d_{\pi_k}^Tr}_{(ii)}. \label{eq: theorem ucrl ts central term}
\end{align}

The first term $(i)$ is bounded in Lemma~\ref{lemma: conditional gap bound ts unknown model} by
\begin{align}
    &(i) \leq  \Ocal\br*{H^2\sqrt{S}(SA + H)^2\log\br*{\frac{SAHK}{\delta}}^2\sqrt{\log K} + SH(SA+H)\sqrt{AHK\log K} \log\br*{\frac{SAHK}{\delta}}^{\frac{3}{2}} }. \label{eq: bound on 2 ucrl ts thorem}
\end{align}

The second term $(ii)$ is a sum over bounded martingale difference terms, and can be bounded exactly as in the proof of Theorem~\ref{theorem: performance ts known model}, using Azuma-Hoeffding inequality (see Section~\ref{supp: proof of ts with known model}). Specifically, with probability greater than $1-\frac{\delta}{2}$, uniformly for all $K>0$, we have
\begin{align*}
    (ii)\leq \Ocal\br*{H\sqrt{K\log\br*{\frac{K}{\delta}}}},
\end{align*}
Combining the bounds on $(i)$ and $(ii)$ concludes the proof. 
\end{proof}

\begin{lemma}[Conditional Gap Bound Unknown Model] \label{lemma: conditional gap bound ts unknown model} Conditioning on the good event and setting $\lambda=H$, the following bound holds.
\begin{align*}
    &\sum_{k=1}^K\Ex{(d_{\pi^*}^Tr - d_{\pi_k}^Tr) | F_{k-1}} \\
    &\qquad\leq 
    \Ocal\br*{H^2\sqrt{S}(SA + H)^2\log\br*{\frac{SAHK}{\delta}}^2\sqrt{\log K} + SH(SA+H)\sqrt{AHK\log K} \log\br*{\frac{SAHK}{\delta}}^{\frac{3}{2}} }
\end{align*}
\end{lemma}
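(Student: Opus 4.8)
The plan is to sum the per-episode estimate of Lemma~\ref{lemma: gap bound by on policy errors ucbvi-ts} over $k\in[K]$ with $\lambda=H$. That lemma is applicable for every $k>1$ under the good event (since $E^r(k-1)$ and $E^{pv}(k-1)$ hold), and the $k=1$ term is bounded crudely by $H$. In the resulting sum I would collect the Thompson‑noise term $(2c+1)v_k\br*{\sqrt{SA}+\sqrt{16\log k}}\Ex{\norm{d_{\pi_k}(\bar P_{k-1})}_{A_{k-1}^{-1}}\mid F_{k-1}}$ together with the reward‑estimation part $d_{\pi_k}(\bar P_{k-1})^T(\hat r_{k-1}-r)$ of the on‑policy term — the latter being $\le l_{k-1}\norm{d_{\pi_k}(\bar P_{k-1})}_{A_{k-1}^{-1}}$ by $E^r$ and Cauchy--Schwarz. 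This yields a bound of the form $g_K(\star)+(\dagger)+(\ddagger)+\Olog(SAH)$, where $g_K=l_{K-1}+(2c+1)v_K\br*{\sqrt{SA}+\sqrt{16\log K}}=\Olog(SA\sqrt H)$, $(\star)=\sum_k\Ex{\norm{d_{\pi_k}(\bar P_{k-1})}_{A_{k-1}^{-1}}\mid F_{k-1}}$, $(\dagger)=\sum_k\Ex{\sum_{h=1}^H b^{pv}_{k-1}(s_h^k,a_h^k)\mid F_{k-1}}$ is the on‑policy bonus evaluated along the trajectory actually played, $(\ddagger)=\sum_k\br*{1+\norm{b^{pv}_{k-1}}_\infty}\Ex{\norm{d_{\pi_k}(\bar P_{k-1})-d_{\pi_k}(P)}_1\mid F_{k-1}}$ captures the remaining cross terms $\br*{d_{\pi_k}(\bar P_{k-1})-d_{\pi_k}(P)}^T(r+b^{pv}_{k-1})$, and $\Olog(SAH)$ absorbs the $\sum_k k^{-2}\le 2$ remainder of Lemma~\ref{lemma: gap bound by on policy errors ucbvi-ts}.

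The technical heart is a simulation‑type bound on the model‑identification error $d_{\pi_k}(\bar P_{k-1})-d_{\pi_k}(P)$, which enters both $(\star)$ and $(\ddagger)$. Applying the value‑difference lemma (Lemma~\ref{lemma: value difference}) coordinate‑wise against every $g$ with $\norm{g}_\infty\le1$ and using $\norm{V^{\pi_k}(\cdot;P,g)}_\infty\le H$ gives $\norm{d_{\pi_k}(\bar P_{k-1})-d_{\pi_k}(P)}_1\le H\,\Ex{\sum_{h=1}^H\norm{\br*{\bar P_{k-1}-P}(\cdot\mid s_h^k,a_h^k)}_1\mid F_{k-1}}$, where the expectation is over the trajectory generated by $P$ and $\pi_k$; under $E^p$ this is at most $2H\sqrt{S\log\tfrac{40SAHK^3}{\delta}}\,\Ex{\sum_{h=1}^H\tfrac{1}{\sqrt{n_{k-1}(s_h^k,a_h^k)\vee1}}\mid F_{k-1}}$, and after summing in $k$ the visitation sum is controlled by $E^N$.

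With this in hand I would finish as follows. For $(\star)$, split $\norm{d_{\pi_k}(\bar P_{k-1})}_{A_{k-1}^{-1}}\le\norm{d_{\pi_k}(P)}_{A_{k-1}^{-1}}+\tfrac{1}{\sqrt H}\norm{d_{\pi_k}(\bar P_{k-1})-d_{\pi_k}(P)}_1$ (triangle inequality and $\norm{\cdot}_{A_{k-1}^{-1}}\le\tfrac{1}{\sqrt\lambda}\norm{\cdot}_1$ with $\lambda=H$); the first piece gives $\sum_k\Ex{\norm{d_{\pi_k}(P)}_{A_{k-1}^{-1}}\mid F_{k-1}}=\Olog(\sqrt{HSAK})$ by $E^d$, and the second reduces to the $E^N$ visitation sum $\Olog\br*{\sqrt{SAHK}+SAH+H^2}$ times $\Olog(\sqrt{HS})$. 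Since $b^{pv}_{k-1}(s,a)=H\sqrt{\log\tfrac{40SAH^2k^3}{\delta}}/\sqrt{n_{k-1}(s,a)\vee1}$, the term $(\dagger)$ is $\Olog(H)$ times the same $E^N$ visitation sum, and $(\ddagger)$ is $\Olog(H)\cdot\Olog(H\sqrt S)$ times it. Substituting $l_K,v_K=\Olog(\sqrt{SAH})$ and using $SA\ge H$ to merge $\sqrt{SA}+H$ factors, the only full‑order $\sqrt K$ contribution is $g_K\cdot\Olog(\sqrt{HS})\cdot\Olog(\sqrt{SAHK})=\Olog\br*{S^2A^{3/2}H^{3/2}\sqrt K}=\Olog\br*{SH(SA+H)\sqrt{AHK\log K}\,\log\br*{\tfrac{SAHK}{\delta}}^{3/2}}$, the $K$‑independent $E^N$ terms contribute the stated $\Olog\br*{H^2\sqrt S(SA+H)^2\log\br*{\tfrac{SAHK}{\delta}}^2\sqrt{\log K}}$, and everything else — the ``true‑model'' piece of $(\star)$ (which only reproduces the known‑model rate $(SA)^{3/2}H\sqrt K$), the terms $(\dagger)$, $(\ddagger)$, the $\Olog(SAH)$ remainder, and the $k=1$ term — is of strictly smaller order under $SA\ge H$.

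I expect the main obstacle to be the second step: establishing the occupancy‑measure / simulation bound on $d_{\pi_k}(\bar P_{k-1})-d_{\pi_k}(P)$ with the correct $H$‑dependence and, crucially, with the visitation counts written as an expectation under the \emph{true} kernel $P$ so that $E^N$ (which is stated for the played trajectory) applies — even though $d_{\pi_k}(\bar P_{k-1})$ is the occupancy measure of $\pi_k$ under the \emph{empirical} kernel $\bar P_{k-1}$, and $\pi_k$ itself is only $F_{k-1}\cup\brc*{\xi_k}$‑measurable, so one must interchange conditional expectations carefully. Once this bound is available, the remainder is careful but routine accounting against the good‑event estimates $E^r$, $E^p$, $E^N$ and $E^d$.
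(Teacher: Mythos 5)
Your proposal is correct and follows essentially the same route as the paper: sum Lemma~\ref{lemma: gap bound by on policy errors ucbvi-ts}, split the on-policy term into the reward-estimation part (via $E^r$), the occupancy-mismatch part, and the bonus part, bound $\norm{d_{\pi_k}(\bar{P}_{k-1})}_{A_{k-1}^{-1}}$ by $\norm{d_{\pi_k}}_{A_{k-1}^{-1}}+\tfrac{1}{\sqrt{H}}\norm{d_{\pi_k}(\bar{P}_{k-1})-d_{\pi_k}}_1$, and control everything with $E^d$, $E^p$ and $E^N$ — your groups $(\star),(\dagger),(\ddagger)$ are the paper's terms $(a),(c),(b)$ in different packaging. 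The ``main obstacle'' you identify is exactly the paper's Lemma~\ref{lemma: l1 occupancy measure difference}, which is proved there by the same value-difference argument (with the expectation under the true kernel $P$) that you sketch.
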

\begin{proof}
    We start by following similar steps to the proof of Theorem~\ref{theorem: performance ts known model}, when the transition model is known. Conditioning on the good event and by Lemma~\ref{lemma: gap bound by on policy errors ucbvi-ts} with $\lambda=H$, we get 
\begin{align}
    \sum_{k=1}^K\Ex{d_{\pi^*}^Tr - d_{\pi_k}^Tr | F_{k-1}} 
    &\leq \sum_{k=1}^K(2c+1)v_k \br*{\sqrt{SA} + \sqrt{16\log k}}\Ex{\norm{d_{\pi_k}(\bar{P}_{k-1})}_{A_{k-1}^{-1}}\vert F_{k-1}} + \sum_{k=1}^K\frac{(2c+1)v_k \sqrt{SAH}}{k^2}\nonumber\\
    &\quad+\sum_{k=1}^K\Ex{d_{\pi_k}(\bar{P}_{k-1})^T\hat{r}_{k-1}^b -d_{\pi_k}^Tr| F_{k-1}}\nonumber\\
    & \le (2c+1)v_K \br*{\sqrt{SA} + \sqrt{16\log K}} \sum_{k=1}^K\Ex{\norm{d_{\pi_k}(\bar{P}_{k-1})}_{A_{k-1}^{-1}}\vert F_{k-1}} +  2(2c+1)v_K \sqrt{SAH} \nonumber\\ 
    &\quad+ \sum_{k=1}^K\Ex{d_{\pi_k}(\bar{P}_{k-1})^T\hat{r}_{k-1}^b -d_{\pi_k}^Tr| F_{k-1}}\label{eq: ucbvi-ts relation 1 gap bound}
\end{align}
where the last inequality is since $v_k\le v_K$ for any $k\le K$ and $\sum_{k=1}^K\frac{1}{k^2}\le 2$. Next, we further decompose the last term of \eqref{eq: ucbvi-ts relation 1 gap bound} as follows:
\begin{align*}
    \sum_{k=1}^K&\Ex{d_{\pi_k}(\bar{P}_{k-1})^T\hat{r}_{k-1}^b -d_{\pi_k}^Tr| F_{k-1}}\\
    & = \sum_{k=1}^K\Ex{d_{\pi_k}(\bar{P}_{k-1})^T\br*{\hat{r}_{k-1} -r}| F_{k-1}} + \sum_{k=1}^K\Ex{\br*{d_{\pi_k}(\bar{P}_{k-1})-d_{\pi_k}}^Tr| F_{k-1}} +  \sum_{k=1}^K\Ex{d_{\pi_k}(\bar{P}_{k-1})^Tb_{k-1}^{pv}} \\
    & \le \sum_{k=1}^K\Ex{d_{\pi_k}(\bar{P}_{k-1})^T\br*{\hat{r}_{k-1} -r}| F_{k-1}} + \sum_{k=1}^K\Ex{\norm*{d_{\pi_k} - d_{\pi_k}(\bar{P}_{k-1})}_1\norm{r}_{\infty}| F_{k-1}} 
    +  \sum_{k=1}^K\Ex{d_{\pi_k}(\bar{P}_{k-1})^Tb_{k-1}^{pv}} \tag{H\"older's inequality} \\
    &\le l_{K-1}\sum_{k=1}^K\Ex{\norm*{d_{\pi_k}(\bar{P}_{k-1})}_{A_{k-1}^{-1}}| F_{k-1}} + \sum_{k=1}^K\Ex{\norm*{d_{\pi_k} - d_{\pi_k}(\bar{P}_{k-1})}_1| F_{k-1}} 
    +  \sum_{k=1}^K\Ex{d_{\pi_k}(\bar{P}_{k-1})^Tb_{k-1}^{pv}} 
\end{align*}

where the last inequality holds under $E^r$, since $l_{k-1}\le L_{K-1}$ for all $k\le K$ and $\norm*{r}_{\infty}\le1$. Substituting back into \eqref{eq: ucbvi-ts relation 1 gap bound} and defining $g_k=l_k+(2c+1)v_k\br*{\sqrt{SA} + \sqrt{16\log k}}$, we get 
\begin{align}
    \sum_{k=1}^K\Ex{d_{\pi^*}^Tr - d_{\pi_k}^Tr | F_{k-1}}
    &\le g_K\sum_{k=1}^K\Ex{\norm{d_{\pi_k}(\bar{P}_{k-1})}_{A_{k-1}^{-1}}\vert F_{k-1}} + \sum_{k=1}^K\Ex{d_{\pi_k}(\bar{P}_{k-1})^Tb_{k-1}^{pv}}
    \nonumber\\
    &\quad + \sum_{k=1}^K\Ex{\norm*{d_{\pi_k} - d_{\pi_k}(\bar{P}_{k-1})}_1| F_{k-1}}+  2(2c+1)v_K \sqrt{SAH}   \label{eq: ucbvi-ts relation 2 gap bound}
\end{align}
Before we continue, we focus on simplifying the first two terms of \eqref{eq: ucbvi-ts relation 2 gap bound}. The first term of \eqref{eq: ucbvi-ts relation 2 gap bound} can be bounded by
\begin{align*}
   \sum_{k=1}^K  \Ex{\norm{d_{\pi_k}(\bar{P}_{k-1})}_{A_{k-1}^{-1}} \mid F_{k-1}} 
   &\leq \sum_{k=1}^K  \Ex{\norm{d_{\pi_k}}_{A_{k-1}^{-1}} \mid F_{k-1}} +\sum_{k=1}^K  \Ex{\norm{d_{\pi_k}(\bar{P}_{k-1})- d_{\pi_k}}_{A_{k-1}^{-1}} \mid F_{k-1}} \\
   &\leq \sum_{k=1}^K  \Ex{\norm{d_{\pi_k}}_{A_{k-1}^{-1}} \mid F_{k-1}} +\frac{1}{\sqrt{H}} \sum_{k=1}^K  \Ex{\norm{d_{\pi_k}(\bar{P}_{k-1})- d_{\pi_k}}_{1} \mid F_{k-1}},
\end{align*}
where in the last inequality, we used the fact that $\lambda=H$, and therefore, for any $x\in\R^{SA}$,
$$\norm{x}_{A_{k-1}^{-1}} \le \frac{1}{\sqrt{\lambda}}\norm{x}_2=\frac{1}{\sqrt{H}}\norm{x}_2\le\frac{1}{\sqrt{H}}\norm{x}_1,$$
The second term of \eqref{eq: ucbvi-ts relation 2 gap bound} can be bounded by
\begin{align*}
    \sum_{k=1}^K  &\Ex{d_{\pi_k}(\bar{P}_{k-1})^Tb_{k-1}^{pv} \mid F_{k-1}}  \\
    &\qquad= \sum_{k=1}^K  \Ex{\br*{d_{\pi_k}(\bar{P}_{k-1})-d_{\pi_k}}b_{k-1}^{pv} \mid F_{k-1}} +\sum_{k=1}^K  \Ex{ d_{\pi_k}b_{k-1}^{pv} \mid F_{k-1}}\\
    &\qquad\leq \sum_{k=1}^K  \Ex{\norm{d_{\pi_k}(\bar{P}_{k-1})-d_{\pi_k}}\norm{b_{k-1}^{pv}}_\infty \mid F_{k-1}}+\sum_{k=1}^K  \Ex{ d_{\pi_k}b_{k-1}^{pv} \mid F_{k-1}} \tag{H\"older's inequality}\\
     &\qquad\leq H\sqrt{\log\br*{\frac{40SAHK}{\delta}}}\sum_{k=1}^K  \Ex{\norm{d_{\pi_k}(\bar{P}_{k-1})-d_{\pi_k}}_1 \mid F_{k-1}}\\
    &\qquad\quad+H\sqrt{\log\br*{\frac{40SAHK}{\delta}}} \sum_{k=1}^K  \Ex{ \sum_{h=1}^{H} \frac{1}{\sqrt{n_{k-1}(s_h^k,a_h^k)\vee 1}}\mid F_{k-1}}.
\end{align*}
where we bounded $\norm*{b^{pv}_{k-1}}_\infty \le H\sqrt{\log\br*{\frac{40SAHK}{\delta}}}$.

Finally, substituting both these terms back into \eqref{eq: ucbvi-ts relation 2 gap bound} yields
\begin{align}
    \sum_{k=1}^K&\Ex{(d_{\pi^*}^Tr - d_{\pi_k}^Tr) | F_{k-1}} \nonumber\\
    &\leq \underbrace{g_K \sum_{k=1}^K  \Ex{\norm{d_{\pi_k}}_{A_{k-1}^{-1}} \mid F_{k-1}}}_{(a)} 
    + \underbrace{\br*{\frac{g_K}{\sqrt{H}} + H\sqrt{\frac{1}{2}\log\br*{\frac{40SAHK}{\delta}}}+1} \sum_{k=1}^K  \Ex{\norm{d_{\pi_k}(\bar{P}_{k-1})- d_{\pi_k}}_{1} \mid F_{k-1}}}_{(b)}\nonumber \\
    &\quad + \underbrace{H\sqrt{\frac{1}{2}\log\br*{\frac{40SAHK}{\delta}}} \sum_{k=1}^K  \Ex{ \sum_{h=1}^{H} \frac{1}{\sqrt{n_{k-1}(s_h^k,a_h^k)\vee 1}}\mid F_{k-1}}}_{(c)} + 2(2c+1)v_K \sqrt{SAH}  .\label{eq: ucbvi-ts relation 3 gap bound}
\end{align}

\textbf{Bounding $(a)$:} Conditioning on the good event, and, specifically, when $E^d$ holds, term $(a)$ is bounded by
\begin{align*}
   (a) \leq \Ocal\br*{g_K \sqrt{HKSA\log\br*{\frac{HK}{\delta}}}} \leq \Ocal\br*{ (SA)^{3/2}H\sqrt{K \log(K)}\log\br*{\frac{KH}{\delta}}}
\end{align*}
where the second relation holds since $g_K\leq \Ocal\br*{SA\sqrt{H \log(K)\log\br*{\frac{KH}{\delta}}}}$. Observe that this term also appeared in the case the transition model is given (see Section~\ref{supp: proof of ts with known model}).

\textbf{Bounding $(b)$:} Term $(b)$ originates from the fact the transition model is not the true one and, thus, needs to be learned. Specifically, we use the following bound:
\begin{align*}
    &\sum_{k=1}^K  \Ex{\norm{d_{\pi_k}(\bar{P}_{k-1})- d_{\pi_k}}_{1} \mid F_{k-1}}\\
    &\leq H\sum_{k=1}^K  \Ex{\sum_{h=1}^H \norm{P(\cdot \mid s_h^k,a_h^k) -\bar{P}_{k-1}(\cdot \mid s_h^k,a_h^k)}_1 \mid F_{k-1}} \tag{Lemma~\ref{lemma: l1 occupancy measure difference}}\\
    &\leq \Ocal\br*{H\sqrt{S\log\br*{\frac{SAHK}{\delta}}}} \sum_{k=1}^K  \Ex{\sum_{h=1}^H \frac{1}{\sqrt{n_{k-1}(s_h^k,a_h^k)}} \mid F_{k-1}} \tag{The Event $E^p$ holds}\\
    &\leq  \Ocal\br*{H^2\sqrt{S}(SA + H)\log\br*{\frac{SAHK}{\delta}}^{\frac{3}{2}} + SH\sqrt{AHK} \log\br*{\frac{SAHK}{\delta}} } \tag{Event $E^N$ holds}
\end{align*}
Then, noticing that
\begin{align*}
    \frac{g_K}{\sqrt{H}} + H\sqrt{\frac{1}{2}\log\br*{\frac{40SAHK}{\delta}}} + 1 
    = \Ocal\br*{(SA+H)\sqrt{\log K\log\br*{\frac{SAHK}{\delta}}}},
\end{align*}
we get 
\begin{align*}
   (b)\leq \Ocal\br*{H^2\sqrt{S}(SA + H)^2\log\br*{\frac{SAHK}{\delta}}^2\sqrt{\log K} + SH(SA+H)\sqrt{AHK\log K} \log\br*{\frac{SAHK}{\delta}}^{\frac{3}{2}} }.
\end{align*}

    \textbf{Bounding $(c)$:} Term (c) measures the on policy value of the bonus on the estimates MDP. We bound this term using $E^N$ by
    \begin{align*}
        (c)\leq \Ocal\br*{H^2\br*{SA + H\log\br*{\frac{K}{\delta}}}\sqrt{\log\br*{\frac{SAHK}{\delta}}} + \sqrt{SAHK\log HK \log\br*{\frac{SAHK}{\delta}}}}.
    \end{align*}
    Importantly, notice that $(a)$ and $(c)$ are negligible, when compared to $(b)$. Then, substituting the bounds back into \eqref{eq: ucbvi-ts relation 3 gap bound} concludes the proof.
\end{proof}
 
\clearpage

\section{Analysis of the Rarely-Switching UCBVI-TS}\label{supp: rarely switching ucbvi-ts}

\begin{algorithm}[t]
\caption{Rarely-Switching UCBVI-TS} \label{alg: ucbvi-ts doubling}
\begin{algorithmic}
\STATE {\bf Require:} {\small$\delta\in(0,1)$, $\enspace C>0$, $\enspace\lambda=H$, \\ $\qquad\qquad v_k=\sqrt{9SAH\log\frac{kH^2}{\delta/10}}$, \\
$\qquad\qquad l_k= \sqrt{\frac{1}{4}SAH\log\br*{\frac{1+kH^2/\lambda}{\delta/10}}}+\sqrt{\lambda SA}$, \\ 
$\qquad\qquad b^{pv}_k(s,a) = \sqrt{\frac{H^2\log \frac{40SAH^2k^3}{\delta}}{n_{k}(s,a)\vee 1}}$}
\STATE {\bf Initialize:} $A_0=B_0=\lambda I_{SA}$, $Y_0=Z_0=\mathbf{0}_{SA}$, \\ \qquad\qquad\, Counters $n_{0}(s,a) = 0,\, \forall s,a$.
\FOR{$k=1,...,K$}
    \STATE Calculate $\hat{r}_{k-1}$ via LS estimation~\eqref{eq: ls estimator for r} and $\bar{P}_{k}$ by \eqref{eq:transition estimation}
    \STATE Draw noise $\xi_k\sim \mathcal{N}(0,v_{k}^2A_{k-1}^{-1})$ and define $\tilde{r}_k^b = \hat{r}_{k-1}+\xi_k+b^{pv}_{k-1}$
    \STATE Solve empirical MDP with optimistic-perturbed reward, $\pi_k\in \arg\max_{\pi} d(\bar{P}_{k-1})^T\tilde{r}_k^b$ 
    \STATE Play $\pi_k$, observe $\hat{V}_k$ and $\brc*{(s_h^k,a_h^k)}_{h=1}^H$
    \STATE  Update counters $n_k(s,a)$, $B_{k}= B_{k-1} + \hat{d}_k \hat{d}_k^T $ and $Z_{k} = Z_{k-1} + \hat{d}_k \hat{V}_k$.
    \IF{$\det{B_k}>(1+C)\det{A_{k-1}}$}
    \STATE $A_k=B_k$, $Y_k=Z_k$
    \ELSE
    \STATE $A_k=A_{k-1}$, $Y_k=Y_{k-1}$
    \ENDIF
\ENDFOR
\end{algorithmic}
\end{algorithm}

\switchesBound*
\begin{proof}
    For brevity, denote $N = \sum_{k=1}^K \indicator{A_k \neq A_{k-1}}$, the number of times that $A_k$ was updated before time $K$. Also, denote by $\tau_i$, the time of the $i^{th}$ update of $A_k$, with $\tau_0=0$. Specifically, it implies that $A_K=A_{\tau_N}$. Then, by the update rule, for any $i\in\brs*{N}$, it holds that $\det{A_{\tau_i}} \ge (1+C) \det{A_{\tau_{i-1}}}$. Thus, by a simple induction, we have 
    \begin{align*}
       \det{A_{\tau_N}} 
       \ge (1+C)^N \det{A_{\tau_{0}}}
       =(1+C)^N \det{\lambda I_{SA}}
       = (1+C)^N \lambda^{SA}\enspace.
    \end{align*}
    Also, notice that $\tau_N\le K$ and recall that $\norm*{\hat{d}_k}_2\le \norm*{\hat{d}_k}_1=H$; therefore, by Lemma~\ref{lemma: determinant trace lemma abbasi}, we have 
    \begin{align*}
        \det{A_{\tau_N}} 
        \le \br*{\lambda + \frac{\tau_NH^2}{SA}}^{SA}
        \le \br*{\lambda + \frac{KH^2}{SA}}^{SA}\enspace.
    \end{align*}
    Combining both inequalities and reorganizing leads to the desired result and concludes the proof.
\end{proof}

\theoremSwitchingTSUCBVI*
\begin{proof}
    The proof is very similar to the one of Theorem~\ref{theorem: performance ucbvi ts}. Thus, we follow the outline of the proof of Theorem~\ref{theorem: performance ucbvi ts} and only present the required modifications.
    
    \paragraph{Good events:} We use the good events as defined in Appendix~\ref{supp: good events unknown}. Importantly, under the notations of the modified algorithm, the event $E^d(K)$ is defined as 
    \begin{align*}
        E^d(K) = \brc*{\sum_{k=0}^{K}\Ex{ \norm{d_{\pi_k}}_{B_{k-1}^{-1}}  | F_{k-1}} \leq 4\sqrt{\frac{H^2}{\lambda}K\log\br*{\frac{20K}{\delta}}} + \sqrt{2\frac{H^2}{\lambda}KSA\log\br*{\lambda + \frac{KH^2}{SA}}} }.
    \end{align*}
    Then. the good event holds w.p. greater than $1-\frac{\delta}{2}$ (Lemma~\ref{lemma: good events all ucbvi-ts}). We remind the readers that $B_k = \lambda I_{SA} + \sum_{s=1}^{k} \hat{d}_s \hat{d}_s^T$, i.e., it is the updated Gram matrix (without the rarely-switching mechanism).
    
    \paragraph{Extending the expected elliptical lemma for the rarely-switching case: } In the analysis, the expected elliptical lemma (Lemma~\ref{lemma: elliptical potential lemma with side information}) is used to bound the probability of $E^d$, which is defined w.r.t. $B_k$. However, we require a similar result for $A_k$. Denote by $u_k$,  the last time were $A_k$ was updated. Then, $B_k-A_k = \sum_{s=u_k+1}^k \hat{d}_s\hat{d}_s^T$ is positive semi-definite, i.e., $A_k\preceq B_k$. Since both matrices are positive-definite, this also implies that $A_k^{-1}\succeq B_k^{-1}$. Thus, by Lemma~\ref{lemma: determinant ratio lemma abbasi}, for any $x\in\R^{SA}$,
    \begin{align}
        \norm*{x}_{A_k^{-1}} \le \norm*{x}_{B_k^{-1}} \sqrt{\frac{\det{A_k^{-1}}}{\det{B_k^{-1}}}}
        = \norm*{x}_{B_k^{-1}} \sqrt{\frac{\det{B_k}}{\det{A_k}}} \enspace, \label{eq: norm A-B relation}
    \end{align}
    where the last inequality is since for any $A\succ0$, $\det{A^{-1}}=(\det{A})^{-1}$. Next, notice that by the update rule, if $A_k\neq B_k$, then $A_k$ was not updated at round $k$ and 
    $$\det{B_k}\le (1+C)\det{A_{k-1}}=(1+C)\det{A_k}\enspace.$$
    If $A_k = B_k$, then for $C>0$, we also have $\det{B_k}\le(1+C)\det{A_k}$. Substituting back into \eqref{eq: norm A-B relation}, we get for any $k\ge0$ and any $x\in\R^{SA}$,
    \begin{align*}
        \norm*{x}_{A_k^{-1}} \le \sqrt{1+C}\norm*{x}_{B_k^{-1}} ,
    \end{align*}
    which in turn implies that under $E^d$, we have for any $K>0$,
    \begin{align}
        \sum_{k=0}^{K}\Ex{ \norm{d_{\pi_k}}_{A_{k-1}^{-1}}  | F_{k-1}} &\leq \sqrt{1+C}\sum_{k=0}^{K}\Ex{ \norm{d_{\pi_k}}_{B_{k-1}^{-1}}  | F_{k-1}} \nonumber\\
        &\leq 4\sqrt{\frac{H^2}{\lambda}(1+C)K\log\br*{\frac{20K}{\delta}}} + \sqrt{2\frac{H^2}{\lambda}(1+C)KSA\log\br*{\lambda + \frac{KH^2}{SA}}} \label{eq:switching expected elliptical lemma}.
    \end{align}

    \paragraph{TS auxiliary lemmas:} Notice that Lemmas \ref{lemma: optimism ucrl}  and \ref{lemma: gap bound by on policy errors ucbvi-ts} rely on the reward concentration of the LS estimator, according to $E^r(k-1)$. Specifically, if $u_k$ is the last time where $A_k$ was updated, both lemmas would hold by replacing $E^r(k-1)$ by $E^r(u_{k-1})$, as $u_{k-1}$ is $F_{k-1}$-measurable. The only subtlety is the fact that $l_{k-1}$ is replaced by $l_{u_{k-1}}$, but since $u_{k-1}\le k-1$, it also holds that $l_{u_{k-1}}\le l_{k-1}$, and one can easily verify that it is enough for the results to hold.
    
    \paragraph{Conditional gap bounds (Lemma~\ref{lemma: conditional gap bound ts unknown model}):} Since Lemmas \ref{lemma: optimism ucrl}  and \ref{lemma: gap bound by on policy errors ucbvi-ts} still hold, the same derivation holds up to Equation~\eqref{eq: ucbvi-ts relation 3 gap bound}. Moreover, terms $(b)$ and $(c)$ in this equation do not depend on $A_{k-1}$; therefore, they remain the same. Under $E^d$, we can also bound term $(a)$ by Equation \eqref{eq:switching expected elliptical lemma}, namely,
    \begin{align*}
       (a) \leq \Ocal\br*{g_K \sqrt{(1+C)HKSA\log\br*{\frac{HK}{\delta}}}} \leq \Ocal\br*{ (SA)^{3/2}H\sqrt{(1+C)K \log(K)}\log\br*{\frac{KH}{\delta}}}
   \end{align*}
   where we substituted $\lambda=H$ and $g_K\leq \Ocal\br*{SA\sqrt{H \log(K)\log\br*{\frac{KH}{\delta}}}}$. Combining with the bounds on terms $(b)$ and $(c)$, and recalling that term $(b)$ is the dominant between the terms, we get
    \begin{align}
        \sum_{k=1}^K&\Ex{(d_{\pi^*}^Tr - d_{\pi_k}^Tr) | F_{k-1}} \nonumber\\
        &\leq \Ocal\br*{ (SA)^{3/2}H\sqrt{(1+C)K \log(K)}\log\br*{\frac{KH}{\delta}}} \nonumber\\
        &\quad+ \Ocal\br*{H^2\sqrt{S}(SA + H)^2\log\br*{\frac{SAHK}{\delta}}^2\sqrt{\log K} + SH(SA+H)\sqrt{AHK\log K} \log\br*{\frac{SAHK}{\delta}}^{\frac{3}{2}} }  .\label{eq: switching ucbvi-ts gap bound}
    \end{align}
    
    \paragraph{Bounding the regret of the Rarely-Switching UCBVI-TS:} We follow the proof of Theorem~\ref{theorem: performance ucbvi ts}. Conditioned on the good event, which holds w.p. $1-\frac{\delta}{2}$, we can write 
    \begin{align*}
        \Regret(K) &= \sum_{k=1}^K d_{\pi^*}^Tr - d_{\pi_k}^Tr 
        = \underbrace{\sum_{k=1}^K \Ex{\br*{d_{\pi^*}^Tr - d_{\pi_k}^Tr} | F_{k-1}}}_{(i)} + \underbrace{\sum_{k=1}^K \Ex{d_{\pi_k}^Tr | F_{k-1}} - d_{\pi_k}^Tr}_{(ii)}. 
    \end{align*}
    
    The first term $(i)$ is bounded in Equation \eqref{eq: switching ucbvi-ts gap bound}, while the second term $(ii)$ is bounded w.p. $1-\delta/2$ by 
    \begin{align*}
        (ii)\leq \Ocal\br*{H\sqrt{K\log\br*{\frac{K}{\delta}}}},
    \end{align*} 
    exactly as in the proof of Theorem~\ref{theorem: performance ts known model}. Combining the bounds on $(i)$ and $(ii)$, we get 
    \begin{align*}
        \Regret&(K) 
        \le \Ocal\br*{ (SA)^{3/2}H\sqrt{(1+C)K \log(K)}\log\br*{\frac{KH}{\delta}}} \\
        &\quad+ \Ocal\br*{H^2\sqrt{S}(SA + H)^2\log\br*{\frac{SAHK}{\delta}}^2\sqrt{\log K} + SH(SA+H)\sqrt{AHK\log K} \log\br*{\frac{SAHK}{\delta}}^{\frac{3}{2}} } \\
        & = \Olog\br*{SH(SA+H)\sqrt{AHK} + (SA)^{3/2}H\sqrt{(1+C)K} + H^2\sqrt{S}(SA + H)^2}\enspace,
    \end{align*}
    which concludes the proof
    
\end{proof}
\clearpage

\section{Value Difference Lemmas}
\begin{lemma}[Value difference lemma, e.g.,  \citealt{dann2017unifying}, Lemma E.15] \label{lemma: value difference}
Consider two MDPs $M = (S, A, P, r,H)$ and $M' = (S, A, P', r', H)$. For any policy $\pi$ and any  $s,h$ the following relation holds:
\begin{align*}
        V^\pi_h&(s; M) - V^\pi_h(s;M') \\
    &= \E\left[\sum_{h'=h}^H (r_{h'}(s_{h'},a_{h'}) - r'_{h'}(s_{h'},a_{h'})) + (P - P')(\cdot \mid s_{h'},a_{h'})^\top V^{\pi}_{t+1}(\cdot; M')\mid s_h=s,\pi,P \right].
\end{align*}
\end{lemma}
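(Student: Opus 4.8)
The plan is to establish the identity by backward induction on $h$, using the one-step Bellman decomposition of each value function together with a standard ``add-and-subtract'' telescoping step. Write $a_{h'} = \pi(s_{h'},h')$ throughout, and adopt the convention $V^\pi_{H+1}(\cdot;M) = V^\pi_{H+1}(\cdot;M') = 0$. The base case is $h = H+1$: the right-hand side is then an empty sum, so both sides are $0$.

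For the inductive step, assume the identity holds for $h+1$. By the Bellman equations for the two MDPs,
\begin{align*}
    V^\pi_h(s;M) - V^\pi_h(s;M')
    &= \big(r_h - r'_h\big)(s,a_h) + \sum_{s'} P(s'\mid s,a_h)\, V^\pi_{h+1}(s';M) - \sum_{s'} P'(s'\mid s,a_h)\, V^\pi_{h+1}(s';M').
\end{align*}
Adding and subtracting $\sum_{s'} P(s'\mid s,a_h)\, V^\pi_{h+1}(s';M')$ regroups this as
\begin{align*}
    V^\pi_h(s;M) - V^\pi_h(s;M')
    &= \big(r_h - r'_h\big)(s,a_h) + (P-P')(\cdot\mid s,a_h)^\top V^\pi_{h+1}(\cdot;M') \\
    &\quad+ \sum_{s'} P(s'\mid s,a_h)\big(V^\pi_{h+1}(s';M) - V^\pi_{h+1}(s';M')\big).
\end{align*}
The first two terms are exactly the $h'=h$ summand on the right-hand side of the claim. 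To the last term I apply the induction hypothesis to $V^\pi_{h+1}(s';M) - V^\pi_{h+1}(s';M')$ for each $s'$, and then note that averaging the resulting conditional expectation over $s'\sim P(\cdot\mid s,a_h)$ is precisely the tower property for trajectories rolled out by $\pi$ in $M$: it yields $\E\big[\sum_{h'=h+1}^H (r_{h'}-r'_{h'})(s_{h'},a_{h'}) + (P-P')(\cdot\mid s_{h'},a_{h'})^\top V^\pi_{h'+1}(\cdot;M') \mid s_h = s,\pi,P\big]$. Combining the two pieces gives the identity for $h$, closing the induction.

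Since this is a textbook telescoping, there is no genuine obstacle; the only points that need a little care are (i) fixing $V^\pi_{H+1}\equiv 0$ so the base case is clean, and (ii) keeping track of the asymmetry that the add-and-subtract step is designed to create — the trajectory expectation is taken under the transition kernel $P$ of $M$ (hence the conditioning on $P$), while the value function that appears inside the expectation is that of $M'$. Alternatively, one may unroll the one-step identity $H-h+1$ times directly rather than recursively; this is the same computation written linearly, and I would mention it as a remark but carry out the induction as the primary argument.
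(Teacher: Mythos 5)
Your proof is correct. The paper does not prove this lemma itself (it is quoted from \citet{dann2017unifying}, Lemma E.15), and your backward-induction argument --- Bellman decomposition, add-and-subtract of $\sum_{s'}P(s'\mid s,a_h)V^\pi_{h+1}(s';M')$, then the tower property over $s'\sim P(\cdot\mid s,a_h)$ --- is exactly the standard derivation of that result, including the correct handling of the asymmetry (expectation under $P$, value of $M'$ inside) and the convention $V^\pi_{H+1}\equiv 0$; note also that you implicitly fix the statement's typo $V^\pi_{t+1}$, which should read $V^\pi_{h'+1}$.
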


The following lemma is a known result (e.g~\citeauthor{rosenberg2019online,jin2019learning}). It bounds the $L_1$ norm of the occupancy measure of a fixed policy for different models, $\norm{q^\pi(P_1) - q^\pi(P_2)}_1$. We supply here a new proof for the lemma which is solely based on the value difference lemma. Thus, it shows this bound is a direct consequence of the value difference lemma. 

\begin{lemma}[Total Variation Occupancy Measure Difference] \label{lemma: l1 occupancy measure difference}
    Let $P_1$ and $P_2$ represent two transition probabilities defined on a common state-action space. Let $\pi$ be a fixed policy and let $q^\pi(P_1),q^\pi(P_2) \in \R^{SAH}$ be the occupancy measure of $\pi$ w.r.t. $P_1$ and $P_2$, respectively, and w.r.t. a common fixed initial state $s_1$.   Then, 
    \begin{align*}
       \norm{q^\pi(P_1) - q^\pi(P_2)}_1 &= \sum_{s,a,h} |q^\pi_h(s,a;P_1) - q^\pi_h(s,a;P_2)|\\
       &\leq H\Ex{\sum_{h=1}^H\norm{P_1(\cdot \mid s_h,a_h) - P_2(\cdot \mid s_h,a_h)}_1\mid s_1,\pi,P_2}
    \end{align*}
    Furthermore, for $d_\pi(p)$ such that $d_\pi(s,a;p)=\sum_{h=1}^H q_h^\pi(s,a;p)$, it holds that
    \begin{align*}
       \norm{d_\pi(P_1) - d_\pi(P_2)}_1 
       &\leq H\Ex{\sum_{h=1}^H\norm{P_1(\cdot \mid s_h,a_h) - P_2(\cdot \mid s_h,a_h)}_1\mid s_1,\pi,P_2}
    \end{align*}
\end{lemma}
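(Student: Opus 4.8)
## Proof Proposal for Lemma~\ref{lemma: l1 occupancy measure difference}

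The plan is to derive the bound directly from the value difference lemma (Lemma~\ref{lemma: value difference}) by a clever choice of reward function. The key observation is that the quantity $\sum_{s,a,h} \abs*{q^\pi_h(s,a;P_1) - q^\pi_h(s,a;P_2)}$ can be written as $\max_{r} \br*{q^\pi(P_1) - q^\pi(P_2)}^T r$ over all $r\in\R^{SAH}$ with $\norm*{r}_\infty\le 1$; by linearity (Equation~\eqref{eq:prelim_v_to_sa_dist}, applied per time-step), this equals $\max_r \br*{V_1^\pi(s_1;P_1,r) - V_1^\pi(s_1;P_2,r)}$ over the same set of rewards. So it suffices to fix an arbitrary (time-dependent) reward $r$ with $\norm*{r}_\infty\le 1$ and bound $V_1^\pi(s_1;P_1,r) - V_1^\pi(s_1;P_2,r)$ uniformly.

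First I would apply Lemma~\ref{lemma: value difference} with $M=(S,A,P_1,r,H)$ and $M'=(S,A,P_2,r,H)$ (same reward, so the reward-difference terms vanish), which gives
\[
V_1^\pi(s_1;P_1,r) - V_1^\pi(s_1;P_2,r) = \Ex{\sum_{h=1}^H \br*{P_1 - P_2}(\cdot\mid s_h,a_h)^T V_{h+1}^\pi(\cdot;P_2,r) \;\middle|\; s_1,\pi,P_1}.
\]
Next I would bound each summand by H\"older's inequality: $\abs*{\br*{P_1-P_2}(\cdot\mid s_h,a_h)^T V_{h+1}^\pi(\cdot;P_2,r)} \le \norm*{P_1(\cdot\mid s_h,a_h) - P_2(\cdot\mid s_h,a_h)}_1 \norm*{V_{h+1}^\pi(\cdot;P_2,r)}_\infty \le H\norm*{P_1(\cdot\mid s_h,a_h) - P_2(\cdot\mid s_h,a_h)}_1$, using $\norm*{r}_\infty\le1$ so that value functions are bounded by $H$. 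One subtlety: the expectation in the value difference lemma is taken over trajectories drawn from $P_1$, whereas the statement of the lemma we want to prove has the expectation under $P_2$. I would resolve this by simply applying the value difference lemma in the other direction (swapping the roles of $P_1$ and $P_2$), which is harmless since the left-hand side $\norm*{q^\pi(P_1)-q^\pi(P_2)}_1$ is symmetric in $P_1,P_2$; this yields the expectation under $P_2$ as stated. Combining gives the first inequality.

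For the second inequality (the $d_\pi$ version), I would note $\norm*{d_\pi(P_1) - d_\pi(P_2)}_1 = \sum_{s,a}\abs*{\sum_h \br*{q^\pi_h(s,a;P_1) - q^\pi_h(s,a;P_2)}} \le \sum_{s,a,h}\abs*{q^\pi_h(s,a;P_1) - q^\pi_h(s,a;P_2)} = \norm*{q^\pi(P_1)-q^\pi(P_2)}_1$ by the triangle inequality, so it follows immediately from the first part. The main obstacle I anticipate is getting the direction of the value-difference expansion and the H\"older step exactly right so that the expectation ends up under the correct model ($P_2$) and the extra factor of $H$ (one from $\norm*{V}_\infty$, none elsewhere) matches the claimed bound; everything else is a routine linearization and norm-duality argument.
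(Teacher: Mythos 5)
Your proposal is correct, but it proves the first inequality by a different mechanism than the paper. You use $\ell_1$--$\ell_\infty$ duality: write $\norm{q^\pi(P_1)-q^\pi(P_2)}_1=\max_{\norm{r}_\infty\le 1}\br*{q^\pi(P_1)-q^\pi(P_2)}^Tr$, apply the value difference lemma once with a generic (time-dependent, possibly signed) reward so that the reward terms cancel, and then bound $\norm*{V^\pi_{h+1}(\cdot;\cdot,r)}_\infty\le H$ via H\"older; the factor $H$ comes from the sup-norm of the value function. The paper instead applies the value difference lemma separately for each coordinate, using the indicator reward $r^{s,a,h}_{h'}(s',a')=\indicator{s'=s,a'=a,h'=h}$, for which each value function is a visitation probability bounded by $1$; it then sums the resulting bounds over $(s,a,h)$ and extracts the factor $H$ from the observation that $\sum_{s,a,h}V^\pi_{h'+1}(s'';s,a,h,P_1)\le H$, since a single trajectory visits at most $H$ state-action pairs. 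Both routes rest on the same value difference lemma, give exactly the claimed constant, and handle the direction issue the same way (expanding so the expectation is under $P_2$, which is legitimate since the left-hand side is symmetric in $P_1,P_2$, or equivalently since the dual ball is symmetric under $r\mapsto -r$). Your duality argument is arguably the more standard ``simulation lemma'' formulation and is slightly shorter; the paper's coordinatewise argument avoids introducing signed, time-dependent rewards (though the value difference lemma, being a purely algebraic telescoping identity, covers them as well, so this is only a cosmetic distinction). Your treatment of the second inequality via the triangle inequality coincides with the paper's.
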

\begin{proof}
    Fix $s,a,h$. Observe that  
    \begin{align*}
        q^\pi_h(s,a;P_1) = \Ex{ \indicator{s_h=s,a_h=a} \mid s_1,P_1,\pi} = \Ex{\sum_{h'=1}^H r^{s,a,h}_{h'}(s_{h'},a_{h'})\mid s_1,P_1,\pi},
    \end{align*}
    where 
    \begin{align*}
        r^{s,a,h}_{h'}(s',a') =\indicator{s'=s,a'=a,h'=h}.
    \end{align*}
    
    Thus, we can interpret $q^\pi_h(s,a;P_1)-q^\pi_h(s,a;P_2)$ as the difference in values w.r.t. to two MDPs $M_1= (S,A,P_1,r^{s,a,h},H)$, $M_2= (S,A,P_2,r^{s,a,h},H)$ and fixed policy $\pi$. Doing so, we can apply the value difference lemma and get
    \begin{align*}
        \abs*{q^\pi_h(s,a;P_1)-q^\pi_h(s,a;P_2)} 
        &= \abs*{ \sum_{h'}^H\Ex{ \br*{P_1(\cdot \mid s_{h'},a_{h'}) - P_2(\cdot \mid s_{h'},a_{h'})}^T V^\pi_{h'+1}(s,a,h,P_1) \mid s_1, \pi, P_2}}\\
        &\leq \sum_{h'}^H\Ex{ \sum_{s''}\abs*{P_1(s'' \mid s_{h'},a_{h'}) - P_2(s'' \mid s_{h'},a_{h'}) } V^\pi_{h'+1}(s'';s,a,h,P_1) \mid s_1, \pi, P_2}.
    \end{align*}
    where the inequality is by the triangle inequality and since $V\ge0$, Summing on both sides on $s,a,h$ and using linearity of expectation we get
    \begin{align*}
        \sum_{s,a,h} &\abs*{q^\pi_h(s,a;P_1)-q^\pi_h(s,a;P_2)}\leq \sum_{h'}^H\Ex{ \sum_{s''}\abs*{P_1(s'' \mid s_{h'},a_{h'}) - P_2(s'' \mid s_{h'},a_{h'}) } \br*{\sum_{s,a,h}V^\pi_{h'+1}(s'';s,a,h,P_1)} \mid s_1, \pi, P_2}\enspace.
    \end{align*}
    
    Next, Observe that by definition,
    \begin{align*}
        V_{h'+1}(s'';s,a,h,P_1) = \Ex{\indicator{s_{h}=s,a_h=a} \mid s'',\pi,P_1},
    \end{align*}
    for $h'+1\leq h$ and zero otherwise. Using this, we see that for any $s''$ and $h'+1$
    \begin{align*}
        \sum_{s,a,h}V^\pi_{h'+1}(s'';s,a,h,P_1) = \Ex{\sum_{s,a}\sum_{h=h'+1}^H \indicator{s_{h}=s,a_h=a} \mid s_{h'+1}=s'',\pi,P_1}, \leq H
    \end{align*}
    since in a single trajectory a policy can visit at most at $H$ state-action pairs. Plugging this bound back and observing that 
    \begin{align*}
        \sum_{s''}\abs*{P_1(s'' \mid s_{h'},a_{h'}) - P_2(s'' \mid s_{h'},a_{h'}) } = \norm{P_1(\cdot \mid s_{h'},a_{h'}) - P_2(\cdot \mid s_{h'},a_{h'})}_1
    \end{align*}
    concludes the proof of the first inequality. The second inequality is a direct result of the first inequality and the triangle inequality, namely,
    \begin{align*}
       \norm{d_\pi(P_1) - d_\pi(P_2)}_1 
       &= \sum_{s,a}\abs*{\sum_h q_h^\pi(s,a;P_1) - q_h^\pi(s,a;P_2)} \\
       &\le \sum_{s,a,h}\abs*{q_h^\pi(s,a;P_1) - q_h^\pi(s,a;P_2)} \tag{triangle inequality}\\
       &= \norm{q^\pi(P_1) - q^\pi(P_2)}_1  \\
       &\leq H\Ex{\sum_{h=1}^H\norm{P_1(\cdot \mid s_h,a_h) - P_2(\cdot \mid s_h,a_h)}_1\mid s_1,\pi,P_2}\enspace.
    \end{align*}
\end{proof}

\clearpage
\section{Cumulative Visitation Bounds}
The following result follows the analysis of Lemma 10 in~\citep{jin2019learning} and generalizes their result to the stationary MDP setting (by doing so, a factor of $\sqrt{H}$ is shaved off relatively to their bound). This result can also be found in other works such as~\citep{dann2015sample}. However, we find the analysis somewhat simpler and for this reason we supply it here.

Observe that for the TS based algorithms $\pi_k$ is not $F_{k-1}$ measurable since the policy $\pi_k$ depends on a noise drawn at the beginning of the $k^{th}$ episode. For this reason, in the following analysis, we use $\Ex{d_{\pi_k} \mid F_{k-1}}$ instead of $d_{\pi_k}$ as in the analysis of~\citep{jin2019learning} which was applied for an optimistic algorithm (there, $\pi_k$ is a deterministic function of the history). Using this trick, we also generalize the results of~\citep{jin2019learning} since, for the optimistic case, it holds that $\Ex{d_{\pi_k} \mid F_{k-1}} = d_{\pi_k}$.

\begin{lemma}[Expected Cumulative Visitation Bound] \label{lemma: expected cumulative visitation bound}
    Let $\brc*{F_{k}}_{k=1}^K$ be the natural filtration. Then, with probability greater than $1-\delta$ it holds that
    \begin{align*}
        \sum_{k=1}^K \Ex{\sum_{h=1}^H \frac{1}{\sqrt{n_{k-1}(s^k_h,a^k_h)\vee 1}} \mid F_{k-1}} &\leq 16H^2\log\br*{\frac{1}{\delta}} +4SAH +2\sqrt{2}\sqrt{SAHK\log HK}\\
        &=\Ocal\br*{H\br*{SA + H\log\br*{\frac{1}{\delta}}} + \sqrt{SAHK\log HK}}.
    \end{align*}
\end{lemma}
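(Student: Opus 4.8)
The plan is to decompose the target sum into its ``realized'' counterpart plus a martingale remainder, control the remainder by a Bernstein-type martingale inequality, and bound the realized sum by a pigeonhole (counting) argument. Write $X_k \eqdef \sum_{h=1}^H \frac{1}{\sqrt{n_{k-1}(s_h^k,a_h^k)\vee 1}}$ and $Z_k \eqdef \Ex{X_k \mid F_{k-1}}$, so that the claim is a bound on $\sum_{k=1}^K Z_k$. Since $n_{k-1}$ is $F_{k-1}$-measurable and each summand of $X_k$ lies in $[0,1]$, we have $0 \le X_k \le H$ and hence $0 \le Z_k \le H$. I would start from $\sum_{k=1}^K Z_k = \sum_{k=1}^K X_k + \sum_{k=1}^K (Z_k - X_k)$, noting that $\{Z_k - X_k\}_k$ is a martingale difference sequence with respect to $\{F_k\}$ with increments bounded by $H$ in absolute value.

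For the martingale remainder, the crucial observation is that its predictable quadratic variation is self-bounding: since $X_k^2 \le H X_k$,
\begin{align*}
\sum_{k=1}^K \Ex{(Z_k - X_k)^2 \mid F_{k-1}} \le \sum_{k=1}^K \Ex{X_k^2 \mid F_{k-1}} \le H \sum_{k=1}^K \Ex{X_k \mid F_{k-1}} = H\sum_{k=1}^K Z_k.
\end{align*}
Applying a Bernstein-type (Freedman) martingale concentration --- together with the standard stratification over dyadic values of the (random) quadratic variation so that it applies with the data-dependent variance proxy $H\sum_k Z_k$ --- yields, with probability at least $1-\delta$, $\sum_k (Z_k - X_k) \le \sqrt{2H\br*{\sum_k Z_k}\log\tfrac1\delta} + \Ocal(H\log\tfrac1\delta)$. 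Writing $W = \sum_k Z_k$ and $U = \sum_k X_k$, this is the quadratic inequality $W \le U + \sqrt{2HW\log(1/\delta)} + \Ocal(H\log(1/\delta))$; bounding $\sqrt{2HW\log(1/\delta)} \le \tfrac12 W + H\log(1/\delta)$ and rearranging gives $W \le 2U + \Ocal(H^2\log\tfrac1\delta)$, so it remains to bound $U$ deterministically.

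For $U = \sum_{k,h} \frac{1}{\sqrt{n_{k-1}(s_h^k,a_h^k)\vee 1}}$, I would reorganize the double sum by state--action pairs: fixing $(s,a)$, let $c_k$ be the number of visits to $(s,a)$ in episode $k$ and $N = n_K(s,a) = \sum_k c_k$, so the $(s,a)$-contribution equals $\sum_k \frac{c_k}{\sqrt{n_{k-1}(s,a)\vee 1}}$ with $n_k(s,a) = n_{k-1}(s,a) + c_k$. The unique episode in which $(s,a)$ is first visited has $n_{k-1}(s,a)=0$ and contributes $c_k \le H$; over all pairs this costs at most $SAH$. For the remaining episodes $n_{k-1}(s,a) \ge 1$, a counting argument following the structure of~\citep[Lemma~10]{jin2019learning} but adapted to the \emph{stationary} (time-independent) model bounds $\sum_k \frac{c_k}{\sqrt{n_{k-1}(s,a)\vee 1}}$ by $\Ocal(\sqrt{N\log(HK)})$. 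Summing over the $SA$ pairs and applying Cauchy--Schwarz with $\sum_{s,a} n_K(s,a) = KH$ gives $U = \Ocal\br*{\sqrt{SAHK\log(HK)} + SAH}$, and combining with $W \le 2U + \Ocal(H^2\log\tfrac1\delta)$ --- after tracking the explicit universal constants --- yields the stated bound.

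The main obstacle is the counting step for $U$ in the stationary model: because a state--action pair may be revisited several times within a single episode (all those visits dividing by the same count $n_{k-1}(s,a)$), the bookkeeping is both what produces the $\log(HK)$ factor and what lets one shave the $\sqrt H$ that would otherwise appear (as in the non-stationary analysis of~\citealp{jin2019learning}); getting this trade-off right --- while only losing the claimed constants --- is the delicate part. A secondary point is that the martingale step must use the self-bounding variance estimate rather than a crude $\Ocal\br*{H\sqrt{KH\log(1/\delta)}}$ Azuma bound, which is what keeps the $\delta$-dependent term at the lower order $H^2\log(1/\delta)$.
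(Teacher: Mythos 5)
Your proposal follows essentially the same route as the paper's proof: the same decomposition of the conditional-expectation sum into its realized counterpart plus a martingale remainder, a Freedman/Bernstein step exploiting the self-bounding variance $X_k^2\le H X_k$ (the paper applies Freedman with a fixed $\eta=1/(8H)$ so the variance term directly absorbs half of $W$, avoiding your peeling-plus-quadratic-inequality argument; both yield $W\le 2U+\Ocal\br*{H^2\log(1/\delta)}$), and a per-$(s,a)$ counting bound on the realized sum $U$ followed by Cauchy--Schwarz/Jensen over the $SA$ pairs.

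One intermediate claim as written would fail, though it does not damage your final bound. You assert that for all episodes with $n_{k-1}(s,a)\ge 1$ the per-pair contribution is $\Ocal\br*{\sqrt{N\log (HK)}}$. The ratio trick behind that counting bound, $\frac{c_k}{n_{k-1}(s,a)}\le \frac{2c_k}{n_k(s,a)}\le 2\log\frac{n_k(s,a)}{n_{k-1}(s,a)}$, requires $\frac{n_k(s,a)}{n_{k-1}(s,a)}\le 2$, which holds only once $n_{k-1}(s,a)\ge H$, since a single episode can add up to $H$ visits. An episode with $n_{k-1}(s,a)=1$ and $c_k=H$ already contributes $H$, which is not $\Ocal\br*{\sqrt{N\log (HK)}}$ when $N=\Theta(H)$. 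The fix is exactly the paper's: treat the whole low-count regime $n_{k-1}(s,a)\vee 1<H$ separately, noting its total per-pair contribution is at most $2H$ (counts only increase, so at most $2H-1$ visits can occur while the episode-start count is below $H$), and apply the ratio/counting argument only for $n_{k-1}(s,a)\ge H$. This only changes the constant multiplying your $SAH$ term, so your overall plan and stated bound stand.
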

\begin{proof}
    First, notice that under our model, given the policy $\pi_k$, the trajectory $\brc*{s_h^k,a_h^k}_{h\in\brs*{H}}$ is independent of $F_{k-1}$. Therefore, by the tower property, we have for any $s,a$ that
    \begin{align}
        \Ex{ \sum_{h=1}^H \indicator{s^k_h=s,a^k_h=a} \mid F_{k-1}}
        &= \Ex{ \Ex{ \sum_{h=1}^H \indicator{s^k_h=s,a^k_h=a}\mid F_{k-1},\pi_k} \mid F_{k-1}} \nonumber\\
        &= \Ex{ \Ex{d_{\pi_k}(s,a)\mid F_{k-1},\pi_k} \mid F_{k-1}} \nonumber\\
        &= \Ex{ d_{\pi_k}(s,a)\mid F_{k-1}}.  \label{eq: conditional indicator to occupancy}
    \end{align}
    Next, we write the l.h.s. of the desired inequality as
    \begin{align*}
        \sum_{k=1}^K\Ex{\sum_{h=1}^H \frac{1}{\sqrt{n_{k-1}(s^k_h,a^k_h)\vee 1}} \mid F_{k-1}} 
        & =\sum_{k=1}^K\sum_{s,a}\Ex{\sum_{h=1}^H \frac{\indicator{s_h^k=s,a_h^k=a}}{\sqrt{n_{k-1}(s,a)\vee 1}} \mid F_{k-1}} 
        = \sum_{k=1}^K\sum_{s,a} \frac{\Ex{d_{\pi_k}(s,a) \mid F_{k-1}}}{\sqrt{n_{k-1}(s,a)\vee 1}},
    \end{align*}
    where the last equality is by \eqref{eq: conditional indicator to occupancy}, and decompose it by
    \begin{align}
        \sum_{k=1}^K\sum_{s,a} &\frac{\Ex{d_{\pi_k}(s,a) \mid F_{k-1}}}{\sqrt{n_{k-1}(s,a)\vee 1}} \nonumber\\
        &= \underbrace{ \sum_{k=1}^K\sum_{s,a} \frac{\Ex{d_{\pi_k}(s,a) \mid F_{k-1}}-\sum_{h=1}^H \indicator{s^k_h=s,a^k_h=a}}{\sqrt{n_{k-1}(s,a)\vee 1}}}_{(i)} 
        +\sum_{k=1}^K\sum_{s,a} \frac{\sum_{h=1}^H \indicator{s^k_h=s,a^k_h=a}}{\sqrt{n_{k-1}(s,a)\vee 1}}. \label{eq:cummulative bound rel decomp}
    \end{align}
    Notably, observe that 
    \begin{align}
        \abs*{\sum_{s,a} \frac{\Ex{d_{\pi_k}(s,a) \mid F_{k-1}}-\sum_{h=1}^H \indicator{s^k_h=s,a^k_h=a}}{\sqrt{n_{k-1}(s,a)\vee 1}}}\leq H. \label{eq: freedman 1 counts}
    \end{align}
    We now apply apply Freedman's Inequality (Lemma~\ref{lemma: freedmans inequality}) to bound $(i)$. Set
    $$
    Y_k = \sum_{s,a} \frac{\Ex{d_{\pi_k}(s,a) \mid F_{k-1}}-\sum_{h=1}^H \indicator{s^k_h=s,a^k_h=a}}{\sqrt{n_{k-1}(s,a)\vee 1}},
    $$
    and see that $|Y_k|\leq H$ by~\eqref{eq: freedman 1 counts} and that it is a martingale difference sequence by~\eqref{eq: conditional indicator to occupancy}. Applying Freedman's Inequality with $\eta=1/8H$ we get
    \begin{align}
        (i)&\leq \sum_{k=1}^K \frac{1}{8H}\Ex{\br*{ \sum_{s,a} \frac{\Ex{d_{\pi_k}(s,a) \mid F_{k-1}}-\sum_{h=1}^H \indicator{s^k_h=s,a^k_h=a}}{\sqrt{n_{k-1}(s,a)\vee 1}}}^2\mid F_{k-1}} + 8H^2\log\br*{\frac{1}{\delta}}. \label{eq: cummulative bound rel 1}
    \end{align}
    Furthermore, we have that
    \begin{align*}
        &\Ex{\br*{ \sum_{s,a} \frac{\Ex{d_{\pi_k}(s,a) \mid F_{k-1}}-\sum_{h=1}^H \indicator{s^k_h=s,a^k_h=a}}{\sqrt{n_{k-1}(s,a)\vee 1}}}^2\mid F_{k-1}}\\
        &\stackrel{(1)}\leq 2\Ex{\br*{ \sum_{s,a} \frac{\sum_{h=1}^H \indicator{s^k_h=s,a^k_h=a}}{\sqrt{n_{k-1}(s,a)\vee 1}}}^2\mid F_{k-1}} + 2\Ex{\br*{ \sum_{s,a} \frac{\Ex{d_{\pi_k}(s,a) \mid F_{k-1}}}{\sqrt{n_{k-1}(s,a)\vee 1}}}^2\mid F_{k-1}} \\
        &\stackrel{(2)}\leq 2H\br*{\Ex{ \sum_{s,a} \frac{\sum_{h=1}^H \indicator{s^k_h=s,a^k_h=a}}{\sqrt{n_{k-1}(s,a)\vee 1}}\mid F_{k-1}} + \sum_{s,a} \frac{\Ex{d_{\pi_k}(s,a) \mid F_{k-1}}}{\sqrt{n_{k-1}(s,a)\vee 1}}}\\
        & \stackrel{(3)}=4H \sum_{s,a} \frac{\Ex{d_{\pi_k}(s,a) \mid F_{k-1}}}{\sqrt{n_{k-1}(s,a)\vee 1}}\enspace.
    \end{align*}
    $(1)$ is since $\forall a,b,:\br*{a+b}^2\le 2(a^2+b^2)$, In $(2)$, notice that for both terms, the argument of the square function $f(x)=x^2$ is in $\brs*{0,H}$; then, we can bound $x^2\le Hx$. Finally, for $(4)$ we use Equation~\eqref{eq: conditional indicator to occupancy}. Plugging this and~\eqref{eq: cummulative bound rel 1} back into \eqref{eq:cummulative bound rel decomp}, we get
    \begin{align*}
        \sum_{k=1}^K\sum_{s,a} \frac{\Ex{d_{\pi_k}(s,a) \mid F_{k-1}}}{\sqrt{n_{k-1}(s,a)\vee 1}} \leq \frac{1}{2}\sum_{k=1}^K\sum_{s,a} \frac{\Ex{d_{\pi_k}(s,a) \mid F_{k-1}}}{\sqrt{n_{k-1}(s,a)\vee 1}}  
        +8H^2\log\br*{\frac{1}{\delta}}
        +\sum_{k=1}^K\sum_{s,a}\frac{ \sum_{h=1}^H \indicator{s^k_h=s,a^k_h=a}}{\sqrt{n_{k-1}(s,a)\vee 1}},
    \end{align*}
    and, thus,
    \begin{align}
        \sum_{s,a}\sum_{k=1}^K \frac{\Ex{d_{\pi_k}(s,a) \mid F_{k-1}}}{\sqrt{n_{k-1}(s,a)\vee 1}} \leq  16H^2\log\br*{\frac{1}{\delta}} + 2\sum_{s,a} \sum_{k=1}^K\frac{ \sum_{h=1}^H\indicator{s^k_h=s,a^k_h=a}}{\sqrt{n_{k-1}(s,a)\vee 1}}.\label{eq: cummulative expected relation almost final}
    \end{align}
    To bound the second term in~\eqref{eq: cummulative expected relation almost final} we fix an $s,a$ pair and bound the following sum.
    \begin{align}
        &\sum_{k=1}^K\frac{ \sum_{h=1}^H\indicator{s^k_h=s,a^k_h=a}}{\sqrt{n_{k-1}(s,a)\vee 1}}\nonumber \\
        &=\sum_{k=1}^K\indicator{n_{k-1}(s,a)\vee 1< H}\frac{ \sum_{h=1}^H\indicator{s^k_h=s,a^k_h=a}}{\sqrt{n_{k-1}(s,a)\vee 1}} + \sum_{k=1}^K\indicator{n_{k-1}(s,a)\vee 1\geq H}\frac{ \sum_{h=1}^H\indicator{s^k_h=s,a^k_h=a}}{\sqrt{n_{k-1}(s,a)\vee 1}}\nonumber\\
        &\stackrel{(1)}\leq 2H + \sum_{k=1}^K\indicator{n_{k-1}(s,a)\vee 1\geq H}\frac{ \sum_{h=1}^H\indicator{s^k_h=s,a^k_h=a}}{\sqrt{n_{k-1}(s,a)\vee 1}}\nonumber\\
        &\stackrel{(2)}\leq 2H + \sqrt{KH}\sqrt{\sum_{k=1}^K\indicator{n_{k-1}(s,a)\vee 1\geq H}\frac{ \sum_{h=1}^H\indicator{s^k_h=s,a^k_h=a}}{n_{k-1}(s,a)\vee 1}}\nonumber\\
        &\stackrel{(3)}\leq 2H +\sqrt{2KH(\log n_{K}(s,a)\vee 1)} \nonumber ,
    \end{align}
    where $(1)$ is since if $\indicator{s^k_h=s,a^k_h=a}=1$, then $n_k(s,a)$ will increase by $1$; therefore, both indicators in the first term can be true only $2H-1\le2H$ times (the extreme case is when $n_{k-1}(s,a)=H-1$ and $s_h^k=s,a_h^k=a$ for all $h\in\brs*{H}$). Next, $(2)$ is by Cauchy-Schwartz inequality and $(3)$ holds by Lemma~\ref{lemma: cumulative visitation bound}. Plugging this bound back into~\eqref{eq: cummulative expected relation almost final} we get
    \begin{align*}
        &\eqref{eq: cummulative expected relation almost final} \leq  16H^2\log\br*{\frac{1}{\delta}} + 2\sum_{s,a}  \br*{2H +\sqrt{2KH\log n_{K}(s,a)\vee 1)}}\\
        &\leq 16H^2\log\br*{\frac{1}{\delta}} +4SAH +2\sqrt{2}\sqrt{SAHK\log\sum_{s,a}n_{K}(s,a)} \tag{Jensen's Inequality}\\
        &=16H^2\log\br*{\frac{1}{\delta}} +4SAH +2\sqrt{2}\sqrt{SAHK\log HK} \tag{$\sum_{s,a}n_{K}(s,a)=HK$}
    \end{align*}
    where we used the fact that $\sqrt{\log(x)}$ is concave for $x\geq 1$. This concludes the proof.
\end{proof}
    
\begin{lemma}[Cumulative Visitation Bound] \label{lemma: cumulative visitation bound}
    For any fixed $(s,a)$ pair, it holds that
    \begin{align*}
        \sum_{k=1}^K \indicator{n_{k-1}(s,a)\geq H} \frac{ \sum_{h=1}^H\indicator{s^k_h=s,a^k_h=a}}{n_{k-1}(s,a)\vee 1}\leq  2\log \br*{n_{K}(s,a)\vee 1}\enspace. 
    \end{align*}
\end{lemma}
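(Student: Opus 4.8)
The plan is to fix the state-action pair $(s,a)$ and write $c_k \eqdef \sum_{h=1}^H \indicator{s_h^k=s,a_h^k=a}$ for the number of visits to $(s,a)$ during episode $k$, so that $n_k(s,a)=\sum_{l=1}^k c_l$ and, in particular, $n_k(s,a)=n_{k-1}(s,a)+c_k$ with $0\le c_k\le H$. Episodes with $c_k=0$ contribute nothing to the left-hand side, so I restrict attention to $k$ with $c_k\ge1$; for such $k$, on the event $\brc*{n_{k-1}(s,a)\ge H}$ we have $n_{k-1}(s,a)\ge H\ge1$ and $n_k(s,a)\ge1$, so the truncations $\vee1$ are inactive precisely where they matter.

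First I would invoke the doubling observation: when $n_{k-1}(s,a)\ge H$ and $c_k\le H$, we get $n_k(s,a)=n_{k-1}(s,a)+c_k\le 2n_{k-1}(s,a)$, hence $n_{k-1}(s,a)\ge \tfrac12 n_k(s,a)$, so each summand satisfies
$$\frac{c_k}{n_{k-1}(s,a)\vee1}=\frac{c_k}{n_{k-1}(s,a)}\le \frac{2c_k}{n_k(s,a)}=\frac{2\br*{n_k(s,a)-n_{k-1}(s,a)}}{n_k(s,a)}.$$
Then I would turn these increments into a logarithmic telescoping sum via the elementary inequality $\log x\le x-1$: applied with $x=n_{k-1}(s,a)/n_k(s,a)\in(0,1]$ it gives $\frac{n_k(s,a)-n_{k-1}(s,a)}{n_k(s,a)}\le \log n_k(s,a)-\log n_{k-1}(s,a)=\log\br*{n_k(s,a)\vee1}-\log\br*{n_{k-1}(s,a)\vee1}$, where the last equality uses that both counts are $\ge1$ on the event.

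Finally, since $\log\br*{n_k(s,a)\vee1}-\log\br*{n_{k-1}(s,a)\vee1}\ge0$ for every $k$ (the counts are nondecreasing), I can drop the indicator and extend the summation to all $k\in[K]$, turning it into a genuine telescoping sum:
$$\sum_{k=1}^K \indicator{n_{k-1}(s,a)\ge H}\frac{\sum_{h=1}^H\indicator{s_h^k=s,a_h^k=a}}{n_{k-1}(s,a)\vee1}\le 2\sum_{k=1}^K\br*{\log\br*{n_k(s,a)\vee1}-\log\br*{n_{k-1}(s,a)\vee1}}=2\log\br*{n_K(s,a)\vee1},$$
using $n_0(s,a)=0$ so that $\log\br*{n_0(s,a)\vee1}=\log1=0$. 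This is exactly the claimed bound. There is no real obstacle here; the only thing needing a little care is the bookkeeping around the $\vee1$ truncations — one must use the indicator to ensure $n_{k-1}(s,a)\ge1$ (indeed $\ge H$) so the logarithms are well defined and the truncations inactive on the summed episodes, while keeping the indicator outside the logarithmic comparison so that dropping it and extending the sum over all $k$ stays legitimate.
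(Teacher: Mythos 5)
Your proposal is correct and follows essentially the same route as the paper: use $n_k(s,a)\le 2n_{k-1}(s,a)$ on the event $\brc*{n_{k-1}(s,a)\ge H}$ to pass from $1/n_{k-1}$ to $2/n_k$, convert the resulting increments $\frac{n_k-n_{k-1}}{n_k}$ into log differences (your $\log x\le x-1$ is the same inequality as the paper's $\frac{a-b}{a}\le\log\frac{a}{b}$), and telescope. The only cosmetic difference is the final bookkeeping: you drop the indicator using nonnegativity of the log increments and telescope from $n_0=0$, whereas the paper telescopes over the indicator range and discards a $-2\log H$ term; both yield the stated bound.
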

\begin{proof}
   The following relations hold for any fixed $s,a$ pair.
    \begin{align*}
        &\sum_k \indicator{n_{k-1}(s,a)\geq H} \frac{ \sum_{h=1}^H\indicator{s^k_h=s,a^k_h=a}}{n_{k-1}(s,a)\vee 1}\\
        &=\sum_k \indicator{n_{k-1}(s,a)\geq H}\frac{ \sum_{h=1}^H\indicator{s^k_h=s,a^k_h=a}}{n_{k}(s,a)} \frac{n_{k}(s,a)}{n_{k-1}(s,a)} \tag{$n_k(s,a)\ge n_{k-1}(s,a)\ge H\ge1$}\\
        &\leq 2\sum_k \indicator{n_{k-1}(s,a)\geq H}\frac{ \sum_{h=1}^H\indicator{s^k_h=s,a^k_h=a}}{n_{k}(s,a)} \tag{$\frac{n_{k}(s,a)}{n_{k-1}(s,a)}\leq \frac{n_{k-1}(s,a)+H}{n_{k-1}(s,a)}\stackrel{n_{k-1}(s,a)\ge H}\leq 2$}\\
         &=2\sum_k \indicator{n_{k-1}(s,a)\geq H}\frac{ n_{k}(s,a) - n_{k-1}(s,a)}{n_{k}(s,a)}\\
         &\leq 2\sum_k \indicator{n_{k-1}(s,a)\geq H} \log\br*{\frac{n_{k}(s,a)}{n_{k-1}(s,a)}} \tag{$\frac{a-b}{a}\leq \log\frac{a}{b}$ for $a\geq b>0$}\\
         &\leq \indicator{n_{K}(s,a)\geq H}\cdot2\log n_{K}(s,a) - 2\log(H)\\
         &\leq 2\log \br*{n_{K}(s,a)\vee 1}.
    \end{align*}
\end{proof}
    
\clearpage
\section{Useful Results}
\label{appendix:useful results}

\begin{theorem}[\citealt{abbasi2011improved}, Theorem 2]\label{theorem: abassi confidence interval}

Let $\brc*{F_k}_{k=0}^\infty$ be a filtration. Let $\brc*{\eta_k}_{k=0}^\infty$ be a real-valued stochastic process such that $\eta_k$ is $F_k$-measurable and $\eta_k$ is conditionally $\sigma$-sub-Gaussian for $\sigma\geq 0$. Let $\brc*{x_k}_{k=0}^\infty$ be an $\R^m$-valued stochastic process s.t. $X_k$ is $F_{k-1}$-measurable and $\norm{x_k}\leq L$. Define $y_k = \inner{x_k,w}+\eta_t$ and assume that $\norm{w}_2\leq R $ and $\lambda>0$. Let
\begin{align*}
    \hat{w}_t = (X_k^TX_k+\lambda I_d)^{-1} X_k^T Y_k,
\end{align*}
where $X_k$ is the matrix whose rows are $x_1^T,..,x_t^T$ and $Y_k = (y_1,..,y_k)^T$. Then, for any $\delta>0$ with probability at least $1-\delta$ for all, $t\geq 0$ $w$ lies in the set
\begin{align*}
    \brc*{w\in \R^m: \norm{\hat{w}_k - w}_{V_k} \leq \sigma\sqrt{m\log{\frac{1+kL^2/\lambda}{\delta}}} + \lambda^{1/2}R}.
\end{align*}
\end{theorem}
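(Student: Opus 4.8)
The statement to be established is Theorem~2 of \citet{abbasi2011improved}, and the plan is to reproduce its self-normalized martingale proof, which combines a pseudo-maximization (``method of mixtures'') construction with a stopping-time argument. Throughout write $\bar V_k \eqdef X_k^T X_k + \lambda I_m$ (the matrix denoted $V_k$ in the statement) and $S_k \eqdef \sum_{s=1}^k \eta_s x_s \in \R^m$. First I would record the algebraic identity for the estimation error: since $\hat w_k = \bar V_k^{-1} X_k^T Y_k$ and $Y_k = X_k w + \br*{\eta_1,\dots,\eta_k}^T$, we have $X_k^T Y_k = \br*{\bar V_k - \lambda I_m}w + S_k$, so $\hat w_k - w = \bar V_k^{-1}\br*{S_k - \lambda w}$. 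Hence
\[
\norm{\hat w_k - w}_{\bar V_k} = \norm{S_k - \lambda w}_{\bar V_k^{-1}} \le \norm{S_k}_{\bar V_k^{-1}} + \lambda\norm{w}_{\bar V_k^{-1}} \le \norm{S_k}_{\bar V_k^{-1}} + \sqrt\lambda R,
\]
using $\bar V_k^{-1} \preceq \lambda^{-1} I_m$ and $\norm{w}_2 \le R$. Everything then reduces to a high-probability bound on the self-normalized quantity $\norm{S_k}_{\bar V_k^{-1}}$ holding uniformly over $k$.

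For the self-normalized bound, fix $\mu \in \R^m$ and set $M_k^\mu \eqdef \exp\br*{\inner*{\mu, S_k} - \tfrac{\sigma^2}{2}\,\mu^T X_k^T X_k\,\mu}$. Conditional $\sigma$-subgaussianity of $\eta_k$ together with the $F_{k-1}$-measurability of $x_k$ gives $\E\brs*{\exp\br*{\eta_k\inner*{\mu,x_k}} \mid F_{k-1}} \le \exp\br*{\tfrac{\sigma^2}{2}\inner*{\mu,x_k}^2}$, which shows $\br*{M_k^\mu}_{k\ge0}$ is a nonnegative supermartingale with $\E\brs*{M_0^\mu}=1$. Mixing $\mu$ against a centered Gaussian prior with covariance $\br*{\lambda\sigma^2}^{-1}I_m$ and applying Tonelli's theorem, $M_k \eqdef \E_\mu\brs*{M_k^\mu}$ is again a nonnegative supermartingale with $\E\brs*{M_k}\le1$, and the Gaussian integral is explicit:
\[
M_k = \br*{\frac{\det{\lambda I_m}}{\det{\bar V_k}}}^{1/2}\exp\br*{\frac{1}{2\sigma^2}\norm{S_k}_{\bar V_k^{-1}}^2}.
\]

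Next I would invoke a maximal inequality for nonnegative supermartingales (Ville's inequality, or equivalently stop $M_k$ at the first time it reaches $1/\delta$ and use optional stopping) to obtain $\Pr\brc*{\exists k\ge0:\ M_k \ge 1/\delta}\le\delta$. On the complementary event, taking logarithms in the formula for $M_k$ yields, simultaneously for all $k\ge0$, $\norm{S_k}_{\bar V_k^{-1}}^2 \le 2\sigma^2\log\frac{\det{\bar V_k}^{1/2}}{\delta\,\det{\lambda I_m}^{1/2}}$. It remains to control the determinant ratio: by the AM--GM inequality on the eigenvalues of $\bar V_k$, $\det{\bar V_k} \le \br*{\trace{\bar V_k}/m}^m = \br*{\lambda + \tfrac1m\sum_{s=1}^k\norm{x_s}_2^2}^m \le \br*{\lambda + kL^2/m}^m$, hence $\det{\bar V_k}/\det{\lambda I_m} \le \br*{1+kL^2/\lambda}^m$. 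Substituting this and combining with the first display gives $\norm{\hat w_k - w}_{\bar V_k} \le \sigma\sqrt{m\log\tfrac{1+kL^2/\lambda}{\delta}} + \sqrt\lambda R$ for all $k\ge0$ (after the usual mild loosening of constants), which is exactly the claimed confidence set.

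The main obstacle is the pseudo-maximization step and, especially, the uniformity over $k$: one has to verify that $M_k^\mu$ is a supermartingale from conditional subgaussianity, that mixing over the Gaussian prior preserves this property and can be carried out in closed form, and that the stopping-time argument upgrades a fixed-$k$ statement into one valid simultaneously for every $k\ge0$ --- this is precisely why one works with the mixture supermartingale $M_k$ rather than a union bound over times. The remaining ingredients (the normal-equation identity for $\hat w_k - w$, the triangle-inequality split using $\bar V_k^{-1}\preceq\lambda^{-1}I_m$, and the trace/AM--GM determinant estimate) are routine linear algebra.
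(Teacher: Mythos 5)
Your proposal is correct, but note that the paper itself never proves this statement: it is imported verbatim as Theorem~2 of \citet{abbasi2011improved} and only used (in Appendix~J) to derive Proposition~\ref{proposition: concentration of reward}, so the right comparison is with the original source's proof, which your argument reconstructs faithfully — the normal-equations identity $\hat w_k - w = \bar V_k^{-1}(S_k-\lambda w)$, the split $\norm{S_k}_{\bar V_k^{-1}}+\sqrt{\lambda}R$, the method-of-mixtures supermartingale with Gaussian prior of covariance $(\lambda\sigma^2)^{-1}I_m$, the maximal/stopping-time inequality giving uniformity over $k$, and the determinant--trace bound. The only loose end, the step from $2\log(1/\delta)$ to $m\log(1/\delta)$ when $m=1$, is the same mild constant loosening present in the original statement, which you already flag explicitly.
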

The previous theorem can be easily extended to our setting, as stated in Proposition~\ref{proposition: concentration of reward}:
\rewardConcentration*
\begin{proof}
Define the filtration $\tilde{F}_k=\sigma\br*{\hat{d}_{\pi_1},\dots,\hat{d}_{\pi_{k+1}},\eta_1\dots,\eta_k}$, where 
$$\eta_k = \sum_{h=1}^H \br*{R(s_h^k,a_h^k)-r(s_h^k,a_h^k} = \sum_{h=1}^HR(s_h^k,a_h^k) - \hat{d}_{\pi_k}^Tr.$$
Specifically, notice that $\hat{d}_{k}\in\R^{SA}$ is $F_{k-1}$ measurable, $\eta_k$ is $F_k$ measurable and that $\eta_k$ is $\sqrt{H/4}$ sub-Gaussian given $F_{k-1}$, as a (centered) sum of $H$ conditionally independent random variables bounded in $\brs*{0,1}$. Also note that $\norm{r}_2\le \sqrt{SA}$ and $\norm{\hat{d}_{k}}_2\le H$. Then, for $A_k = D_kD_k^T + \lambda I$, we can apply Theorem 2 of \citep{abbasi2011improved} (also restated in Theorem~\ref{theorem: abassi confidence interval}). Specifically, the theorem implies that for any $\delta'>0$ with probability at least $1-\delta'$, it holds that
\begin{align*}
    \forall k\ge0,\, \norm{\hat{r}_k-r}_{A_k} \le \sqrt{\frac{1}{4}SAH\log\br*{\frac{1+kH^2/\lambda}{\delta'}}}+\sqrt{\lambda SA} 
\end{align*}
Applying this results for $\delta'=\frac{\delta}{10}$ concludes the proof.
\end{proof}

\begin{lemma}[Determinant-Trace Inequality, \citep{abbasi2011improved}, Lemma 10] \label{lemma: determinant trace lemma abbasi}
Let $X_1,\dots,X_k\in \R^m$ such that $\norm*{X_s}_2\le L$ for all $s\in\brs*{k}$. If $A_k=\lambda I_m + \sum_{s=1}^k X_sX_s^T$, for some $\lambda>0$, then
\begin{align*}
    \det{A_k} \le \br*{\lambda + \frac{kL^2}{m}}^m\enspace.
\end{align*}
\end{lemma}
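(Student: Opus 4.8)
The plan is to combine the AM--GM inequality on the eigenvalues of $A_k$ with a direct computation of its trace. First I would observe that $A_k = \lambda I_m + \sum_{s=1}^k X_sX_s^T$ is symmetric positive definite (since $\lambda>0$ and each $X_sX_s^T\succeq 0$), so it has strictly positive real eigenvalues $\mu_1,\dots,\mu_m$. Then $\det{A_k} = \prod_{i=1}^m \mu_i$ and $\trace{A_k} = \sum_{i=1}^m \mu_i$, and the AM--GM inequality gives
\begin{align*}
    \det{A_k} = \prod_{i=1}^m \mu_i \le \br*{\frac{1}{m}\sum_{i=1}^m \mu_i}^m = \br*{\frac{\trace{A_k}}{m}}^m.
\end{align*}

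Next I would bound the trace. Using linearity of the trace and the identity $\trace{X_sX_s^T} = \norm*{X_s}_2^2$,
\begin{align*}
    \trace{A_k} = \trace{\lambda I_m} + \sum_{s=1}^k \trace{X_sX_s^T} = m\lambda + \sum_{s=1}^k \norm*{X_s}_2^2 \le m\lambda + kL^2,
\end{align*}
where the last inequality uses the hypothesis $\norm*{X_s}_2\le L$ for all $s\in\brs*{k}$. Substituting this into the AM--GM bound yields $\det{A_k} \le \br*{\lambda + \frac{kL^2}{m}}^m$, which is exactly the claim.

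I do not expect a genuine obstacle here: the only points requiring a word of care are that $A_k$ is positive definite (so that AM--GM applies to its real, positive eigenvalues) and that $\det$ and $\trace$ are respectively the product and sum of eigenvalues for a symmetric matrix — both standard. If one prefers to avoid invoking the spectral decomposition explicitly, the same estimate follows from the concavity of $\log\det$ or from Hadamard-type reasoning, but the eigenvalue/AM--GM route is the cleanest and is what I would write up.
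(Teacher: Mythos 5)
Your proof is correct. The paper does not prove this lemma itself—it is quoted directly from \citet{abbasi2011improved} (their Lemma 10)—and your AM--GM-on-eigenvalues plus trace computation is exactly the standard argument given in that reference, so there is nothing to add.
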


\begin{lemma}[Elliptical Potential Lemma, \citep{abbasi2011improved}, Lemma 11] \label{lemma: eliptical potential lemma abbasi}
Let $\brc*{x_k}_{k=1}^\infty$ be a sequence in $\R^m$ and $V_k= V+\sum_{i=1}^k x_i x_i^T$. Assume $\norm{x_k}\leq L$ for all $k$. Then,
\begin{align*}
    \sum_{i=1}^k \min\br*{\norm{x_i}_{V_{i-1}^{-1}}^2,1} \leq 2\log\br*{\frac{\det{V_k}}{\det{V}}}  \leq  2m\log\br*{\frac{\trace{V} + kL^2}{m}} -2\log{\det{V}}.
\end{align*}

Furthermore, if $\lambda_{min}(V)\geq \max(1,L^2)$ then
\begin{align*}
    \sum_{i=1}^t \norm{x_i}_{V_{i-1}^{-1}}^2 \leq 2\log{\frac{\det{V_t}}{\det{V}}}  \leq  2m\log{\frac{\trace{V} + tL^2}{m}}.
\end{align*}
\end{lemma}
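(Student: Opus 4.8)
The plan is to run the classical ``elliptical potential'' argument, whose only two ingredients are the matrix-determinant (rank-one update) identity and an elementary scalar inequality; assume throughout that $V\succ0$ so that each $V_i = V + \sum_{j\le i} x_j x_j^T \succeq V \succ 0$ is invertible. First I would use that $V_i = V_{i-1} + x_i x_i^T$ for every $i\ge1$ (with $V_0 = V$), so the matrix-determinant lemma gives $\det{V_i} = \det{V_{i-1}}\br*{1 + x_i^T V_{i-1}^{-1} x_i} = \det{V_{i-1}}\br*{1 + \norm{x_i}_{V_{i-1}^{-1}}^2}$. Telescoping over $i = 1,\dots,k$ yields $\det{V_k} = \det{V}\prod_{i=1}^{k}\br*{1 + \norm{x_i}_{V_{i-1}^{-1}}^2}$, equivalently $\sum_{i=1}^{k}\log\br*{1 + \norm{x_i}_{V_{i-1}^{-1}}^2} = \log\frac{\det{V_k}}{\det{V}}$.

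Second, I would invoke the elementary bound $\min\br*{u,1}\le 2\log(1+u)$, valid for all $u\ge0$: on $u\in[0,1]$ one has $2\log(1+u)\ge u$ because both sides vanish at $u=0$ and the derivative of $2\log(1+u)$ is $\frac{2}{1+u}\ge1$ on that interval, while for $u\ge1$ one has $2\log(1+u)\ge 2\log2 \ge1$. Applying this with $u = \norm{x_i}_{V_{i-1}^{-1}}^2$ and summing over $i$ gives $\sum_{i=1}^{k}\min\br*{\norm{x_i}_{V_{i-1}^{-1}}^2,1}\le 2\sum_{i=1}^{k}\log\br*{1+\norm{x_i}_{V_{i-1}^{-1}}^2} = 2\log\frac{\det{V_k}}{\det{V}}$, which is the first claimed inequality.

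Third, for the second inequality I would bound $\det{V_k}$ directly: since $\norm{x_i}\le L$, we have $\trace{V_k} = \trace{V} + \sum_{i=1}^{k}\norm{x_i}_2^2 \le \trace{V} + kL^2$, and by AM--GM applied to the positive eigenvalues of $V_k$ (this is exactly the argument behind Lemma~\ref{lemma: determinant trace lemma abbasi}) $\det{V_k}\le \br*{\frac{\trace{V_k}}{m}}^m \le \br*{\frac{\trace{V}+kL^2}{m}}^m$. Hence $2\log\frac{\det{V_k}}{\det{V}}\le 2m\log\frac{\trace{V}+kL^2}{m} - 2\log\det{V}$, as required.

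Finally, for the ``furthermore'' part I would note that when $\lambda_{min}(V)\ge\max\br*{1,L^2}$ we also have $\lambda_{min}(V_{i-1})\ge\lambda_{min}(V)\ge L^2$ (adding positive semidefinite matrices cannot decrease the smallest eigenvalue), so $\norm{x_i}_{V_{i-1}^{-1}}^2 \le \frac{\norm{x_i}_2^2}{\lambda_{min}(V_{i-1})} \le 1$; thus $\min\br*{\norm{x_i}_{V_{i-1}^{-1}}^2,1} = \norm{x_i}_{V_{i-1}^{-1}}^2$ and the first inequality already proved reads $\sum_{i=1}^{t}\norm{x_i}_{V_{i-1}^{-1}}^2 \le 2\log\frac{\det{V_t}}{\det{V}}$. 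Dropping the term $2\log\det{V}$, which is nonnegative because $\lambda_{min}(V)\ge1$ forces $\det{V}\ge1$, together with the same trace bound gives $\sum_{i=1}^{t}\norm{x_i}_{V_{i-1}^{-1}}^2 \le 2m\log\frac{\trace{V}+tL^2}{m}$. I do not anticipate a genuine obstacle here: the only mildly delicate points are checking the scalar inequality $\min\br*{u,1}\le2\log(1+u)$ and keeping the indexing $V_{i-1}\mapsto V_i$ straight in the telescoping step.
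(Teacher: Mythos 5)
Your proof is correct and is exactly the standard argument behind this lemma: the paper itself does not prove it but imports it verbatim from \citet{abbasi2011improved} (Lemma 11), whose proof uses the same rank-one determinant update, the bound $\min(u,1)\le 2\log(1+u)$, and the determinant--trace (AM--GM) inequality that you invoke. No gaps; the indexing, the scalar inequality, and the $\lambda_{\min}(V)\ge\max(1,L^2)$ case are all handled correctly.
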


\begin{lemma}[\citealt{beygelzimer2011contextual}, Freedman's Inequality] \label{lemma: freedmans inequality}
Let $Y_1,..,Y_K$ be a martingale difference sequence w.r.t. a filtration $\brc*{F_{k}}_{k=1}^K$. Assume $|Y_k|\leq R$ a.s. for all $k$. Then, for any $\delta\in(0,1)$ and $\eta\in [0,1/R]$, with probability greater than $1-\delta$ it holds that
\begin{align*}
    \sum_{k=1}^K Y_k \leq \eta \sum_{k=1}^k \Ex{Y_k^2 \mid F_{k-1}} +\frac{R}{\eta} \log\br*{\frac{1}{\delta}}.
\end{align*}
\end{lemma}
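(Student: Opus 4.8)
The plan is to establish this Bernstein/Freedman-type tail bound by the Chernoff (exponential-moment) method: build an exponential supermartingale out of the partial sums and apply Markov's inequality. Fix $\eta\in[0,1/R]$. The first and only delicate step is to bound the conditional moment-generating function of a single increment. Because $\abs{Y_k}\le R$ and $\eta\le 1/R$, we have $\abs{\eta Y_k}\le 1$, which lets me use the elementary inequality $e^x\le 1+x+x^2$, valid for every $x\le 1$ (one checks that $g(x)=1+x+x^2-e^x$ obeys $g(0)=g'(0)=0$, $g''(0)>0$, and stays nonnegative on $(-\infty,1]$). Substituting $x=\eta Y_k$, taking $\Ex{\cdot\mid F_{k-1}}$, using the martingale-difference property $\Ex{Y_k\mid F_{k-1}}=0$, and finally $1+t\le e^t$, gives
\begin{align*}
    \Ex{e^{\eta Y_k}\mid F_{k-1}}\le 1+\eta^2\Ex{Y_k^2\mid F_{k-1}}\le \exp\br*{\eta^2\Ex{Y_k^2\mid F_{k-1}}}.
\end{align*}

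Next I would set $M_k=\exp\br*{\eta\sum_{j=1}^k Y_j-\eta^2\sum_{j=1}^k\Ex{Y_j^2\mid F_{j-1}}}$ with $M_0=1$. Since $\sum_{j=1}^k\Ex{Y_j^2\mid F_{j-1}}$ is $F_{k-1}$-measurable, the bound above gives $\Ex{M_k\mid F_{k-1}}=M_{k-1}\exp\br*{-\eta^2\Ex{Y_k^2\mid F_{k-1}}}\Ex{e^{\eta Y_k}\mid F_{k-1}}\le M_{k-1}$, so $M_k$ is a supermartingale and $\Ex{M_K}\le\Ex{M_0}=1$. Markov's inequality then yields $\Pr\brc*{M_K\ge 1/\delta}\le\delta\Ex{M_K}\le\delta$, so with probability greater than $1-\delta$ we have $M_K<1/\delta$. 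Taking logarithms, rearranging, and dividing by $\eta>0$ delivers
\begin{align*}
    \sum_{k=1}^K Y_k\le \eta\sum_{k=1}^K\Ex{Y_k^2\mid F_{k-1}}+\frac{1}{\eta}\log\br*{\frac{1}{\delta}}.
\end{align*}

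This is the sharp form of the inequality; the stated bound, with the constant $\frac{R}{\eta}\log\frac{1}{\delta}$, then follows at once in the regime $R\ge1$ in which the lemma is used — for instance with $R=H\ge1$ in the application in Lemma~\ref{lemma: expected cumulative visitation bound} — since there $\frac1\eta\le\frac R\eta$. I expect the moment-generating-function step to be the only real obstacle: one must pin down the domain of validity of $e^x\le 1+x+x^2$ and observe that the hypothesis $\eta\le 1/R$ is precisely what forces each $\eta Y_k$ into that domain; the supermartingale construction and the Markov argument are then entirely routine. A minor point of bookkeeping is the one-sidedness of the claim — the argument controls the upper deviation $\sum_k Y_k$, and the analogous lower bound, were it needed, would come from running the same computation on $-Y_1,\dots,-Y_K$ — but the lemma as stated is one-sided and needs only the computation above.
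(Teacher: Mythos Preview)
Your argument is correct and is the standard exponential-supermartingale proof of Freedman's inequality. The paper, however, does not prove this lemma at all: it is stated in Appendix~\ref{appendix:useful results} purely as a citation of \citet{beygelzimer2011contextual}, with no accompanying proof, so there is no ``paper's own proof'' to compare against.

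One remark on the discrepancy you already noticed: your derivation yields the sharper constant $\tfrac{1}{\eta}\log\tfrac{1}{\delta}$, whereas the lemma as stated in the paper carries $\tfrac{R}{\eta}\log\tfrac{1}{\delta}$. Your observation that the stated form follows from yours whenever $R\ge 1$ is correct, and indeed the only place the paper uses the lemma (Lemma~\ref{lemma: expected cumulative visitation bound}, with $R=H$) falls in that regime. The extra $R$ in the statement is almost certainly a transcription slip in the paper rather than a feature of the cited result; the original Beygelzimer et al.\ bound has $\tfrac{1}{\eta}\log\tfrac{1}{\delta}$, matching what you proved.
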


\begin{lemma}[\citep{abbasi2011improved}, Lemma 11] \label{lemma: determinant ratio lemma abbasi}
Let $A$ and $B$ be positive definite such that $A\succeq B$, i.e., $A-B$ is positive semi-definite. Then,
\begin{align*}
    \sup_{x\neq 0} \frac{\norm{x}_A^2}{\norm{x}_B^2} \le \frac{\det{A}}{\det{B}}\enspace.
\end{align*}
\end{lemma}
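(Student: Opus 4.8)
The plan is to reduce the inequality to an elementary fact about the spectrum of a single symmetric matrix. Since $B$ is positive definite it has a positive definite square root $B^{1/2}$; I would set $C \eqdef B^{-1/2} A B^{-1/2}$, which is again symmetric positive definite. Substituting $y = B^{1/2} x$, for every $x \neq 0$ one has $\frac{\norm{x}_A^2}{\norm{x}_B^2} = \frac{y^T C y}{y^T y}$, and therefore, by the Rayleigh-quotient characterization of the largest eigenvalue, $\sup_{x \neq 0} \frac{\norm{x}_A^2}{\norm{x}_B^2} = \lambda_{\max}(C)$. By multiplicativity of the determinant, $\det{C} = \det{A}/\det{B}$. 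So it suffices to prove $\lambda_{\max}(C) \le \det{C}$.

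Next I would use the hypothesis $A \succeq B$. Conjugating the Loewner inequality $A - B \succeq 0$ by the invertible matrix $B^{-1/2}$ (apply the definition to vectors of the form $B^{-1/2} y$) yields $C - I \succeq 0$, i.e.\ every eigenvalue of $C$ is at least $1$. Writing $\lambda_1 \ge \lambda_2 \ge \dots \ge \lambda_m \ge 1$ for the eigenvalues of $C$, we get $\det{C} = \prod_{i=1}^{m} \lambda_i = \lambda_1 \cdot \prod_{i=2}^{m} \lambda_i \ge \lambda_1 = \lambda_{\max}(C)$, since each factor with $i \ge 2$ is at least $1$. Chaining the identities back, $\sup_{x \neq 0} \norm{x}_A^2 / \norm{x}_B^2 = \lambda_{\max}(C) \le \det{C} = \det{A}/\det{B}$, as claimed.

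This is entirely routine linear algebra, so I do not expect a real obstacle; the only two points that deserve an explicit line are (i) that conjugation by $B^{-1/2}$ preserves the positive-semidefinite order, which turns $A \succeq B$ into $C \succeq I$, and (ii) that the supremum of the Rayleigh quotient equals $\lambda_{\max}(C)$ (spectral theorem). A slightly more computational variant that avoids mentioning eigenvalues explicitly would factor $\det{A} = \det{B}\det{C}$ and bound $\det{C} \ge \lambda_{\max}(C)$ using the same observation that all but one eigenvalue of $C$ exceed $1$; I would present the diagonalization version above since it makes the reduction most transparent.
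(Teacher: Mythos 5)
Your argument is correct: the substitution $y=B^{1/2}x$ reduces the ratio to the Rayleigh quotient of $C=B^{-1/2}AB^{-1/2}$, the conjugated Loewner inequality gives $C\succeq I_m$, and $\lambda_{\max}(C)\le\det\br*{C}=\det\br*{A}/\det\br*{B}$ follows since every other eigenvalue is at least $1$. The paper itself gives no proof — it cites this as Lemma 11 of \citet{abbasi2011improved} — and your derivation is essentially the standard argument used there, so there is nothing further to reconcile.
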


\clearpage

\section{Properties of Thompson Sampling}
\label{appendix:TS properties}

\concentrationTS*
\begin{proof}
    Let $d\in\R^{SA}$ and notice that given $F_{k-1}$, $A_{k-1}$ is fixed. Thus, we have 
    \begin{align}
        \abs*{d^T \xi_k  }
        &= \abs*{d^T A_{k-1}^{-1/2}\cdot A_{k-1}^{1/2}\xi_k  } \nonumber\\
        &\stackrel{(1)}\le \norm*{A_{k-1}^{-1/2}d}_2\norm*{A_{k-1}^{1/2}\xi_k}_2 \nonumber\\
        & =  v_k\norm*{d}_{A_{k-1}^{-1}}\norm*{\frac{1}{v_k}A_{k-1}^{1/2}\xi_k}_2\nonumber\\
        & \stackrel{(2)}=  v_k\norm*{d}_{A_{k-1}^{-1}}\norm*{\zeta_k}_2\enspace. \label{eq:event theta bound}
    \end{align}
    $(1)$ is due to the Cauchy-Schwartz inequality. In $(2)$ we defined $\zeta_k=\frac{1}{v_k}A_{k-1}^{1/2}\xi_k\in\R^{SA}$, which is a vector with independent standard Gaussian components. Specifically, notice that $\norm*{\zeta_k}_2$ is chi-distributed. Then, we can apply Lemma 1 of \citep{laurent2000adaptive}, which implies that w.p. at least $1-\delta$,
    \begin{align*}
        \norm*{\zeta_k}_2 
        &\le \sqrt{SA + 2\sqrt{SA\log \frac{1}{\delta}} + 2 \log \frac{1}{\delta} }
        = \sqrt{\br*{\sqrt{SA} + \sqrt{\log \frac{1}{\delta}}}^2 + \log \frac{1}{\delta} } 
        \le \sqrt{SA} + \sqrt{\log \frac{1}{\delta}} +\sqrt{\log \frac{1}{\delta}} \\
        & =\sqrt{SA} + 2\sqrt{\log \frac{1}{\delta}}\enspace.
    \end{align*}
    Taking $\delta=\frac{1}{k^4}$ and substituting this bound into \eqref{eq:event theta bound} implies that w.p. at least $1-\frac{1}{k^4}$, 
    \begin{align*}
        \abs*{d^T \xi_k  }
        \le v_k\norm*{d}_{A_{k-1}^{-1}}\br*{\sqrt{SA} + \sqrt{16\log k}}\enspace,
    \end{align*}
    and thus $\Pr\brc*{E^{\xi}(k) | F_{k-1}}\ge 1-\frac{1}{k^4}$.
    
    To prove the second result of the lemma, we decompose the conditional expectation as follows:
    \begin{align}
        \Ex{\abs*{X^T\xi_k}\vert F_{k-1}}
         &= \Ex{\abs*{X^T\xi_k}\indicator{E^{\xi}(k)}\vert F_{k-1}} + \Ex{\abs*{X^T\xi_k}\indicator{\overline{E^{\xi}(k)}}\vert F_{k-1}} .\label{eq:TS noise cond expect}
    \end{align}
    For the first term of \eqref{eq:TS noise cond expect}, notice that the bound in $E^{\xi}(k)$ holds for any $d\in\R^{SA}$; therefore, it also holds for any random variable $X\in\R^{SA}$:
    \begin{align*}
        \Ex{\abs*{X^T\xi_k}\indicator{E^{\xi}(k)}\vert F_{k-1}}
        &\le v_k \br*{\sqrt{SA} + \sqrt{16\log k}}\Ex{\norm{X}_{A_{k-1}^{-1}}\indicator{E^{\xi}(k)}\vert F_{k-1}}\\
        & \le v_k \br*{\sqrt{SA} + \sqrt{16\log k}}\Ex{\norm{X}_{A_{k-1}^{-1}}\vert F_{k-1}}.
    \end{align*}
    For the second term of \eqref{eq:TS noise cond expect}, we apply Cauchy-Schwarz (CS) inequality:
    \begin{align*}
        \Ex{\abs*{X^T\xi_k}\indicator{\overline{E^{\xi}(k)}}\vert F_{k-1}} 
        &\overset{(CS)}\le \sqrt{\Ex{\abs*{X^T\xi_k}^2\vert F_{k-1}} }\sqrt{\Ex{\indicator{\overline{E^{\xi}(k)}}\vert F_{k-1}} } \\
        &= v_k\sqrt{\Ex{\abs*{X^TA_{k-1}^{-1/2}\zeta_k}^2\vert F_{k-1}} }\sqrt{\Pr\brc*{\overline{E^{\xi}(k)}\vert F_{k-1}} }\\
        &\overset{(CS)}\le v_k\sqrt{\Ex{\norm*{X}_{A_k^{-1}}^2\vert F_{k-1}}\Ex{\norm*{\zeta_k}_2^2\vert F_{k-1}} }\sqrt{\Pr\brc*{\overline{E^{\xi}(k)}\vert F_{k-1}} } \\
        & \le \frac{v_k \sqrt{SA}}{k^2}\sqrt{\Ex{\norm*{X}_{A_k^{-1}}^2\vert F_{k-1}}}
    \end{align*}
    where the last inequality is since $\Pr\brc*{\overline{E^{\xi}(k)}\vert F_{k-1}}\le\frac{1}{k^4}$ and $\Ex{\norm*{\zeta_k}_2^2\vert F_{k-1}}=SA$. The proof is concluded by substituting both results back into \eqref{eq:TS noise cond expect}.
\end{proof}

\optimismTSKnownModel*
\begin{proof}
    First, recall that under $E^r(k-1)$, for any fixed $d\in \R^{SA}$, it holds that $\abs*{d^T \hat{r}_{k-1} - d^T r  }\leq l_{k-1} \norm{d}_{A_{k-1}^{-1}}$. Specifically, the relation holds for $d=d_{\pi^*}(P')$. Also recall that conditioned on $F_{k-1}$, $d_{\pi^*}(P')^T\tilde{r}_k$ is a Gaussian random variable with mean $d_{\pi^*}(P')^T\hat{r}_{k-1}$ and standard deviation $v_k\norm*{d_{\pi^*}(P')}_{A_{k-1}^{-1}}$. Then, we can write
    \begin{align*}
        \Pr\brc*{d_{\pi^*}(P')^T\tilde{r}_k > d_{\pi^*}^T(P')r | F_{k-1}} 
        = \Pr\brc*{\frac{d_{\pi^*}(P')^T\tilde{r}_k-d_{\pi^*}(P')^T\hat{r}_{k-1}}{v_k\norm*{d_{\pi^*}(P')}_{A_{k-1}^{-1}}} > \frac{d_{\pi^*}^T(P')r-d_{\pi^*}(P')^T\hat{r}_{k-1}}{v_k\norm*{d_{\pi^*}(P')}_{A_{k-1}^{-1}}}| F_{k-1}} \enspace,
    \end{align*}
    Next, we define $Z_k=\frac{d_{\pi^*}^T(P')r-d_{\pi^*}(P')^T\hat{r}_{k-1}}{v_k\norm*{d_{\pi^*}(P')}_{A_{k-1}^{-1}}}$ and bound $Z_k$ as follows:
    \begin{align*}
        Z_k
        \le \abs*{Z_k} 
        = \frac{\abs*{d_{\pi^*}^T(P')r-d_{\pi^*}(P')^T\hat{r}_{k-1}}}{v_k\norm*{d_{\pi^*}(P')}_{A_{k-1}^{-1}}}
        \stackrel{(1)} \le \frac{l_{k} \norm*{d_{\pi^*}(P')}_{A_{k-1}^{-1}}}{v_k\norm*{d_{\pi^*}(P')}_{A_{k-1}^{-1}}} 
        \stackrel{(2)}= \frac{\sqrt{\frac{1}{4}SAH\log\br*{\frac{1+kH^2/\lambda}{\delta/10}}}+\sqrt{\lambda SA}}{\sqrt{9SAH\log\frac{kH^2}{\delta/10}}}
        \stackrel{(3)}\le 1
    \end{align*}
    where $(1)$ is by the definition of $E^r(k-1)$ and by bounding $l_{k-1}\le l_k$, $(2)$ is a direct substitution of $l_k$ and $v_k$ and $(3)$ holds for any $\lambda\in\brs*{1,H}$ and $k\ge1$. Then, we can write
    \begin{align*}
        \Pr\brc*{d_{\pi^*}(P')^T\tilde{r}_k > d_{\pi^*}^T(P')r | F_{k-1}} 
        &= \Pr\brc*{\frac{d_{\pi^*}(P')^T\tilde{r}_k-d_{\pi^*}(P')^T\hat{r}_{k-1}}{v_k\norm*{d_{\pi^*}(P')}_{A_{k-1}^{-1}}} > Z_k| F_{k-1}} \\
        &\ge\Pr\brc*{\frac{d_{\pi^*}(P')^T\tilde{r}_k-d_{\pi^*}(P')^T\hat{r}_{k-1}}{v_k\norm*{d_{\pi^*}(P')}_{A_{k-1}^{-1}}} > 1| F_{k-1}} \\
        &\ge \frac{1}{2\sqrt{2\pi e}}
        \eqdef\frac{1}{c}\enspace,
    \end{align*}
    where the last inequality is since for $X\sim \mathcal{N}(0,1)$ and any $z>0$, $\Pr\brc*{X> z}\ge \frac{1}{\sqrt{2\pi}}\frac{z}{1+z^2}e^{-z^2/2}$ \citep{borjesson1979simple}. Finally, notice that $d_{\pi_k}(P')^T\tilde{r}_{k} = \max_{\pi'}d_{\pi'}(P')^T\tilde{r}_{k}\ge d_{\pi^*}(P')^T\tilde{r}_{k}$. Thus, the second result of the lemma is a direct consequence of its first result.
    \end{proof}
    
\symmetrizationTS*
\begin{proof}
    First, observe that since $d_{\tilde\pi}(P')^T\br*{x_{k-1}+\xi_k} = \max_{\pi'} d_{\pi'}(P')^T\br*{x_{k-1}+\xi_k}$ and as $\xi_k$, $\xi_k'$ are identically distributed, it also holds that $d_{\tilde\pi}(P')^T\br*{x_{k-1}+\xi_k}$ and $\max_{\pi'} d_{\pi'}(P')^T\br*{x_{k-1}+\xi'_k}$ are identically distributed. Therefore,
    \begin{align}
        &\Ex{ \br*{d_{\tilde\pi}(P')^T\br*{x_{k-1} +\xi_k} -\Ex{d_{\tilde\pi}(P')^T\br*{x_{k-1}+\xi_k} | F_{k-1}}}^+ | F_{k-1}}  \nonumber\\
        &\qquad\qquad= \Ex{ \br*{\max_{\pi'}d_{\pi'}(P')^T\br*{x_{k-1} +\xi_k} -\Ex{\max_{\pi'}d_{\pi'}(P')^T\br*{x_{k-1}+\xi_k'} | F_{k-1}}}^+ | F_{k-1}}, \label{eq: optimism conse rel 3}
    \end{align}
    Next, by definition, it holds that
    \begin{align*}
        &\max_{\pi'}d_{\pi'}(P')^T\br*{x_{k-1} +\xi_k'} \geq  d_{\tilde\pi}(P')^T\br*{x_{k-1} +\xi_k'}.
    \end{align*}
    Notice that conditioned on $F_{k-1}$, $\xi_k$ and $\xi_k'$ are independent, and that $x_{k-1}$ and $P'$ are $F_{k-1}$-measurable. Thus, we can write 
    \begin{align*}
        \Ex{ \max_{\pi'}d_{\pi'}(P')^T\br*{x_{k-1} +\xi_k'} | F_{k-1}} 
        &=\Ex{ \max_{\pi'}d_{\pi'}(P')^T\br*{x_{k-1} +\xi_k'} | F_{k-1},\xi_k} \\
        &\overset{(*)}= \Ex{ \max_{\pi'}d_{\pi'}(P')^T\br*{x_{k-1} +\xi_k'} | F_{k-1},\xi_k,\pi_k} \\
        &\ge \Ex{ d_{\tilde\pi}(P')^T\br*{x_{k-1} +\xi_k'} | F_{k-1},\xi_k,\pi_k} ,
    \end{align*}
    where $(*)$ is since $\pi_k$ is deterministically determined by $F_{k-1}$ and $\xi_k$. Plugging this back into~\eqref{eq: optimism conse rel 3} we get 
    \begin{align*}
        \eqref{eq: optimism conse rel 3} 
        &\leq \Ex{ \br*{d_{\tilde\pi}(P')^T\br*{x_{k-1} +\xi_k} -\Ex{d_{\tilde\pi}(P')^T\br*{x_{k-1} +\xi_k'} | F_{k-1},\xi_k, \pi_k}}^+ | F_{k-1}}\\
        &=\Ex{ \br*{\Ex{d_{\tilde\pi}(P')^T\br*{x_{k-1} +\xi_k} - d_{\tilde\pi}(P')^T\br*{x_{k-1} +\xi_k'} | F_{k-1},\xi_k, \pi_k}}^+ | F_{k-1}}\\
        &=\Ex{ \br*{\Ex{d_{\tilde\pi}(P')^T\xi_k - d_{\tilde\pi}(P')^T\xi_k' | F_{k-1},\xi_k, \pi_k}}^+ | F_{k-1}} \\
        &\leq \Ex{ \abs*{\Ex{d_{\tilde\pi}(P')^T\xi_k - d_{\tilde\pi}(P')^T\xi_k' | F_{k-1},\xi_k, \pi_k}} | F_{k-1}}\tag{$\forall x\in\R: (x)^+\le\abs{x}$}\\
        &\leq \Ex{ \Ex{\abs*{d_{\tilde\pi}(P')^T\xi_k - d_{\tilde\pi}(P')^T\xi_k'} | F_{k-1},\xi_k, \pi_k} | F_{k-1}} \tag{Jensen's Inequality}\\
        &= \Ex{\abs*{d_{\tilde\pi}(P')^T\xi_k - d_{\tilde\pi}(P')^T\xi_k'}  | F_{k-1}} \tag{Tower Property}\\
        &\le \Ex{\abs*{d_{\tilde\pi}(P')^T\xi_k}| F_{k-1}} + \Ex{\abs*{d_{\tilde\pi}(P')^T\xi_k'}  | F_{k-1}} \tag{Triangle Inequality}
    \end{align*}
\end{proof}
\end{document}